  \renewcommand*\env@matrix[1][*\c@MaxMatrixCols c]{%
    \hskip -\arraycolsep
    \let\@ifnextchar\new@ifnextchar
  \array{#1}}
\newcommand{\deff}{\mbox{$\stackrel{\rm def}{=}$}}
\newcommand{\name}{Multi-RoundSecAgg} 
\newcommand{\namespace}{\name{ }}
\DeclareMathOperator*{\argmin}{arg\,min}
\DeclareMathAlphabet{\mathbfsl}{OT1}{ppl}{b}{it}
\newtheorem{theorem}{Theorem}
\newtheorem*{theorem*}{Theorem}
\newtheorem*{lemma*}{Lemma}
\newtheorem{lemma}{Lemma}
\newtheorem{remark}{Remark}
\newtheorem{example}{Example}
\newtheorem{assumption}{Assumption}
\title{Securing Secure Aggregation: Mitigating Multi-Round Privacy Leakage in Federated Learning}
\author {
    % Authors
    Jinhyun So\thanks{\noindent Equal contribution 
    %\\J. So and R. E. Ali were with USC, they are now with Samsung Electronics America.
    }\textsuperscript{\rm 1},
    Ramy E. Ali$^*$\textsuperscript{\rm 1},
    Ba\c{s}ak G\"uler\textsuperscript{\rm 2},
    Jiantao Jiao\textsuperscript{\rm 3},
    A. Salman Avestimehr\textsuperscript{\rm 1}
}
\begin{document}

\maketitle

\begin{abstract}
Secure aggregation is a critical component in federated learning (FL), which enables the server to learn the aggregate model of the users without observing their local models. Conventionally, secure aggregation algorithms focus only on ensuring the privacy of individual users in a single training round. We contend that such designs can lead to significant privacy leakages over multiple training rounds, due to partial user participation at each round of FL. In fact, we show that the conventional random user selection strategies in FL lead to leaking users' individual models within number of rounds that is linear in the number of users. To address this challenge, we introduce a secure aggregation framework, Multi-RoundSecAgg, with multi-round privacy guarantees. In particular, we introduce a new metric to quantify the privacy guarantees of FL over multiple training rounds, and develop a structured user selection strategy that guarantees the long-term privacy of each user (over any number of training rounds). Our framework also carefully accounts for the fairness and the average number of participating users at each round. Our experiments on MNIST, CIFAR-10 and CIFAR-100 datasets in the IID and the non-IID settings demonstrate the performance improvement over the baselines in terms of privacy protection and test accuracy. 

\end{abstract}

\section{Introduction}
% \begin{figure}[t!]%
% \centering
% \includegraphics[width=0.45\textwidth]{}
% \vspace{-0.2cm}
% \caption{\footnotesize 
% \textcolor{blue}{
% (Version 2) 
% }}
% \vspace{-0.5cm}
% \label{fig:intro_histo_error_v2}
% \end{figure}

Federated learning (FL) enables collaborative training of learning models over the data collected and stored locally by multiple data-owners. The training in FL is typically coordinated by a central server who maintains a global model that is updated locally by the users. 
The local updates are then aggregated by the server to update the global model. 
Throughout the training, the users never share their data with the server, rather, they only share their local updates. However, as shown recently,  the local models may still reveal substantial information about the local datasets, and the private training data can be reconstructed from the local models through inference or inversion attacks \cite{fredrikson2015model,nasr2019comprehensive,zhu2020deep,geiping2020inverting}.
\begin{figure}
   % \vspace{-0.2 cm}
    \centering
    \includegraphics[scale=0.275]{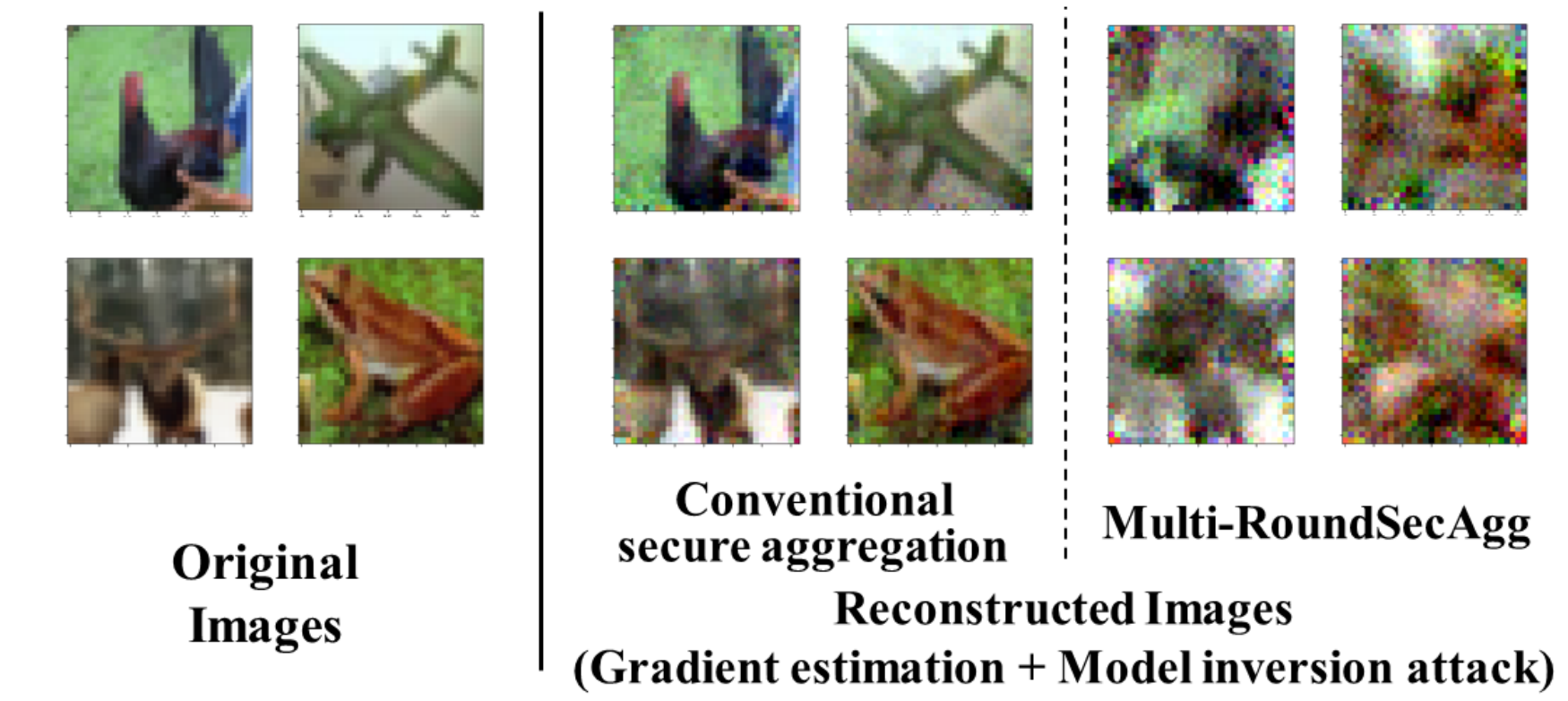}
    \caption{ 
    A qualitative comparison of the reconstructed images in two settings is shown.
    The first setting corresponds to the case that model privacy with random user selection (e.g., \texttt{FedAvg} \cite{mcmahan2017learning}) is protected by conventional secure aggregation schemes as \cite{bonawitz2017practical} at each round.
    In the second setting, our proposed method ensures the long-term privacy of individual models over any number of rounds, and hence model inversion attack cannot work well. 
    This reconstruction process is described in detail in Appendix \ref{app:random-selection}.} 
    
    %\cite[Appendix H]{so2021securing}.

    \label{fig:intro_histo_error}
\end{figure}
\\To prevent such information leakage, \textit{secure aggregation} protocols are proposed (e.g., \cite{bonawitz2017practical,so2021turbo,kadhe2020fastsecagg,zhao2021information,bell2020secure, yang2021lightsecagg,so2021secure}) to protect the privacy of the local models, from the server and the other users, while still allowing the server to learn their aggregate. 
More specifically, the secure aggregation protocols ensure that, at any given round, the server can only learn the aggregate model of the users, and beyond that no further information is revealed about the individual model.

Secure aggregation, however, only ensures the privacy of the users in a \emph{single training round}, and do not consider their privacy over multiple training rounds \cite{bonawitz2017practical, bell2020secure, so2021turbo, so2022lightsecagg}. 
On the other hand, due to partial user selection  \cite{cho2020client,chen2020optimal,cho2020bandit,ribero2020communication}, the server may be able to reconstruct the individual models of some users using the aggregated models from the previous rounds. 
% As a simple illustrative example, consider the setting in Fig.\ref{fig:multi-round}. Although model aggregation in each round is secure (in the sense that the server only learns the aggregate model of the active users), the server can still combine the aggregate models of the two rounds to approximate the individual model update of user $3$.
% \textcolor{blue}{
% In fact, we show that after a sufficient number of rounds, all local models can be recovered with a high accuracy if the server chooses a random subset of the users to participate at every round. As a simple illustrative experiment, we provide an experiment with $40$ users as shown in Fig. \ref{fig:intro_histo_error} in which all local updates can be recovered with small error.
In fact, we show that after a sufficient number of rounds, all local models can be recovered with a high accuracy if the server uniformly chooses a random subset of the users to participate at every round. 
As shown in Fig.\ref{fig:intro_histo_error}, performing model inversion attack \cite{geiping2020inverting} with the recovered local models yields reconstructed images with a similar quality as the original images.
% }
% \begin{figure}[t!]%
% \centering
% \includegraphics[scale=0.55]{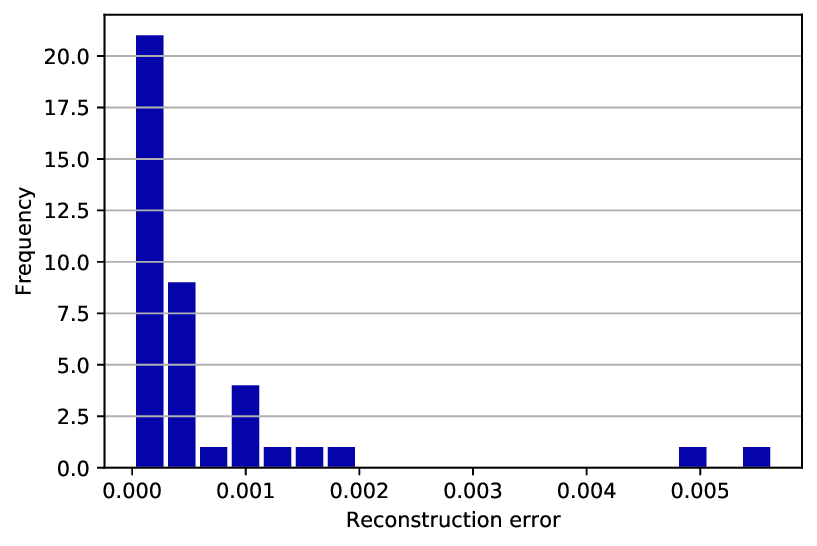}
% \caption{\footnotesize A FL non-IID setting is considered with $40$ users, where $8$ users are selected at random  (if possible) at each round to train the neural network in \cite{mcmahan2016communication} on the MNIST dataset. Given the aggregate models of the users at each round, the server can approximate the individual gradients of all users with a very small error through the least-squares method. The reconstruction error is measured as the $L_2$ distance between the true gradients and reconstructed gradients and the histogram of the reconstruction error is shown. This setting is described in detail in Appendix \ref{app:random-selection}.}
% \label{fig:intro_histo_error}
% \end{figure}

% \begin{figure}[t!]%
% \centering
% \includegraphics[width=0.45\textwidth]{}
% \caption{\footnotesize 
% Version 1
% %  This setting is described in detail in Appendix \ref{app:random-selection}.
% }
% \label{fig:intro_histo_error}
% \end{figure}

% 
% Original paragraph before shortening (Start)
% 
 \textbf{Contributions}. As such motivated, we study the long-term user privacy in FL.  Specifically, our contributions are as follows.
\begin{enumerate}[noitemsep,topsep=0pt,parsep=1pt,partopsep=2pt,leftmargin=*]
%\vspace{-0.1cm}
    \item 
    \color{black}
    We introduce a new metric to capture long-term privacy guarantees for secure aggregation protocols in FL for the first time. 
    \color{black}
    This long-term privacy requires that the server cannot reconstruct any individual model using the aggregated models from any number of training rounds. 
    Using this metric, we show that the conventional random selection schemes can result in leaking the local models after a sufficient number of rounds, even if secure aggregation is employed at each round.
    % Motivated by this, we introduce a new metric to capture long-term privacy guarantees in FL for the first time. This long-term privacy condition requires that the server cannot reconstruct any individual model using the aggregated models from any number of training rounds.
    % \item We contend that the privacy of the individual users can be violated as a result of the partial user participation in FL. Motivated by this, we introduce long-term privacy guarantees for FL for the first time, by defining the multi-round secure aggregation problem. The long-term privacy condition requires that the server cannot reconstruct any individual model using the aggregated models from any number of training rounds.
    %is quantified by a privacy parameter $T$, such the server cannot reconstruct any partial sum of $T$ or fewer individual user models from the aggregated models collected at any number of training rounds. When $T=2$, this condition requires that the server cannot reconstruct any individual model using the aggregated models from any number of  training rounds. 
        %\item 
    % We contend that the privacy of the individual users can be violated as a result of partial user participation in FL. More specifically, we show that the random selection scheme (FedAvg)~\cite{mcmahan2016communication} can reveal the individual user models after sufficient number of training rounds. 
    \item We propose \name, a privacy-preserving structured user selection strategy that ensures the long-term privacy of the individual users over any number of training rounds. This strategy also takes into account the fairness of the selection process and the average number of participating users  at each round. 
    % \item We propose \name, a privacy-preserving structured user selection strategy that ensures the long-term privacy of the individual users over all training rounds. The selection strategy also takes into account the fairness of the selection process and the average number of participating users  at each round. 
    % Creates trade-off implying that achieving large T, C< and this affects the convergence
    \item We demonstrate that \namespace provides a trade-off between the long-term privacy guarantee and the average number of participating users. In particular, as the average number of participating users increases, the long-term privacy guarantee becomes weaker.
    
    \item We provide the convergence analysis of \name, which shows that the long-term privacy guarantee and the average number of participating users control the convergence rate. 
    The convergence rate is maximized when the average number of participating users is maximized 
    (e.g., the random user selection strategy maximizes the average number of participating users at the expense of not providing long-term privacy guarantees). 
    % (this is the case in FedAvg which, however, does not provide any long-term privacy guarantees). 
    % When the average number of participating users is maximized as in FedAvg while not providing any long-term privacy guarantee, then the  convergence rate is maximized. 
    As we require stronger long-term privacy, the average number of participating users decreases and a larger number of training rounds is required to achieve the same level of accuracy as  random selection. 
    % For stronger long-term privacy guarantees, the average number of participating users has to decrease and a larger number of training rounds is required to achieve the same level of accuracy as \textit{FedAvg}.
% Comment-B: This sentence sounds a bit weak, how about we replace this sentence with the blue sentence?    
    % As we require stronger long-term privacy guarantees, the average number of participating users decreases and more number of training rounds is required to achieve the same level of accuracy as FedAvg. 
    % More concrete ...
    \item Finally, our experiments in both IID and non-IID settings on MNIST, CIFAR-$10$ and CIFAR-$100$  demonstrate that \namespace achieves almost the same test accuracy compared to the random selection scheme  while providing better long-term privacy guarantees.
    % \item Finally, we conduct extensive experiments in both IID and non-IID settings on MNIST and CIFAR-$10$ datasets and demonstrate that \namespace achieves almost the same test accuracy compared to the random selection scheme  while providing better long-term privacy guarantees.
    % \item Finally, we conduct extensive experiments in both IID and non-IID settings on the MNIST and CIFAR-$10$ datasets and demonstrate that \namespace achieves almost the same test accuracy compared with the random selection scheme (\textit{FedAvg})~\cite{mcmahan2016communication} while providing better long-term privacy guarantee.
%\vspace{-0.2cm}
\end{enumerate}

% \begin{figure}
%     \centering
%     \includegraphics[scale=0.3]{}
%     \vspace{-40 pt}
%     \caption{A multi-round secure aggregation example, where the server can reconstruct the model of user $3$. At round $t$, the set of participating users is given by $\mathcal S^{(t)}=\{1, 2, 3\}$.  At round $t+1$, the set of participating users is given by $\mathcal S^{(t+1)}=\{1, 2 \}$. 
% If the local models do not change significantly over the two rounds (e.g., the models start to converge), the server can reconstruct the model of user $3$ from the aggregated models of the two rounds.
%     % {\color{red}If the local models are fixed over the two rounds (e.g., as the models start to converge), the server can reconstruct the model of user $3$ from the aggregated models of the two rounds.}
%     }
%     \label{fig:multi-round}
% \vspace{-0.4cm}
% \end{figure}

% \textbf{Organization.} The rest of this paper is organized as follows. Section \ref{sec:RelatedWorks} discusses previous works on secure aggregation and partial user selection. In Section \ref{sec:SystemModel}, we provide an overview of FL and introduce the multi-round secure aggregation problem. In Section \ref{sec:Proposed}, we develop a privacy-preserving multi-round user selection strategy based. Section \ref{sec:TheoreticalResults} presents our theoretical results. In Section \ref{sec:Experiments}, we evaluate our proposed strategy empirically. Finally, Section \ref{sec:Conclusion} concludes the paper and discusses several interesting future research directions. 

\section{Related Work}
\label{sec:RelatedWorks}
The underlying principle of the secure aggregation protocol in \cite{bonawitz2017practical} is that each pair of users exchange a pairwise secret key which they can use to mask their local models before sharing them with the server. 
% The pairwise masks have a useful property known as additive secret sharing, that they cancel out when the server aggregates the masked models, allowing the server to learn the aggregate of the local models. 
The pairwise masks cancel out when the server aggregates the masked models, allowing the server to aggregate the local models. 
% The pairwise masks also ensure that the individual local models are kept private during the process, i.e., no further information is revealed beyond the aggregate of the local models. 
These masks also ensure that the local models are kept private, i.e., no further information is revealed beyond the aggregate of the local models. 
% This protocol incurs a significant communication cost due to the need to exchange and reconstruct (in the event of user dropouts) the pairwise keys.
This protocol, however, incurs a significant communication cost due to exchanging and reconstructing the pairwise keys.

Several works also developed more efficient protocols \cite{so2021turbo,kadhe2020fastsecagg,bell2020secure,tang2021fedgp, choi2020communication, elkordy2020secure, yang2021lightsecagg}, which are complementary to and can be combined with our work. Another line of work focused on designing partial user selection strategies to overcome the communication bottleneck in FL while speeding up the convergence \cite{cho2020client,chen2020optimal,cho2020bandit,ribero2020communication}. 
% In \cite{cho2020client}, a power-of-choice biased selection strategy has been proposed in which the server first selects a set of users at random and then selects the subset of users with the  highest local loss from this set to participate in the current training round. In \cite{chen2020optimal}, a user selection strategy has been proposed with the goal of minimizing the number of training rounds. Towards this objective, a user selection strategy has been developed which minimizes the variance of the stochastic gradient of the partial participation strategy. In \cite{cho2020bandit}, a communication-efficient bandit-based user selection strategy has been proposed with the goal of speeding up the convergence while taking into account the fairness among the  users. In \cite{ribero2020communication}, a threshold-based distributed strategy has been proposed in which the users decide whether or not to send their local models based on how informative their updates are, by sending their local losses to the server after which the server sends a threshold to all users. The users then decide whether to participate in the current training round based on the announced threshold. 

Previous works, however, do not consider mitigating the potential privacy leakage as a result of partial user participation and the server observing the aggregated models across multiple training rounds. While \cite{pejo2020quality} pointed out to this problem, mitigating this leakage has not been considered and our work is the first \textcolor{black}{secure aggregation protocol} to address this challenge. Specifically, we identify a metric to quantify the long-term privacy of secure aggregation, and develop a privacy-preserving user selection strategy with provable long-term privacy.

\textcolor{black}{
Differential privacy (DP) techniques can also protect the privacy over the multiple FL rounds \cite{dwork2014algorithmic, abadi2016deep, wei2020federated, bonawitz2021federated, kairouz2021advances}, but this comes at the expense
of the model performance. It is worth noting that secure aggregation and DP are complementary, i.e., the benefits of DP can be applied to the secure aggregation protocols by adding noise to the local models \cite{bonawitz2021federated}.
In this paper, however, our objective is to understand the secure aggregation problem without DP. 
}

\section{System Model}
\label{sec:SystemModel}
We first describe the basic FL model in Section \ref{subsec:model}. Next, we introduce the multi-round secure aggregation problem for FL and define the key metrics to evaluate the performance of a multi-round secure aggregation protocol in Section \ref{subsec:multi-round_SA}.

\subsection{Basic Federated Learning Model}
\label{subsec:model}
We consider a cross-device FL setup consisting of a server and $N$ users. User $i\in[N]$ has a local dataset $\mathcal{D}_i$ consisting of $m_i=|\mathcal{D}_i|$ data samples. The users are connected to each other through the server, i.e., all communications between the users goes through the server \cite{mcmahan2016communication, bonawitz2017practical, kairouz2019advances}. The goal is to collaboratively learn a global model $\mathbfsl{x}$ with dimension $d$, using the local datasets that are generated, stored, and processed locally by the users. The training task can be represented by minimizing a global loss function,
\begin{equation}\label{eq:objective_fnc} 
    \min_{\mathbfsl{x}} L(\mathbfsl{x}) \text{ s.t.  } 
    L(\mathbfsl{x}) = \frac{1}{\sum_{i=1}^N w_i} \sum_{i=1}^N w_i L_i (\mathbfsl{x}), 
    % \min_{\mathbfsl{x}} L(\mathbfsl{x}) \text{ s.t.  } 
    % L(\mathbfsl{x}) = \sum_{i=1}^N w_i L_i (\mathbfsl{x}), 
\end{equation} 
where $L_i$ is the loss function of user $i$ and $w_i \geq 0$ is a weight parameter assigned to user $i$ to specify the relative impact of that user.
% , where $\sum_{i}w_i = 1$. 
% A common choice for the weight parameters is $w_i=\frac{m_i}{m}$, where $m=\sum_{i=1}^N m_i$ \cite{kairouz2019advances}.
A common choice for the weight parameters is $w_i=m_i$ \cite{kairouz2019advances}.
We define the optimal model parameters $\mathbfsl{x}^*$ and $\mathbfsl{x}_i^*$ as $\mathbfsl{x}^*=\argmin_{\mathbfsl{x}\in \mathbb{R}^d}L(\mathbfsl{x})$ and $\mathbfsl{x}_i^*=\argmin_{\mathbfsl{x}\in \mathbb{R}^d}L_i(\mathbfsl{x})$.

\noindent \textbf{Federated Averaging with Partial User Participation.}
To solve \eqref{eq:objective_fnc}, the most common algorithm is the \textit{FedAvg} algorithm  \cite{mcmahan2016communication}. \textit{FedAvg} is an iterative algorithm, where the model training is done by repeatedly iterating over individual local updates. 
At the beginning of training round $t$, the server sends the current global model $\mathbfsl{x}^{(t)}$ to the users. Each round consists of two phases, local training and aggregation.
In the local training phase, user $i\in[N]$ updates the global model by carrying out $E$ ($\geq1$) local stochastic gradient descent (SGD) steps and sends the updated local model $\mathbfsl{x}^{(t)}_i$ to the server. 
% One of key features of cross-device federated learning is user dropouts.
One of key features of cross-device FL is partial device participation. 
Due to various reasons such as unreliable wireless connectivity, at any given round, only a fraction of the users are available to participate in the protocol. We refer to such users as \emph{available} users throughout the paper. 
In the aggregation phase, the server selects $K\leq N$ users among the available users if this is possible and aggregates their \textcolor{black}{local updates}. The server updates the global model as follows
% In the aggregation phase, the server selects $K\leq N$ users among the users that are available in that training round and aggregates the local models of the selected users. After receiving the local updates, the server updates the global model as follows
\begin{align}
\label{eq:aggregation}
    \mathbfsl{x}^{(t+1)} 
    = \sum_{i\in \mathcal{S}^{(t)}} w'_i\mathbfsl{x}^{(t)}_i
    % = \frac{1}{\sum_{i\in \mathcal{S}^{(t)}} w_i} \sum_{i\in \mathcal{S}^{(t)}} w_i\mathbfsl{x}^{(t)}_i
    % = \sum_{i\in \mathcal{S}^{(t)}} w_i \mathbfsl{x}^{(t)}_i 
    % = \mathbfsl{p}^{(t)\top}\mathbf{X}^{(t)} ,
    = {\mathbf{X}^{(t)}}^\top \mathbfsl{p}^{(t)},
    % = \left(\mathbfsl{p}^{(t)}\right)^\top \mathbf{X}^{(t)} ,
\end{align}
where $\mathcal S^{(t)}$ is the set of participating users at round $t$, $\mathbfsl{p}^{(t)}\in \{0,1\}^N$ is the corresponding characteristic vector and $w'_i=\frac{w_i}{\sum_{i\in \mathcal{S}^{(t)}} w_i}$. 
That is, $\mathbfsl{p}^{(t)}$ denotes a participation vector at round $t$ whose $i$-th entry is $0$ when user $i$ is not selected and $1$ otherwise. 
% That is, $\mathbfsl{p}^{(t)}$ denotes a participation vector at round $t$ whose $i$-th entry is $0$ when user $i$ is not selected and $1$ when user $i$ is selected.
$\mathbf{X}^{(t)}$ denotes the concatenation of the weighted local models at round $t$, i.e., $\mathbf{X}^{(t)} = \big[ w'_1\mathbfsl{x}^{(t)}_1,\ldots,w'_N\mathbfsl{x}^{(t)}_N\big]^\top \in \mathbb{R}^{N \times d}$.
Finally, the server broadcasts the updated global model $\mathbfsl{x}^{(t+1)}$ to the users for the next round. \\
% \color{blue} 
% \noindent Similar to the prior works on secure aggregation as \cite{bonawitz2017practical}, we consider the following threat model. \\
\color{black}
\noindent \textbf{Threat Model.}  Similar to the prior works on secure aggregation as \cite{bonawitz2017practical, kadhe2020fastsecagg, so2021turbo}, we consider the honest-but-curious model. All participants follow the protocol honestly in this model, but try to learn as much as possible about the users. 
At each round, the privacy of individual model $\mathbfsl{x}^{(t)}_i$ in \eqref{eq:aggregation} is protected by secure aggregation such that the server only learns the aggregated model $\sum_{i\in \mathcal{S}^{(t)}} w'_i\mathbfsl{x}^{(t)}_i$.
\color{black}
\subsection{Multi-round Secure Aggregation}
\label{subsec:multi-round_SA}
While secure aggregation protocols have provable privacy guarantees at any single round, in the sense that no information is leaked beyond the aggregate model at each round, the privacy guarantees do not extend to attacks \emph{that span multiple training rounds}. 
Specifically, by using the aggregate models and participation information across multiple rounds, an individual model may be reconstructed. For instance, consider the following user participation strategy across three training rounds,  $\mathbfsl{p}^{(1)} = [1, 1, 0]^\top$, $\mathbfsl{p}^{(2)} = [0, 1, 1]^\top$, and $\mathbfsl{p}^{(3)} = [1, 0, 1]^\top$. 
% Assume a scenario where the local updates do not change significantly over time.
Assume a scenario where the local updates do not change significantly over time (e.g., models start to converge), i.e., $\mathbfsl{x}_i=\mathbfsl{x}^{(t)}_i$ for all $i \in [3]$ and $t \in [3]$. Then the server can single out 
%individual model, e.g., $\mathbfsl{x}_1 =(\mathbfsl{x}^{(1)}+\mathbfsl{x}^{(3)}-\mathbfsl{x}^{(2)})/2$.
%Similarly, the server can single out
all individual models, even if a secure aggregation protocol is employed at each  round.

In this paper, we study secure aggregation protocols with long-term privacy guarantees (which we term \emph{multi-round secure aggregation}) for the cross-device FL setup. 
We assume that user $i\in[N]$ drops from the protocol at each round with probability $p_i$. 
% We assume that at any given round, user $i\in[N]$ is not available (to participate in the protocol)  with probability $p_i$. 
$\mathcal{U}^{(t)}$ denotes the index set of available users at round $t$ and $\mathbfsl{u}^{(t)}\in\{0,1\}^N$ is a vector indicating the available users such that $\{\mathbfsl{u}^{(t)}\}_{j} = \mathds{1}\{ j \in \mathcal{U}^{(t)} \}$, where $\{\mathbfsl{u}\}_j$ is $j$-th entry of $\mathbfsl{u}$ and $\mathds{1}\{ \cdot \}$ is the indicator function. The server selects $K$ users from $\mathcal{U}^{(t)}$, if  $|\mathcal{U}^{(t)}|\geq K$, based on the history of selected users in previous rounds. If $|\mathcal{U}^{(t)}|<K$, the server skips this round. The local models of the selected users are then aggregated via a secure aggregation protocol (i.e., by communicating masked models),
%rather than their true values to protect user privacy)
 at the end of which the server learns the aggregate of the local models of the selected users. Our goal is to design a user selection algorithm  $\mathcal{A}^{(t)}:\{0,1\}^{t\times N}\times \{0,1\}^{N} \rightarrow \{0,1\}^N$, 
\begin{equation}
\label{eq:def_selection}
    \mathcal{A}^{(t)}\big( \mathbf{P}^{(t)}, \mathbfsl{u}^{(t)} \big) = \mathbfsl{p}^{(t)} 
    % \mathcal{A}^{(t)}\big( \mathbf{P}^{(t-1)}, \mathbfsl{u}^{(t)} \big) = \mathbfsl{p}^{(t)} 
    \text{ such that } \| \mathbfsl{p}^{(t)} \|_0 \in \{0, K\},
\end{equation}
to prevent the potential information leakage over multiple rounds, where $\mathbfsl{p}^{(t)}\in \{0,1\}^N$ is the participation vector defined in \eqref{eq:aggregation}, $\| \mathbfsl x \|_0$ denotes the $L_0$-``norm'' of $\mathbfsl{x}$ and $K$ denotes the number of selected users. 
We note that $\mathcal{A}^{(t)}$ can be a random function.
$\mathbf{P}^{(t)}$ is a matrix representing the user participation information up to round $t$, and is termed the \emph{participation matrix}, given by
\begin{equation} \label{eq:def_P}
    \mathbf{P}^{(t)} = \big[ \mathbfsl{p}^{(0)}, \mathbfsl{p}^{(1)}, \ldots, \mathbfsl{p}^{(t-1)}\big]^{\top} \in \{0,1\}^{t\times N}.
\end{equation}
\noindent \textbf{Key Metrics.}
A multi-round secure aggregation protocol can be represented by $\mathcal{A}=\{ \mathcal{A}^{(t)}\}_{t\in[J]}$, where $\mathcal{A}^{(t)}$ is the user selection algorithm at round $t$ defined in \eqref{eq:def_selection} and $J$ is the total number of rounds.  The  inputs of  $\mathcal{A}^{(t)}$ are a random vector $\mathbfsl{u}^{(t)}$, which indicates the available users at round $t$, and the participation matrix $\mathbf{P}^{(t)}$ defined in \eqref{eq:def_P} which can be a random matrix.
Given the participation matrix $\mathbf{P}^{(J)}$, we evaluate the performance of  the corresponding multi-round secure aggregation protocol through the following metrics.

\begin{enumerate}
    \item \textbf{Multi-round Privacy Guarantee.}  Secure aggregation protocols ensure that the server can only learn the sum of the local models of some users in each single round, but they do not consider what the server can learn over the long run. Our multi-round privacy definition extends the guarantees of the secure aggregation protocols from one round to all rounds by requiring that the server can only learn a sum of the local models even if the server exploits the aggregate models of all rounds. That is, our multi-round privacy guarantee is a natural extension of the privacy guarantee provided by the secure aggregation protocols considering a single training round.

    Specifically, a multi-round privacy guarantee $T$ requires that any non-zero partial sum of the local models that the server can reconstruct, through any linear combination $\mathbf X^\top {\mathbf P^{(J)}}^\top  \mathbfsl z$, where $\mathbfsl z \in \mathbb R^{J} \setminus \{\mathbf 0\}$, must be of the form\footnote{We assume that $w_i=\frac{1}{N}, \forall i \in [N]$ in this paper.}
    %but the definition can be modified if this is not the case. }
    % A multi-round privacy of $T$ requires  that  any non-zero partial sum of local models the server can get through any linear combination $\mathbf X^\top {\mathbf P^{(J)}}^\top  \mathbfsl z$, where $\mathbfsl z \in \mathbb R^J$ and $J$ is the total number of rounds, must be of the form\footnote{We assume that $w_i=\frac{1}{N}, \forall i \in [N]$ in this paper, but the definition can be modified if this is not the case. }
    \begin{align} 
    \label{eq:def_privacy}
      & \mathbf X^\top {\mathbf P^{(J)}}^\top  \mathbfsl z =\sum_{i \in [n]}a_i \sum_{j \in \mathcal S_i} \mathbfsl x_j \notag \\&=  a_1 \sum_{j \in \mathcal S_1} \mathbfsl x_j+a_2 \sum_{j \in \mathcal S_2} \mathbfsl x_j+ \cdots+a_n \sum_{j \in \mathcal S_n} \mathbfsl x_j,
        % \mathbfsl z^\top \mathbf P^{(J)} \mathbf X=\sum_{i \in [n]}a_i \sum_{j \in \mathcal S_i} \mathbfsl x_j=  a_1 \sum_{j \in \mathcal S_1} \mathbfsl x_j+a_2 \sum_{j \in \mathcal S_2} \mathbfsl x_j+ \cdots+a_n \sum_{j \in \mathcal S_n} \mathbfsl x_j,
    \end{align}
where $a_i\neq 0, \forall i \in [n],n \in \mathbb Z^+, |\mathcal S_i| \geq T$ and $\mathcal S_i \cap \mathcal S_j = \emptyset$ when $i \neq j$. Here all the sets $\mathcal{S}_i$, the number of sets $n$, and each $a_i$ could all depend on $\mathbfsl{z}$. In equation (\ref{eq:def_privacy}), we consider the worst-case scenario, where the local models do not change over the rounds. That is, $\mathbf{X}^{(t)}=\mathbf{X}, \ \forall t\in [J]$. Intuitively, this guarantee ensures that the best that the server can do is to reconstruct a partial sum of $T$ local models which corresponds to the case where $n=1$. 
% {\color{red} When $T\geq2$, this condition implies that the server cannot get any individual user model from the aggregated models of all training rounds.} 
When $T\geq2$, this condition implies that the server cannot get any user model from the aggregate models of all training rounds (the best it can obtain is the sum of two local models).
% Comment-B: I added a sentence a clarify this. 
% When $T\geq2$, this condition implies that the server cannot get any individual user model from the aggregated models of all training rounds.
% {\color{red}\begin{remark}\label{remark:weaker_T}\normalfont (Weaker Privacy Definitions). It it worth noting that, a weaker privacy guarantee would require that $\| {\mathbf{P}^{(J)}}^\top \mathbfsl{z}\|_0 \geq T$ for any $\mathbfsl z \neq \mathbf 0$. When $T=2$, this definition requires that the server cannot get any individual model. If the server can get partial sums of the individual models of the form $a \mathbfsl x_i+b \mathbfsl x_j$, for any $a, b \in \mathbb R \setminus \{0\}$, this does not violate the definition. When $a \gg b$, however, this is almost the same as recovering $\mathbfsl x_i$.
% \end{remark}
% }
\begin{remark}\label{remark:weaker_T} \normalfont (Weaker Privacy Notion). It is worth noting that, a weaker privacy notion would require that $\| {\mathbf{P}^{(J)}}^\top \mathbfsl{z}\|_0 \geq T$ when ${\mathbf{P}^{(J)}}^\top \mathbfsl{z} \neq \mathbf 0$. When $T=2$, this definition requires that the server cannot reconstruct any individual model (the best it can do is to obtain a linear combination of two local models). This notion, however, allows constructions in the form of $a \mathbfsl x_i+b \mathbfsl x_j$ for any $a \neq 0, b \neq 0$.
%(this does not violate the definition).
When $a \gg b$, however, this is almost the same as  recovering $\mathbfsl x_i$ perfectly, hence this privacy criterion is weaker than that of \eqref{eq:def_privacy}. We refer to \cite{august2022} for a follow-up work that considers this weaker notion.
\end{remark}
\begin{remark}\label{remark:random-selection-privacy} \normalfont(Multi-round Privacy of Random Selection). In Section \ref{sec:Experiments}, we empirically show that a random selection strategy in which $K$ available users are selected uniformly at random at each round does not ensure multi-round privacy even with respect to the weaker definition of Remark \ref{remark:weaker_T}. Specifically, the  local models can be reconstructed within a number of rounds that is linear in $N$. We also show theoretically in Appendix \ref{app:random-selection}
%\cite[Appendix H]{so2021securing}
that when $\min(N-K, K) \geq cN$, where $c>0$ is a constant, then the probability that the server can reconstruct all local models after $N$ rounds is at least $1-2e^{-c'N}$ for a constant $c'$ that depends on $c$. Finally, we show that a random selection scheme in which the users are selected in an i.i.d fashion according to Bern($\frac{K}{N(1-p)}$) reveals all local models after $N$ rounds with probability that converges to $1$ exponentially fast. 
\end{remark}
\begin{remark}\label{remark:model-change-time}\normalfont
(Worst-Case Assumption). In (\ref{eq:def_privacy}), we considered the worst-case assumption where the models do not change over time. When the models change over rounds, the multi-round privacy guarantee becomes even stronger as the number of unknowns increases. 
In Fig. \ref{fig:intro_histo_error} and Appendix \ref{app:random-selection}, %\cite[Appendix H]{so2021securing}
we empirically show that the  conventional secure aggregation schemes leak extensive information of training data even in the realistic settings where the models change over the rounds.
\end{remark}

% Comment-B: This  may sound confusing to some readers. I suggest replacing it with the blue one. 
%     The multi-round privacy guarantee quantifies the smallest non-zero number of users in any partial sum that the server can reconstruct from the aggregated models. Formally, the multi-round privacy guarantee $T$ is the largest integer such that
%     \begin{align}
%     \label{eq:def_privacy}
%         &\text{Pr}\{ 0<\text{Number of user models in the partial sum that the server can reconstruct} \leq T\} \notag\\
%         &\quad=\text{Pr}\Big\{0<\min_{\mathbfsl{z}\in \mathbb{R}^J %\setminus \{\mathbf{0}\} 
%         } 
%         \| \mathbfsl{z}^\top \mathbf{P}^{(J)}\|_0 \leq T \Big\}  =0, 
%     \end{align}
% where $J$ is the total number of rounds and \eqref{eq:def_privacy} holds from the following fact. We consider the worst case scenario, where the local models do not change over the rounds, i.e., $\mathbf{X}^{(t)}=\mathbf{X}$ for all $t\in [J]$, hence the server can reconstruct $\mathbfsl{y}^{\top}=\mathbfsl{z}^{\top}\mathbf{P}^{(J)}$ for any vector $\mathbfsl{y}\in\text{Span}\big(\{\mathbfsl{p}^{(t)}\}_{t\in\{0,\ldots,J-1\}}\big)$. The number of non-zero values in $\mathbfsl{y}$ denotes the number of terms in the partial sum that the server can reconstruct. This partial sum is given by $\mathbfsl y^\top \mathbf X$. Therefore, in \eqref{eq:def_privacy}, $\min_{\mathbfsl{z}\in \mathbb{R}^J  }\| \mathbfsl{z}^\top \mathbf{P}^{(J)}\|_0$ denotes the minimum number of individual local models in any partial sum that the server can reconstruct.
    \item \textbf{Aggregation Fairness Gap.}
    The average aggregation fairness gap quantifies the largest gap between any two users in terms of the expected relative number of rounds each user has participated in training. Formally, the average aggregation fairness gap is defined as follows

    \begin{align}
    \label{eq:def_fairness}
        F &= \max_{i \in [N]} \limsup\limits_{J\rightarrow \infty} \frac{1}{J}\mathbb{E} \Big[ \sum_{t=0}^{J-1} \mathds{1}
        \big\{ \{\mathbfsl{p}^{(t)}\}_i = 1 \big\} \Big]   - \notag \\& \min_{i \in [N]} \liminf\limits_{J\rightarrow \infty} \frac{1}{J}\mathbb{E} \Big[ \sum_{t=0}^{J-1} \mathds{1}
        \big\{ \{\mathbfsl{p}^{(t)}\}_i = 1 \big\} \Big],
    \end{align}
    % where $F_i$ is expected number of rounds user $i$ has participated in the training protocol, divided by the total number of rounds, which can be expressed as follows  
    % \begin{equation}
    %     F_i = \limsup\limits_{J\rightarrow \infty} \frac{1}{J}\mathbb{E} \Big[ \sum_{t=0}^{J-1} \mathds{1}
    %     \big\{ \{\mathbfsl{p}^{(t)}\}_i = 1 \big\} \Big],
    % \end{equation}
    where  $\{\mathbfsl{p}^{(t)}\}_i$ is $i$-th entry of  the vector $\mathbfsl{p}^{(t)}$ and the expectation is over the randomness of the user selection algorithm $\mathcal{A}$ and the user availability. The main intuition behind this definition is that when $F=0$, all users participate on average on the same number of rounds. This is important to take the different users into consideration equally and our experiments show that the accuracy of the schemes with small $F$ are much higher than the schemes with high $F$.

    % \begin{remark}\normalfont When the dropout probabilities are not equal for all users, the users with high dropouts probabilities may not participate frequently due to their unavailability and not due to the user selection algorithm. In this case, we alternatively define the aggregation fairness gap by normalizing the expected relative number of rounds each user has participated in by the probability of this user being available. Hence, we define the aggregation fairness gap in this case as follows 
    %     \begin{equation*}
    % \label{eq:def_fairness-heterogeneous}
    % \small{
    %     F = \max_{i \in [N]} \limsup\limits_{J\rightarrow \infty} \frac{1}{(1-p_i) J}\mathbb{E} \Big[ \sum_{t=0}^{J-1} \mathds{1}
    %     \big\{ \{\mathbfsl{p}^{(t)}\}_i = 1 \big\} \Big] - \min_{i \in [N]} \liminf\limits_{J\rightarrow \infty} \frac{1}{(1-p_i) J}\mathbb{E} \Big[ \sum_{t=0}^{J-1} \mathds{1}
    %     \big\{ \{\mathbfsl{p}^{(t)}\}_i = 1 \big\} \Big]}.
    % \end{equation*}
    % \end{remark}
    \item \textbf{Average Aggregation Cardinality.} The aggregation cardinality quantifies the expected number of models to be aggregated per round. Formally, it is defined as
    \begin{equation}\label{eq:def_cardinality} 
        C = \liminf\limits_{J\rightarrow \infty} \mathbb{E}\big[\sum_{t=0}^{J-1} \| \mathbfsl{p}^{(t)}\|_0 \big]/J,
    \end{equation}
    where the expectation is over the randomness in $\mathcal{A}$ and the user availability.  Intuitively, less number of rounds are needed to converge as more users participate in the training. In fact,  as we show in Section \ref{subsec:convergence}, $C$ directly controls the convergence rate. 
\end{enumerate}

\subsection{Baseline Schemes}
\label{subsec:baselines}
In this subsection, we introduce three baseline schemes for multi-round secure aggregation. \\
\noindent \textbf{Random Selection.} In this scheme, at each round, the server selects $K$ users at random from the set of available users if this is possible.  \\
\noindent \textbf{Random Weighted Selection.} This scheme is a modified version of random selection to reduce $F$ when the dropout probabilities of the users are not equal. Specifically, $K$ users are selected at random from the available users with the minimum frequency of participation in the previous rounds. 
% If less than $K$ users are available, the server skips this round.
\\
\noindent \textbf{User Partitioning (Grouping).} 
In this scheme, the users are partitioned into $G=N/K$ equal-sized groups denoted as $\mathcal G_1, \cdots, \mathcal G_G$. At each round, the server selects one of the groups if none of the users in this group has dropped out. 
If multiple groups are available, to reduce $F$, the server selects a group including a user with the minimum frequency of participation in previous rounds. If no group is available, the server skips this round.
\section{Proposed Scheme: \name}
\label{sec:Proposed}
In this section, we present \name, which has two components as follows.  
\begin{itemize}[leftmargin=*]
    % Original version
    % \item The first component designs a family of sets of users that satisfy the multi-round privacy requirement. The inputs of the first component are the number of users $N$, the number of users desired to be selected at each round $K$ and the desired multi-round privacy guarantee $T$.
    % % The output is a family of sets of users that satisfy the multi-round privacy guarantee $T$. 
    % % We term this family as a \emph{privacy-preserving family}. 
    % The output is a family of sets of users that satisfy the multi-round privacy guarantee $T$, termed as a \emph{privacy-preserving family}. 
    % This family is represented by a matrix $\mathbf B$, where each row corresponds to a set of users that can be selected if all the users in the corresponding set are available. 
    % % This family of sets is represented by a matrix $\mathbf B$, where each row corresponds to a set of users that can be selected if all users of the sets are available.
    
    \item The first component designs a family of sets of users that satisfy the multi-round privacy requirement. 
    The inputs of the first component are the number of users ($N$), the number of selected \textcolor{black}{users} at each round ($K$), and the desired multi-round privacy guarantee ($T$). 
    The output is a family of sets of $K$ users  satisfying the multi-round privacy guarantee $T$, termed as a \emph{privacy-preserving family}. This family is represented by a matrix $\mathbf B$, where the rows are the characteristic vectors of these user sets. 
    % This family of sets is represented by a matrix $\mathbf B$, where each row corresponds to a set of users that can be selected if all users of the sets are available.
    
    % Original version
    % \item The second component selects a set from this designed family to satisfy the fairness guarantee. 
    % The inputs to the second component are the privacy-preserving family $\mathbf B$, the set of available users at round $t$, $\mathcal U^{(t)}$, and the frequency of participation vector $\mathbfsl f^{(t-1)}=(f^{(t-1)}_1, \cdots, f^{(t-1)}_N)^\top$, where  $f^{(t-1)}_i$ denotes the number of rounds in which user $i$  participated before round $t$. 
    % The output is a row of $\mathbf B$ denoted by $\mathbfsl p(t)$ which determines the set of users that will participate at round $t$. 
    % 
    \item The second component selects a set from this  family to satisfy the fairness guarantee. 
    The inputs to this component are the privacy-preserving family represented by the matrix $\mathbf B$, the set of available users at round $t$, $\mathcal U^{(t)}$, and the frequency of participation of each user.
    The output is the set of users that will participate at round $t$. 
\end{itemize}
We now describe these two components in detail.
%of \namespace in detail. 
% In Section \ref{subsec:design}, we introduce a family of sets of users that satisfy the multi-round privacy guarantee from \eqref{eq:def_privacy}.  
% Next, in Section \ref{subsec:user selection}, we develop a selection strategy through which, at any given round,  the server can choose a set from this family while taking into account the set of available users and the aggregation fairness gap requirement. 

\noindent{\bf Component 1 (Batch Partitioning (BP) of the users to guarantee multi-round privacy).}
% \textbf{Component 1 (Batch Partitioning (BP) of the users to guarantee multi-round privacy)}.
%\label{subsec:design}
The first component designs a family of $R_{\textrm{BP}}$ sets, where $R_{\textrm{BP}}$ is the size of the set, satisfying the multi-round privacy requirement $T$. We denote the $R_{\textrm{BP}} \times N$ binary matrix corresponding to these sets by $\mathbf B=[\mathbfsl b_1, \cdots, \mathbfsl b_{R_{\textrm{BP}} }]^\top$, where $\| \mathbfsl b_i\|_0=K, \forall i\in[R_{\textrm{BP}}]$.
% The first component of \namespace designs a family of $R_{\textrm{BP}}$ sets satisfying the multi-round privacy requirement $T$. We denote the $R_{\textrm{BP}} \times N$ binary matrix corresponding to these sets by $\mathbf B=[\mathbfsl b_1, \cdots, \mathbfsl b_R]^\top$. 
That is, the rows of $\mathbf B$ are the characteristic vectors of those sets. The main idea of our  scheme is to restrict certain sets of users of size $T$, denoted as batches, to either participate together or not participate at all. This guarantees a multi-round privacy $T$ as we show in Section \ref{sec:TheoreticalResults}. 
% The main idea of our proposed scheme is to restrict certain sets of users of size $T$, denoted as batches, to always participate together or not participate at all. This guarantees a multi-round privacy $T$ as we show in Section \ref{sec:TheoreticalResults}. 
% This guarantees that the smallest non-zero number of terms in the partial sum that the server can get is at least $T$ as we show in Section \ref{sec:TheoreticalResults}. 

To construct a family of sets with this property, the users are first partitioned into $N/T$ batches.
%with $T$ users in each batch. 
At any given round, either all or none of the users of a particular batch participate in training. The server can choose $K/T$ batches to participate in training, provided that all users in any given selected batch are available. 
Since there are $\binom{N/T}{K/T}$ possible sets with this property, then the size of this privacy-preserving family of sets  is given by $R_{\textrm{BP}} \deff \binom{N/T}{K/T}$\footnote{We assume for simplicity that $N/T$ and $K/T$ are integers.}.
% Since there are $\binom{N/T}{K/T}$ possible sets with this property, then the codebook size is given by\footnote{We assume for simplicity that $N/T$ and $K/T$ are integers. }
% \begin{equation} 
% \label{eqn:codebook_size}
%     R_{\textrm{BP}} \deff \binom{N/T}{K/T}.
% \end{equation}
% 
% \begin{algorithm}
% 	\caption{\namespace Codebook Design} \label{alg:batch-partitioning}
% 	 \textbf{Input:} Number of users $N$, row weight $K$ and the desired multi-round privacy guarantee $T$. \\
%  \textbf{Output:} An $R_{\textrm{BP}} \times N$ codebook $\mathbf B$, where $R_{\textrm{BP}}=\binom{N/T}{K/T}$,  with multi-round privacy $T$. \\
%  \textbf{Initialization: $\mathbf B=[ \ ]$.}
% 	\begin{algorithmic}[1]
% 		\For {$i=1,2,\ldots, R_{\textrm{BP}}$}
% 		  \State $\mathbfsl b_i=...$ 
% 		  \State $\mathbf B=[\mathbf B^\top,\mathbfsl b_i^\top]^\top.$
% 		\EndFor
% 	\end{algorithmic} 
% \end{algorithm}
% 
\begin{figure}
    \centering
    \includegraphics[scale=0.2]{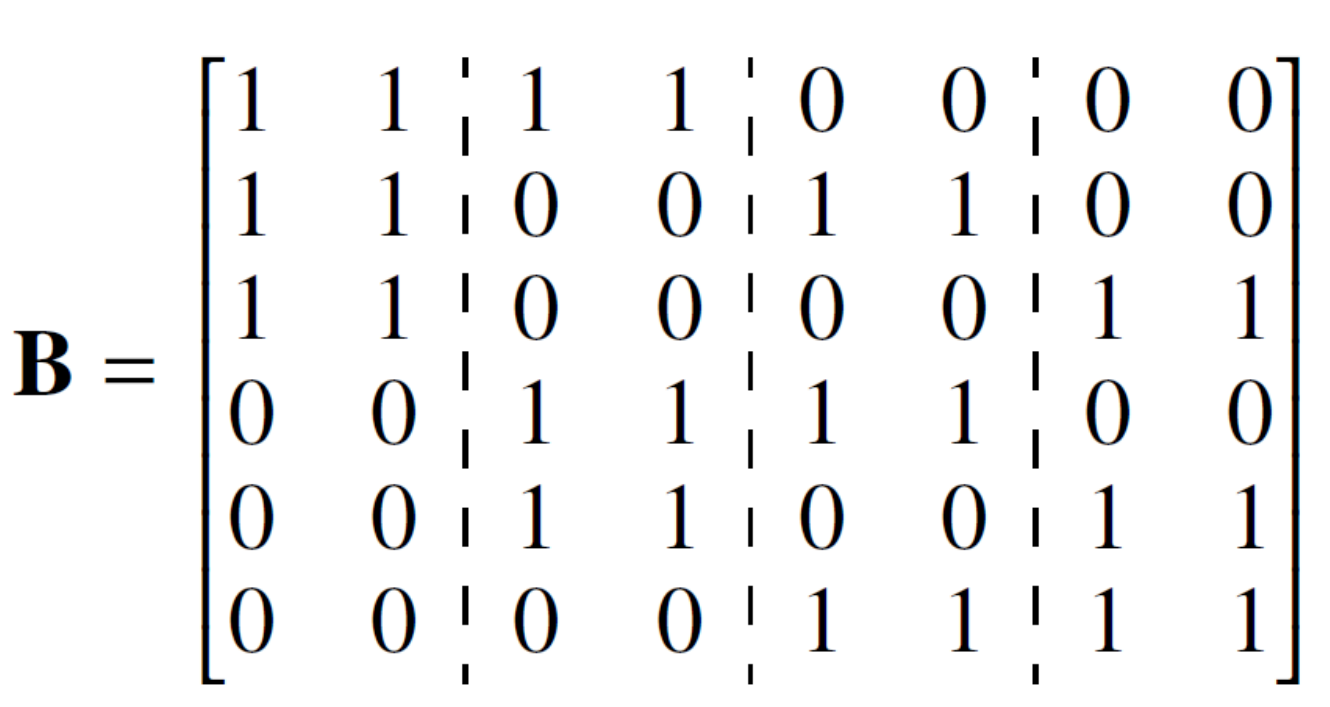}
    \caption{ Our construction with 
    % $(N,K,T)=(8,4,2)$.
    $N=8$, $K=4$ and $T=2$.
    }
    
    \label{fig:codebook_example}
\end{figure}
In the extreme case of $T=1$, this strategy specializes to  random selection where the server can choose any $K$ possible users.
% In the extreme case of $T=1$, this strategy specializes to  random selection where the server can choose any set from the $\binom{N}{K}$ possible sets of $K$ users.
In the other extreme case of $T=K$, this strategy specializes to the partitioning strategy where there are $N/K$ possible sets. 
We next provide an example to illustrate the construction of $\mathbf B$ as shown in Fig. \ref{fig:codebook_example}.

\begin{example}[$N=8, K=4, T=2$]
In this example, the users are partitioned into $4$ batches as  $\mathcal G_1=\{1, 2\}, \mathcal G_2=\{3, 4\}, \mathcal G_3=\{5, 6\}$ and $\mathcal G_4=\{7, 8\}$ as given in Fig.~\ref{fig:codebook_example}. The server can choose any two batches out of these $4$ batches, hence we have $R_{\textrm{BP}}=\binom{4}{2}=6$ possible sets. This ensures a multi-round privacy $T=2$.

% Fig.~\ref{fig:codebook_example} shows our construction for this example, which ensures a multi-round privacy $T=2$. 
% \begin{equation}
% \label{B_Example}
% \mathbf B=
% \begin{pmatrix}
% [cc:cc:cc:cc]
% 1 & 1 & 1 & 1 & 0 & 0 & 0 & 0\\
% 1 & 1 & 0 & 0 & 1 & 1 & 0 & 0 \\
% 1 & 1 & 0 & 0 & 0 & 0 & 1 & 1 \\
% 0 & 0 & 1 & 1 & 1 & 1 & 0 & 0 \\
% 0 & 0 & 1 & 1 & 0 & 0 & 1 & 1 \\
% 0 & 0 & 0 & 0 & 1 & 1 & 1 & 1 
% \end{pmatrix}.
% \end{equation}
\end{example}
% Our goal is to design the set $\mathcal{B}$ for a given value of an integer $T_0 \geq 1$ such that
% \begin{equation}
%     T_0 = \min_{\mathbf{x}\in \mathbb{R}^B \setminus \{\mathbf{0}\} } \| \mathbf{x}^\top \mathbf{B}\|_0,
% \end{equation}
% where $\mathbf{B} = \big[ \mathbf{b}_0, \ldots, \mathbf{b}_{B-1} \big]^{\top} \in \{0,1\}^{B\times N}$.

% {\color{red} Given the parameters $(N,K,T_0)$, how can we design $\mathcal{B}$ to maximize $B$, the number of basis vectors in $\mathcal{B}$?
% }

\noindent \textbf{Component 2 (Available batch selection to guarantee fairness)}.
%\label{subsec:user selection}
At round $t$, user $i\in[N]$ is available to participate in the protocol with a probability $1-p_i \in (0,1]$. The frequency of participation of user $i$ before round $t$ is denoted by $f^{(t-1)}_i \deff \sum_{j=0}^{t-1}\mathds{1}\left\{ \{ \mathbfsl{p}^{(j)}\}_i=1 \right\}$.
% $f^{(t-1)}_i$ denotes the number of rounds in which user $i$ is selected to participate before round $t$. 
% We denote the number of rounds in which user $i$ is selected to participate before round $t$ by $f^{(t-1)}_i$. 
Given the set of available users at round $t$, $\mathcal{U}^{(t)}$, and the frequencies of participation $\mathbfsl f^{(t-1)}=(f^{(t-1)}_1, \cdots, f^{(t-1)}_N)$, the server selects $K$ users.
%according to the available rows in $\mathbf B$ and the frequency of participation.
To do so, the server first finds the submatrix of $\mathbf B$ denoted by $\mathbf B^{(t)}$ corresponding to $\mathcal{U}^{(t)}$. Specifically, the $i$-th row of $\mathbf B$ denoted by $\mathbfsl b_i^{\top}$ is included in $\mathbf B^{(t)}$ provided that $\mathrm{supp(\mathbfsl b_i)} \subseteq \mathcal{U}^{(t)}$. 
%Next, the server selects a row vector from $\mathbf B^{(t)}$. 
If $\mathbf B^{(t)}$ is an empty matrix, then the server skips this round. Otherwise, the server selects a row from $\mathbf B^{(t)}$ uniformly at random if $p_i=p, \forall i \in [N]$. If the users have different $p_i$, the server selects a row from $\mathbf B^{(t)}$ that includes the user with the minimum frequency of participation $\ell_{\mathrm{min}}^{(t-1)}\deff \argmin_{i \in \mathcal U^{(t)}} f^{(t-1)}_i$. If there are  many such rows, then the server selects one of them uniformly at random. 
% \color{blue}
\begin{remark}\normalfont (Necessity of the Second Component).
The second component is necessary to  guarantee that the aggregation fairness gap goes to zero as we show in Theorem \ref{thm:multi-roundsecagg-guarantees} and Section  \ref{sec:Experiments}.
\end{remark}
% \color{black}
\noindent Overall, the algorithm  designs a privacy-preserving family of sets to ensure a multi-round privacy guarantee $T$. 
Then specific sets are selected from this family to ensure fairness. 
We describe the two components of \namespace in detail in Algorithms 1 %\ref{alg:batch-partitioning} 
and 2 in Appendix \ref{app:algorithm}.
\section{Theoretical Results}
\label{sec:TheoreticalResults}
% We provide the theoretical and convergence guarantees of \namespace in Sec. \ref{subsec:multi-roundsecagg-guarantees} and Sec.\ref{subsec:convergence}, respectively. 
%In our analysis, we assume that $p_i=p, \forall i\in[N]$ for simplicity.
In this section, we provide the theoretical guarantees of \name.
%in Section \ref{subsec:multi-roundsecagg-guarantees} and the convergence analysis of \namespace in Section \ref{subsec:convergence}. 
\subsection{Guarantees of \name}
\label{subsec:multi-roundsecagg-guarantees}
% first, theoretical guarantees 
% p is the dropout probability of all users at the end of the statement
In this subsection, we establish the theoretical guarantees of \namespace in terms of the multi-round privacy guarantee, the aggregation fairness gap and the average aggregation cardinality.
% In this subsection, we establish the theoretical guarantees of \namespace in terms of the multi-round privacy guarantee, aggregation fairness gap and average aggregation cardinality.
% We first establish the theoretical guarantees of \namespace in terms of the key metrics. 
%the multi-round privacy guarantee, the aggregation fairness gap and the average aggregation cardinality.
% For simplicity, we assume $N/T, K/T$ are integers.
\begin{theorem}
\label{thm:multi-roundsecagg-guarantees}
% We assume $N/T, K/T$ are integers. \namespace with parameters $N, K, T$ ensures
\namespace with parameters $N, K, T$ ensures a multi-round privacy guarantee of $T$, an aggregation fairness gap $F=0$, and an average aggregation cardinality that is given by 
    \begin{align}
    {
        C=K\left( 1-\sum\limits_{i=N/T-K/T+1}^{N/T} \binom{N/T}{i} q^i (1-q)^{N/T-i} \right)}, \notag
    \end{align}
    where $q=1-(1-p)^T$, when all users have the dropout probability $p$. 
\end{theorem}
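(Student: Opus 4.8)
The plan is to prove the three assertions separately, exploiting the batch structure of $\mathbf{B}$ and, for the last two claims, the fact that with a common dropout probability the per-round selection is memoryless. \textbf{Privacy.} First I would observe that every participation vector produced by \namespace is \emph{batch-constant}: by Component~1 each row $\mathbfsl{p}^{(t)}$ equals $1$ on every user of the $K/T$ selected batches and $0$ elsewhere, hence is constant on each batch $\mathcal{G}_k$. Consequently, for any $\mathbfsl{z} \in \mathbb{R}^J \setminus \{\mathbf{0}\}$, the vector ${\mathbf{P}^{(J)}}^\top \mathbfsl{z}$ — a linear combination of batch-constant rows — is again constant on each batch; write its common value on $\mathcal{G}_k$ as $c_k$. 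Using $w_i = 1/N$ I would then expand
\begin{equation*}
\mathbf{X}^\top {\mathbf{P}^{(J)}}^\top \mathbfsl{z} = \frac{1}{N}\sum_{k=1}^{N/T} c_k \sum_{j \in \mathcal{G}_k} \mathbfsl{x}_j .
\end{equation*}
Retaining only the batches with $c_k \neq 0$ yields exactly the form of \eqref{eq:def_privacy}, with the $\mathcal{S}_i$ ranging over those batches and $a_i = c_k/N \neq 0$. Since the batches are disjoint and each has size $|\mathcal{G}_k| = T$, every partial sum obeys $|\mathcal{S}_i| = T \geq T$, which is precisely the multi-round privacy guarantee $T$.

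\textbf{Fairness.} When $p_i = p$ for all $i$, the selection rule picks a row of $\mathbf{B}^{(t)}$ uniformly at random and \emph{does not consult the participation history} $\mathbfsl{f}^{(t-1)}$, so the rounds are i.i.d. I would then invoke a symmetry argument: because $\mathbf{B}$ contains all $\binom{N/T}{K/T}$ choices of $K/T$ batches, the collection of rows is invariant under any permutation of the batches; the i.i.d. dropout is invariant under permuting batches (and users within a batch); and uniform selection is invariant as well. Hence every batch is selected with the same probability in a given round, and since all users of a batch participate together, the per-round participation probability $\pi \deff \Pr\{\{\mathbfsl{p}^{(t)}\}_i = 1\}$ is the same for all users $i$ and independent of $t$. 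The relative frequency in \eqref{eq:def_fairness} then converges to $\pi$ for every user, so both $\max_i \limsup$ and $\min_i \liminf$ equal $\pi$ and $F = 0$.

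\textbf{Cardinality.} By the same i.i.d. property, $\mathbb{E}\big[\|\mathbfsl{p}^{(t)}\|_0\big]$ is constant in $t$, so the $\liminf$ in \eqref{eq:def_cardinality} reduces to $C = \mathbb{E}\big[\|\mathbfsl{p}^{(0)}\|_0\big]$. Since $\|\mathbfsl{p}^{(t)}\|_0 \in \{0, K\}$, I would write $C = K\,\Pr[\text{aggregation occurs}]$, where aggregation occurs precisely when $\mathbf{B}^{(t)}$ is nonempty, i.e. when at least $K/T$ batches are fully available. A batch is fully available iff all of its $T$ users are available, which happens independently with probability $(1-p)^T = 1-q$; hence the number $A$ of \emph{unavailable} batches is $\mathrm{Binomial}(N/T, q)$, and aggregation occurs iff $A \leq N/T - K/T$. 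Writing $\Pr[A \leq N/T - K/T] = 1 - \Pr[A \geq N/T - K/T + 1]$ and expanding the tail as a binomial sum gives exactly the claimed expression for $C$.

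\textbf{Main obstacle.} The privacy claim is purely structural and the cardinality claim is a direct binomial-tail computation. The step needing the most care is fairness: I must argue not only that all users are symmetric but that the per-round participation probability is genuinely constant in $t$, which hinges on the memoryless (history-independent) nature of uniform selection under equal dropout. Making the permutation-invariance argument precise — exhibiting the measure-preserving relabeling that equalizes users across different batches — is the crux, and is what lets both the $\limsup$ and $\liminf$ collapse to the same value $\pi$.
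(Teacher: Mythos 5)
Your proposal is correct and takes essentially the same route as the paper's own proof: your batch-constant observation on ${\mathbf{P}^{(J)}}^\top \mathbfsl{z}$ is exactly the paper's block decomposition of $\mathbf{B}$ into per-batch submatrices whose columns are all-zeros or all-ones, your permutation-invariance argument for fairness matches the paper's realization-coupling symmetry argument, and your binomial-tail computation of the probability that fewer than $K/T$ batches are fully available is identical to the paper's cardinality derivation. The only cosmetic differences are that you work directly with the participation matrix rather than $\mathbf{B}$ and make the i.i.d.-across-rounds structure explicit, neither of which changes the substance.
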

\noindent We provide the proof in Appendix \ref{app:privacy_proof}.  

%\cite[Appendix A]{so2021securing}. 

% \begin{figure}[tb!]
% 	\centering
% 	\includegraphics[scale=0.120]{}
% 	\vspace{-0.4cm}\caption{\footnotesize An illustration of the trade-off between the multi-round privacy guarantee $T$ and the average aggregation cardinality $C$. In this example, $N=120$ and $K=12$.
% 	}
% 	\label{fig:trade-off-anayltically}
% 	\vspace{-0.4cm}
% \end{figure}

% \begin{wrapfigure}{rh}{0.45\textwidth}
%     \vspace{-0.9 cm}
%     \centering
%     \includegraphics[width=1\linewidth, trim =1cm 1cm 2cm 0.2cm, clip]{figures/Cardinality-Privacy.eps}
%     \vspace{-0.45cm}\caption{\footnotesize An illustration of the trade-off between the multi-round privacy guarantee $T$ and the average aggregation cardinality $C$. In this example, $N=120$ and $K=12$.
%     }
%     \label{fig:trade-off-anayltically}
%  \vspace{-1cm}
% \end{wrapfigure}

% \begin{wrapfigure}{rh}{0.45\textwidth}
%     \vspace{-0.9 cm}
%     \centering
%     \includegraphics[width=0.99\linewidth]{figures/Cardinality-Privacy.eps}
%     \vspace{-0.45cm}\caption{\footnotesize An illustration of the trade-off between the multi-round privacy guarantee $T$ and the average aggregation cardinality $C$. In this example, $N=120$ and $K=12$.
%     }
%     \label{fig:trade-off-anayltically}
%     \vspace{-1cm}
% \end{wrapfigure}

\begin{figure}
  %  \vspace{-1.4 cm}
    \centering
    \includegraphics[scale=0.13]{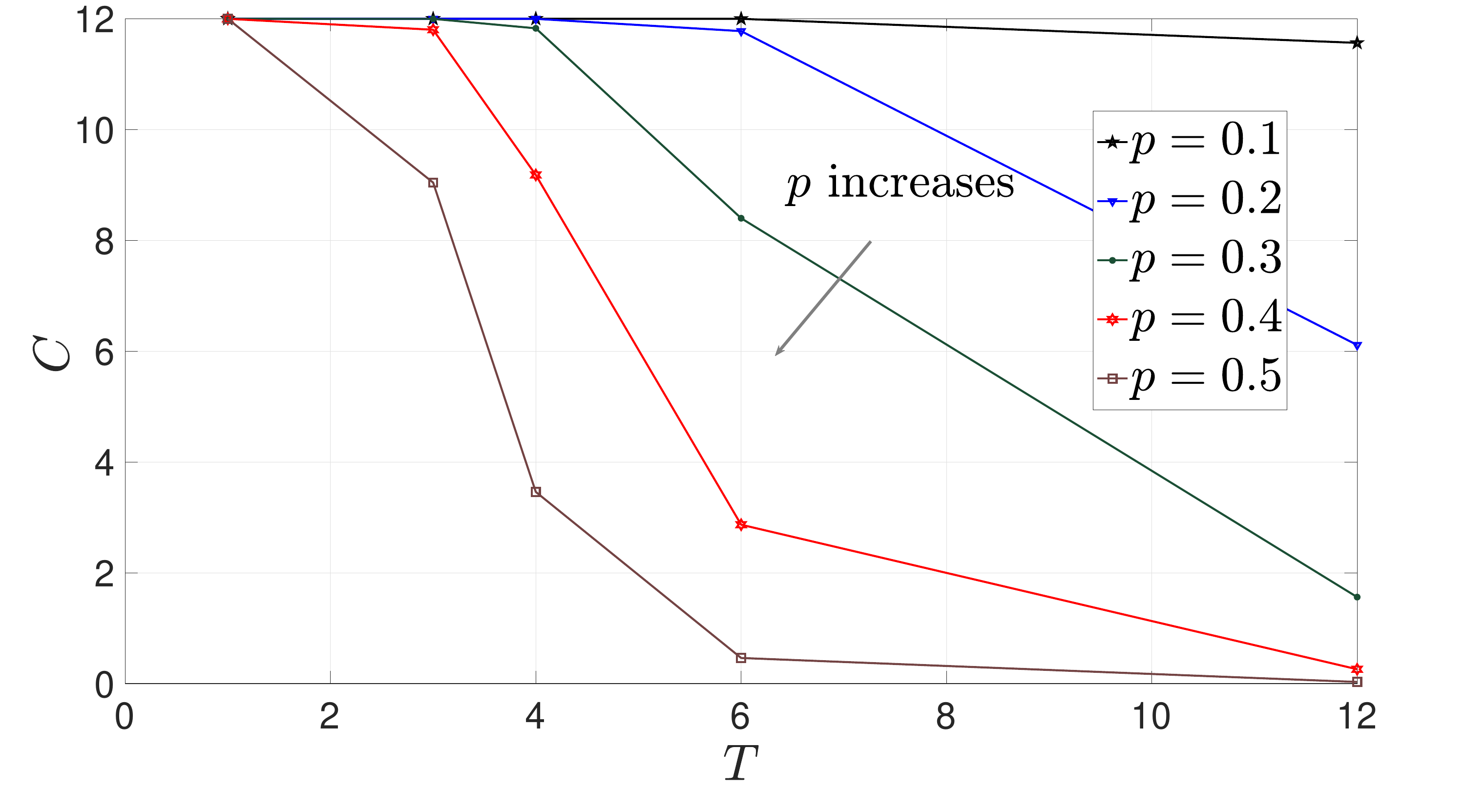}
    \caption{An illustration of the trade-off between the multi-round privacy guarantee $T$ and the average aggregation cardinality $C$. In this example, $N=120$ and $K=12$.
    }
    \label{fig:trade-off-anayltically}
\end{figure}
\begin{remark}\normalfont(Trade-off between ``Multi-round Privacy Guarantee'' and ``Average Aggregation Cardinality''). 
Theorem \ref{thm:multi-roundsecagg-guarantees} indicates a  trade-off between the multi-round privacy and the average aggregation cardinality since as $T$ increases, $C$ decreases which slows down the convergence as we show in Sec.  \ref{subsec:convergence}. We show this trade-off in Fig. ~\ref{fig:trade-off-anayltically}. 
%\jt{TODO: show that our code achieves the highest value of $C$ given $N,K,T,p$ without the knowledge of $p$. }
% Theorem \ref{thm:multi-roundsecagg-guarantees} sheds the light on the trade-off between the multi-round privacy guarantee and the average aggregation cardinality since as $T$ increases, $C$ decreases. Intuitively, the number of possible distinct sets $R$ decreases as $T$ increases and hence $C$ also decreases. We illustrate this trade-off in Figure~\ref{fig:trade-off-anayltically}. 
% \begin{figure}[htb!]
%     \centering
%     \includegraphics[scale=0.16]{figures/Cardinality-Privacy.eps}
%   \caption{\footnotesize An illustration of the trade-off between the multi-round privacy guarantee $T$ and the average aggregation cardinality $C$. In this example, $N=120$ and $K=12$.}
%     \label{fig:trade-off-anayltically}
% \end{figure}
\end{remark} 

\begin{remark}\normalfont(Necessity of Batch Partitioning (BP)).
\label{remark:BP_Necessity}
We show that any strategy that satisfies the privacy guarantee in Equation (\ref{eq:def_privacy}) must have a batch partitioning structure, and for given $N,K,T,K\leq N/2$, the largest number of distinct user sets in any strategy is at most ${N/T \choose K/T}$, which is achieved in our design in Section~\ref{sec:Proposed}. We provide the proof in Appendix~\ref{app:converse}.

%\cite[Appendix C]{so2021securing}.
\end{remark}
\begin{remark}\normalfont
(Non-linear Reconstructions of Aggregated Models). The privacy criterion in Eq. (\ref{eq:def_privacy}) considers linear reconstructions of the aggregated models. One may also consider more general non-linear reconstructions. The long-term privacy guarantees of batch partitioning hold even under such reconstructions as the users in the same batch always participate together or do not participate at all. Hence, the server cannot separate individual models within the same batch even through non-linear operations. 
% (Non-linear Local Models Reconstruction).
% It is also worth noting that batch partitioning ensures a stronger multi-round privacy guarantee than the guarantee in Equation  (\ref{eq:def_privacy}) as the users of the same batch either participate together or they do not participate at all and there is no way to separate the individual models belonging to the same batch even through non-linear operations.
\end{remark}
% \begin{remark}\normalfont(Non-linear Local Models Reconstruction).
% It is worth also noting that batch partitioning ensures a stronger multi-round privacy guarantee than the guarantee in Equation  (\ref{eq:def_privacy}) as the users of the same batch either participate together or they do not participate at all and there is no way to separate the individual models belonging to the same batch even through non-linear operations.
% \end{remark}

% Next, we show that \namespace is optimal in the sense that any selection strategy that ensures a multi-round privacy $T$ cannot have more distinct sets than \namespace.
% \begin{theorem}
% \label{thm:converse}
% The maximum number of distinct sets of weight $K$ of any selection strategy satisfying a multi-round privacy $T$, where $K\leq N/2$, is upper-bounded as follows
% \begin{align}
%     R_{\textrm{max}} \leq \binom{N/T}{K/T}.
% \end{align}
% \end{theorem}
% We provide the proof of Theorem \ref{thm:converse} in Appendix
%\ref{app:converse}.

% \begin{remark}\normalfont(What is the optimal trade-off?). A natural question to ask next is what is the optimal trade-off between the  multi-round privacy guarantee and the average aggregation cardinality? We discuss this question in Appendix \ref{app:impossibility}.
% \end{remark} 

\subsection{Convergence Analysis of \namespace}
\label{subsec:convergence}
% In this subsection, we provide the convergence guarantees of \name, by first introducing a few common assumptions \cite{li2019convergence, yu2019parallel} that will be needed for our analysis.
For convergence analysis, we first introduce a few common assumptions \cite{li2019convergence, yu2019parallel}.

\begin{assumption} \label{assumpt:1}
$L_1,\ldots,L_N$ in \eqref{eq:objective_fnc} are all $\rho$-smooth:  $\forall \mathbfsl{a},\mathbfsl{b}\in\mathbb{R}^d$ and $i\in[N]$, $L_i(\mathbfsl{a}) \leq L_i(\mathbfsl{b}) + (\mathbfsl{a}-\mathbfsl{b})^\top \nabla L_i(\mathbfsl{b}) + \frac{\rho}{2}\lVert \mathbfsl{a}-\mathbfsl{b} \rVert^2$.
\end{assumption}

\begin{assumption}\label{assumpt:2}
$L_1,\ldots,L_N$ in \eqref{eq:objective_fnc} are all $\mu$-strongly convex:  $\forall \mathbfsl{a},\mathbfsl{b}\in\mathbb{R}^d$ and $i\in[N]$, $L_i(\mathbfsl{a}) \geq L_i(\mathbfsl{b}) + (\mathbfsl{a}-\mathbfsl{b})^\top \nabla L_i(\mathbfsl{b}) + \frac{\mu}{2}\lVert \mathbfsl{a}-\mathbfsl{b} \rVert^2$.
\end{assumption}

\begin{assumption}\label{assumpt:3}
Let $\xi_i^{(t)}$ be a sample uniformly selected from the dataset $\mathcal{D}_i$. The variance of the stochastic gradients at each user is bounded, i.e., $\mathbb{E}\lVert \nabla L_i(\mathbfsl{x}_i^{(t)},\xi_i^{(t)}) - \nabla L_i(\mathbfsl{x}_i^{(t)}) \rVert^2 \leq \sigma^2_i$ for $i\in [N]$.
\end{assumption}

\begin{assumption}\label{assumpt:4}
The expected squared norm of the stochastic gradients is uniformly bounded, i.e., \\ $\mathbb{E}\lVert \nabla L_i(\mathbfsl{x}_i^{(t)},\xi_i^{(t)}) \rVert^2 \leq G^2$ for all $i\in [N]$.
\end{assumption}
\noindent We now state our convergence guarantees.

\begin{theorem} \label{thm:convergence} 
    Consider a FL setup with $N$ users to train a machine learning model from \eqref{eq:objective_fnc}. 
    Assume $K$ users are selected by {\name} with average aggregation cardinality $C$ defined in \eqref{eq:def_cardinality} to update the global model from \eqref{eq:aggregation}, and all users have the same dropout rate, hence {\name} selects a random set of $K$ users uniformly from the set of available user sets at each round.
    Then, 
    \begin{align}
    \label{eq:conv_rate}
        % \wedge{\scriptstyle\wedge}\\
        &\mathbb{E}[L(\mathbfsl{x}^{(J)})]-L^{*} \notag \\&\leq \frac{\rho}{\gamma + \frac{C}{K}EJ-1}
        \left( \frac{2(\alpha + \beta)}{\mu^2} + \frac{\gamma}{2} \mathbb{E} \lVert \mathbfsl{x}^{(0)} - \mathbfsl{x}^{*}\rVert ^2 \right),
        % \mathbb{E}[L(\mathbfsl{x}^{(J)})]-L^{*} \leq \frac{L}{\mu \left(\gamma + \frac{C}{K}J-1\right)}
        % \left( \frac{2(\alpha + \beta)}{\mu} + \frac{\mu \gamma}{2} \mathbb{E} \lVert \mathbfsl{x}^{(0)} - \mathbfsl{x}^{*}\rVert ^2 \right),
    \end{align}
    where $\alpha = \frac{1}{N}\sum_{i=1}^{N}\sigma_i^2 + 6\rho\Gamma + 8(E-1)^2G^2$, $\beta=\frac{4(N-K)E^2G^2}{K(N-1)}$, $\Gamma=L^*-\sum_{i=1}^{N}L^*_i$, and $\gamma=\max\left\{\frac{8\rho}{\mu},E\right\}$.
\end{theorem}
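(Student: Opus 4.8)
The plan is to adapt the now-standard convergence analysis of \emph{FedAvg} with partial participation (as in \cite{li2019convergence, yu2019parallel}) to the two features that distinguish \namespace: (i) at each aggregating round a set of $K$ users is drawn (uniformly, by symmetry of the equal-dropout batched selection) from the $N$ users, which is responsible for the sampling-variance term $\beta$; and (ii) \namespace \emph{skips} a round whenever no privacy-preserving available set exists, so that only a fraction $C/K$ of the $J$ rounds actually update the global model, which is responsible for the $\frac{C}{K}$ factor in the denominator. Throughout I work under Assumptions \ref{assumpt:1}--\ref{assumpt:4} with $w_i=1/N$, and take the aggregation in \eqref{eq:aggregation} to be normalized so that the sampled average is an unbiased estimate of the full average over all $N$ users.

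First I would introduce the standard \emph{virtual sequence}: let $\mathbfsl{x}_i^{(t)}$ denote the local iterate that every user $i$ would produce after its $E$ local SGD steps, let $\bar{\mathbfsl{v}}^{(t)}=\frac{1}{N}\sum_{i=1}^N \mathbfsl{x}_i^{(t)}$ be the full average, and let $\bar{\mathbfsl{w}}^{(t)}$ be the actual global model formed from the $K$ sampled users. Using $\rho$-smoothness and $\mu$-strong convexity (Assumptions \ref{assumpt:1}--\ref{assumpt:2}) together with the bounded stochastic-gradient variance (Assumption \ref{assumpt:3}), I would derive the one-step contraction for an \emph{aggregating} round,
\begin{equation}
\mathbb{E}\lVert \bar{\mathbfsl{v}}^{(t+1)} - \mathbfsl{x}^* \rVert^2 \leq (1-\mu\eta_t)\, \mathbb{E}\lVert \bar{\mathbfsl{w}}^{(t)} - \mathbfsl{x}^* \rVert^2 + \eta_t^2\,(\alpha+\beta),
\end{equation}
where $\alpha = \frac1N\sum_i \sigma_i^2 + 6\rho\Gamma + 8(E-1)^2 G^2$ collects the gradient-noise variance, the heterogeneity gap $\Gamma = L^* - \sum_i L_i^*$, and the client-drift bound obtained from Assumption \ref{assumpt:4} over the $E$ local steps. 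The sampling-variance term arises from bounding $\mathbb{E}\lVert \bar{\mathbfsl{w}}^{(t)} - \bar{\mathbfsl{v}}^{(t)} \rVert^2$: uniform selection of $K$ of $N$ users without replacement contributes the factor $\frac{N-K}{K(N-1)}$, which combined with the $4E^2G^2$ drift bound yields $\beta = \frac{4(N-K)E^2G^2}{K(N-1)}$.

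Next I would solve the recursion. On aggregating rounds I take the diminishing step size $\eta_\tau = \frac{2}{\mu(\gamma+\tau)}$ with $\gamma=\max\{8\rho/\mu,E\}$, where $\tau$ counts \emph{effective} (aggregating) SGD steps, and prove by induction that $\mathbb{E}\lVert \bar{\mathbfsl{w}} - \mathbfsl{x}^* \rVert^2 \leq \frac{v}{\gamma+\tau}$ with $v=\max\{\frac{4(\alpha+\beta)}{\mu^2},\, \gamma\,\mathbb{E}\lVert \mathbfsl{x}^{(0)}-\mathbfsl{x}^*\rVert^2\}$; a final application of $\rho$-smoothness converts this into the loss bound via $\mathbb{E}[L(\mathbfsl{x}^{(J)})]-L^* \leq \tfrac{\rho}{2}\,\mathbb{E}\lVert\bar{\mathbfsl{w}}-\mathbfsl{x}^*\rVert^2$, and $\max\{a,b\}\leq a+b$ reproduces the bracketed factor $\frac{2(\alpha+\beta)}{\mu^2}+\frac{\gamma}{2}\mathbb{E}\lVert \mathbfsl{x}^{(0)}-\mathbfsl{x}^*\rVert^2$. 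The last step is to pass from the effective-round count $\tau$ to the actual round count $J$: since skipped rounds leave the global model unchanged, the iterate after $J$ rounds coincides with the iterate after $\tau$ effective rounds, and by the definition \eqref{eq:def_cardinality} of the average aggregation cardinality each round aggregates with probability $C/K$, so the expected number of effective local steps is $\frac{C}{K}EJ$. Substituting $\tau=\frac{C}{K}EJ$ gives the stated denominator $\gamma+\frac{C}{K}EJ-1$.

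The main obstacle is precisely this last reduction. Handling the round-skipping rigorously requires decoupling the random set of effective rounds from the iterate: because $1/(\gamma+\tau)$ is convex in $\tau$, one cannot naively replace the random $\tau$ by its mean $\frac{C}{K}EJ$ without justification, so I would argue along the realized effective subsequence (the per-effective-round recursion holds deterministically since the iterate only advances on those rounds) and then invoke $C$ as the long-run average aggregation rate, or equivalently absorb the skipping directly into the step-size schedule. A secondary subtlety is that \namespace samples from the \emph{batched} privacy-preserving family rather than uniformly over all $\binom{N}{K}$ subsets; under equal dropout the fairness guarantee $F=0$ forces symmetric marginal inclusion probabilities $K/N$, keeping the estimator unbiased, and one must verify that the pairwise inclusion structure of the batched family does not inflate the sampling variance beyond the $\frac{N-K}{K(N-1)}$ factor used for $\beta$.
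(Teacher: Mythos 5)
Your proposal is correct and follows essentially the same route as the paper's proof: the same virtual sequences $\overline{\mathbfsl{v}}, \overline{\mathbfsl{w}}$ with an unbiased-selection lemma, the same per-step contraction with $\alpha$ and $\beta$ imported from Lemmas 1--3 and 5 of \cite{li2019convergence}, the same diminishing-step-size induction and $\rho$-smoothness conversion, and the same final bridge counting only non-skipped rounds so that $J$ actual rounds correspond to $\frac{C}{K}EJ$ effective local steps. The two subtleties you flag --- justifying the replacement of the random effective-step count by its mean, and verifying that the batch-structured (rather than fully uniform) selection of $K$ users does not inflate the sampling variance beyond the $\frac{N-K}{K(N-1)}$ factor --- are points the paper's own proof glosses over (it simply asserts $\mathbb{E}[L(\mathbfsl{x}^{(J)})]=\mathbb{E}[L(\overline{\mathbfsl{w}}^{(JE\cdot C/K)})]$ and cites Lemma 5 directly), so your treatment is, if anything, more careful than the paper's.
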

\noindent We provide the proof in Appendix \ref{app:convergence_proof}.
%\cite[Appendix B]{so2021securing}.
\begin{remark} \normalfont (The average aggregation cardinality controls the convergence rate.)
    Theorem \ref{thm:convergence} shows how the average aggregation cardinality affects the convergence.  When the average aggregation cardinality is maximized, i.e., $C=K$, the  convergence rate in Theorem \ref{thm:convergence} equals that of the random selection algorithm provided in Theorem 3 of \cite{li2019convergence}. 
    In \eqref{eq:conv_rate}, we have the additional term $E$ (number of local epochs) in front of $J$ compared to Theorem 3 of \cite{li2019convergence} as we use global round index $t$ instead of using step index of local SGD.
    As the average aggregation cardinality decreases, a greater number of training rounds is required to achieve the same level of accuracy. 
\end{remark}
\begin{remark}\normalfont(General Convex and Non-Convex). Theorem \ref{thm:convergence} considers the strongly-convex case, but the general convex and the non-convex cases can be addressed as in \cite{karimireddy2020scaffold}.
%in Appendix \ref{App_New-Convergence}.
\end{remark}
\begin{remark}\normalfont(Different Dropout Rates). When the dropout probabilities of the users are not the same, characterizing the convergence guarantees is challenging. This is due to the fact that batch selection based on the frequency of participation breaks the conditional unbiasedness of the user selection, which is required for the convergence guarantee. However, we empirically show that \namespace guarantees the convergence with different dropout rates.
\end{remark}

\section{Experiments}\label{sec:Experiments}

%Our experiments consist of two parts. 
We first numerically demonstrate the performance of \namespace compared to the baselines of Sec. \ref{subsec:baselines} in terms of the key metrics of Sec. \ref{subsec:multi-round_SA}.
Next, we implement convolutional neural networks (CNNs) with MNIST~\cite{lecun2010mnist}, CIFAR-10, and CIFAR-100~\cite{krizhevsky2009learning} to investigate the effect of the key metrics on the test accuracy.
\begin{table}[htb!]
%\vspace{-0.58cm}
    \centering
    %\footnotesize
    \scalebox{0.9}{
    \begin{tabular}[b]{cl} 
        % \small
        % \footnotesize
        % \label{tbl:codebook_size}
        \toprule    %[0.3pt] 
        Scheme & Family size ($=R$) \\
        % \multirow{2}{*}{Scheme} & Family  \\
        %                 & size ($=R$) \\
        \midrule    %[0.3pt]
        % \noalign{\smallskip}
        Random selection  &  $\sim 10^{16}$ \\
        Weighted random selection &  $\sim 10^{16}$ \\
        User partition            &  $10$ \\
        {\name}, T=6    & 190 \\
        {\name}, T=4    & 4060 \\
        {\name}, T=3    & 91389 \\
        \bottomrule %[0.3pt]
    \end{tabular}
    }
   % \vspace{-0.3cm}
     \caption{Family size with $N=120$, $K=12$.
    \label{tbl:codebook_size}
    }
   % \vspace{-0.5cm}
\end{table}

\begin{figure*}[ht!]
\centering
    \subfigure[Multi-round privacy guarantee.]{\label{fig:priavcy}
    \includegraphics[width=.32\textwidth]{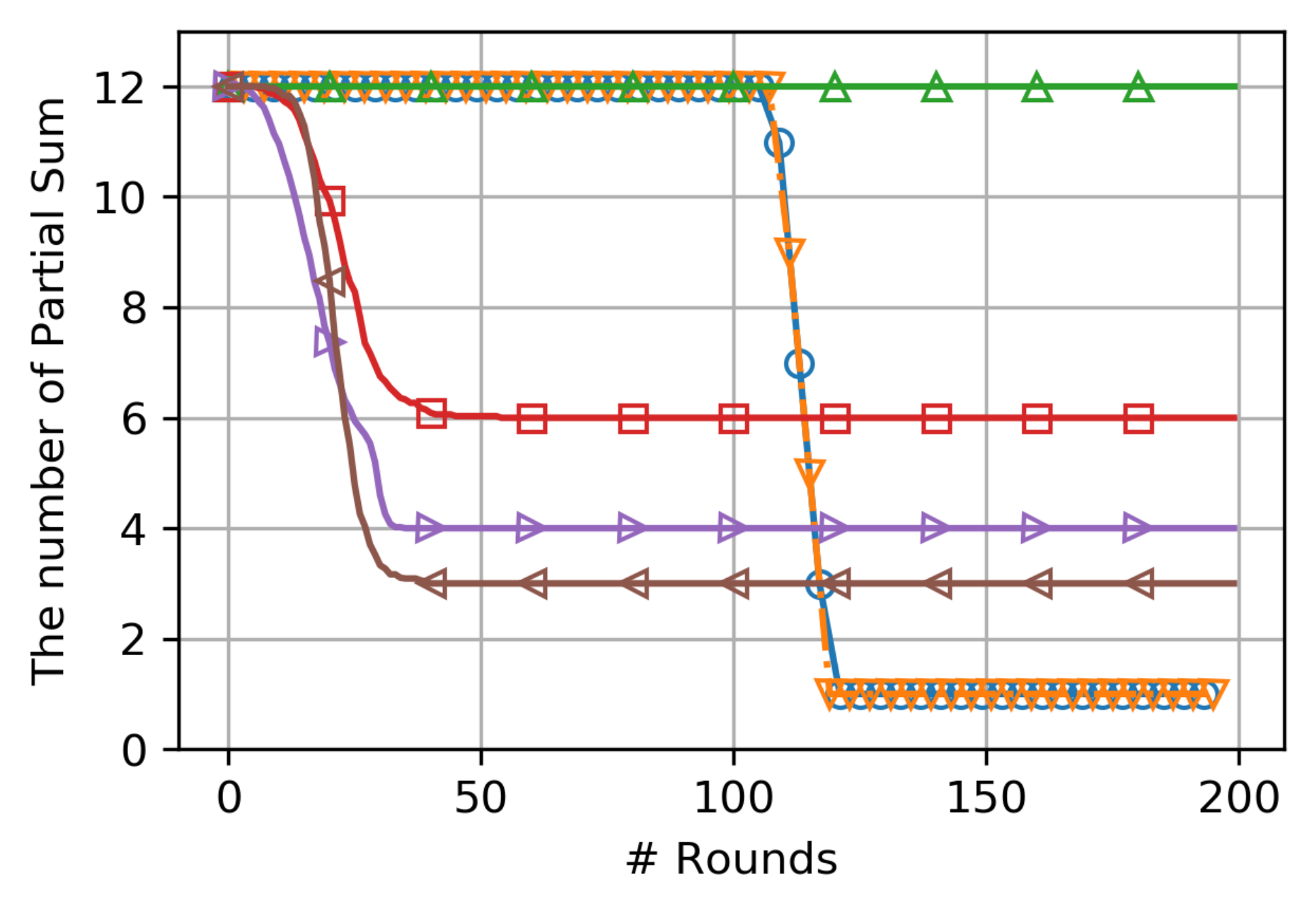}
    % \includegraphics[width=.31\textwidth]{}
   %  \vspace{-20 pt}
    }
    \subfigure[Aggregation fairness gap.]{\label{fig:fairness_gap}
    \includegraphics[width=.32\textwidth]{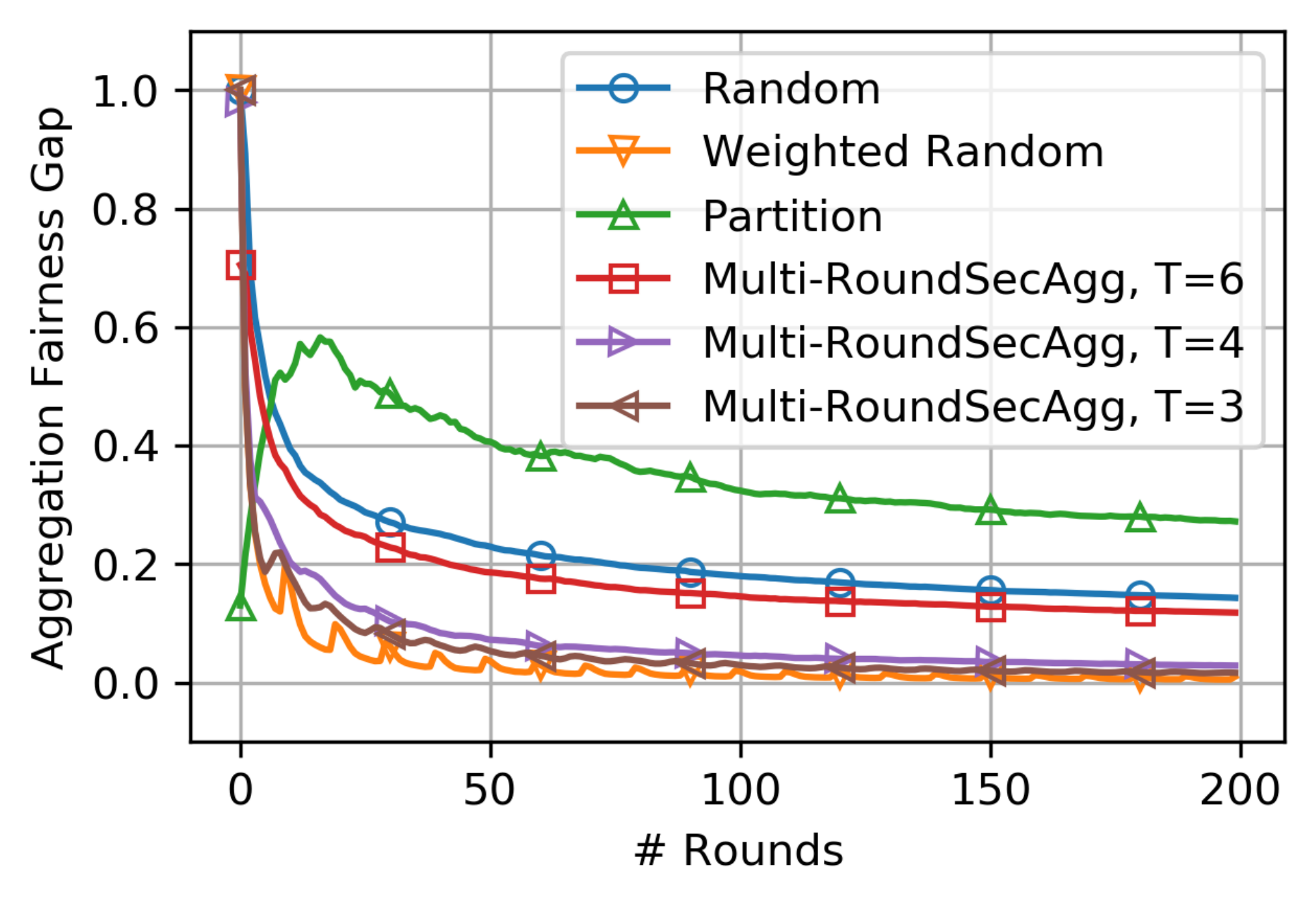}
    % \includegraphics[width=.31\textwidth]{}
   %  \vspace{-20 pt}
    }
     \subfigure[Average aggregation cardinality.]{\label{fig:aggr_cardinality}
    \includegraphics[width=.32\textwidth]{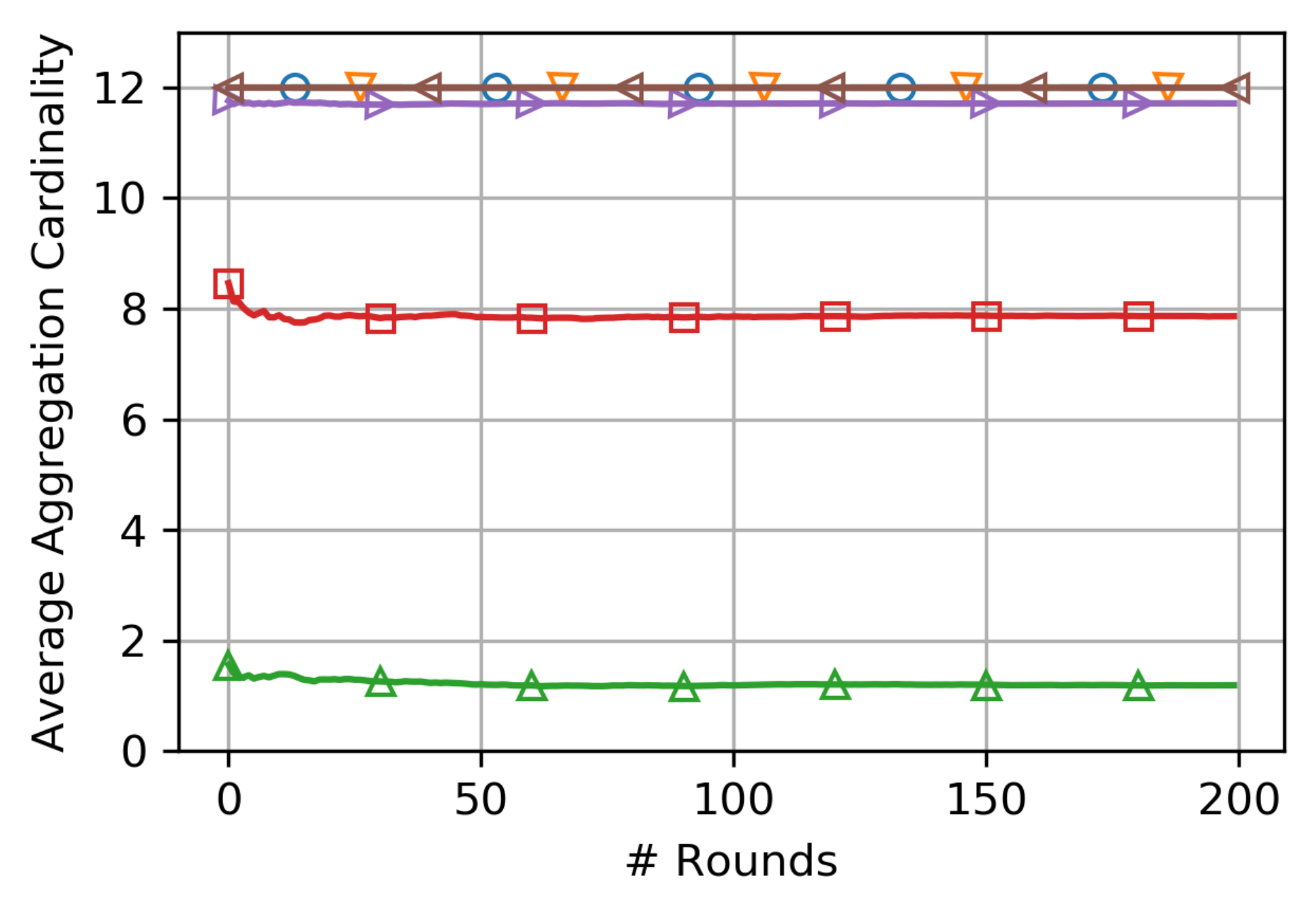}
    % \includegraphics[width=.31\textwidth]{}
   %  \vspace{-20 pt}
    }
%\vspace{-0.2cm}
\caption{The key metrics with $N=120$ (number of users), $K=12$ (number of selected users at each round).}
\label{fig:key_metrics}
%\vspace{-0.1cm}
\end{figure*}

\begin{figure*}[ht!]
\centering
    \subfigure[ Privacy-aggregation cardinality trade-off.]{  \includegraphics[scale = 0.62]{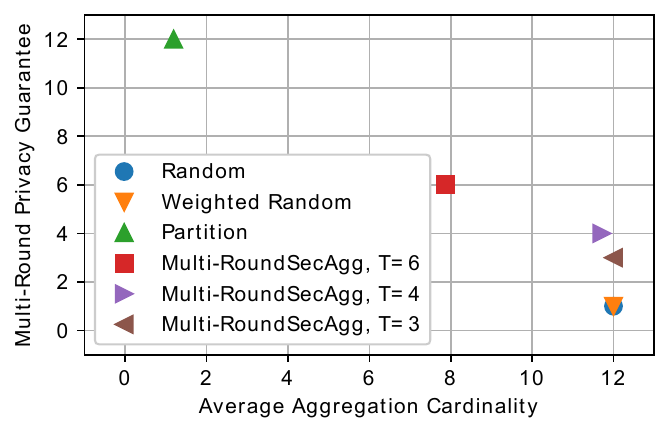} \label{fig:tradeoff}} \\ % first figure itself
    \subfigure[IID data distribution.]{
        \includegraphics[scale = 0.65]{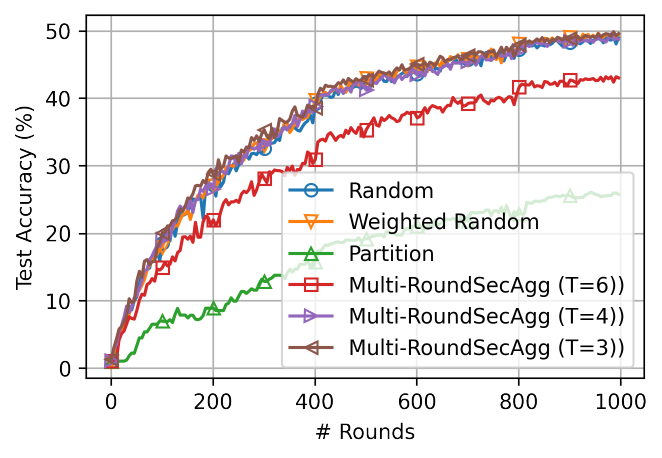} 
        \label{fig:CIFAR10_IID_accuracy}
        } 
        \subfigure[Non-IID data distribution.]{
        \includegraphics[scale =  0.65]{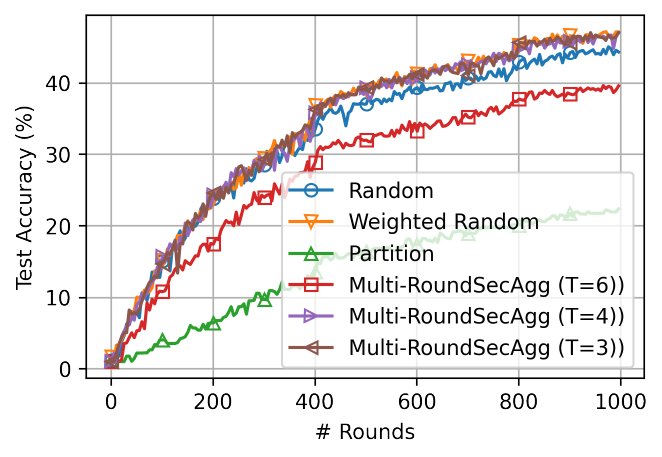} % second figure itself
        % clip]{} % second figure itself
        \label{fig:CIFAR10_NonIID_accuracy}
        }
      %  \vspace{-0.3cm}
       \caption{Trade-off between multi-round privacy and aggregation cardinality and also the test accuracy of VGG11 in \cite{simonyan2014very} on the CIFAR-100 dataset with $N=120$ and $K=12$.}
        \label{fig:CIFAR10_accuracy}
    %\vspace{-0.5cm}
\end{figure*}

\noindent \textbf{Setup.} We consider a FL setting with $N=120$ users, where the server aims to choose $K=12$ users at every round. 
We study two settings for partitioning the CIFAR-100 dataset.
% To demonstrate the performance of \name with more complex model on challenging dataset, we implement a VGG11 network~\cite{simonyan2014very} which consists of $5$ convolutional layers and a single fully-connected layer with $9.28$ million parameters. Each convolutional layer is followed by normalization layer, activation layer, and max-pooling layer.
\begin{itemize}[noitemsep,topsep=0pt,parsep=1pt,partopsep=2pt,leftmargin=*]
% \vspace{-0.2 cm}
    \item \textbf{IID Setting.} $50000$ training samples are shuffled and partitioned uniformly across $N=120$ users. 
    % \vspace{-0.1 cm}
    \item \textbf{Non-IID Setting.} We distribute the dataset using a Dirichlet distribution~\cite{hsu2019measuring}. Specifically, for user $i \in [N]$, we sample a vector $\pmb \pi_i \sim \text{Dir}(\kappa \pmb \pi)$ with $\kappa=0.5$ and $\pmb \pi$ is the prior class distribution over the  $100$ classes. The parameter $\kappa$ controls the heterogeneity of the distributions, where $\kappa \rightarrow\infty$ results in IID setting. 
\end{itemize} 
We implement a VGG-11~\cite{simonyan2014very}, which is sufficient for our needs, as our goal is to evaluate various schemes, not to achieve the best accuracy.
% \begin{itemize}[leftmargin=*]
%     \item \textbf{IID Setting.} In this setting, the $50000$ training samples are shuffled and partitioned uniformly across the $N=120$ users, where each user receives $417$ or $416$ samples.
%   % \vspace{-3pt}
%     \item \textbf{Non-IID dataset.} In this setting, we utilize the \emph{data-sharing strategy} of \cite{zhao2018federated}, where the $50000$ training samples are divided into a globally shared dataset $\mathcal{G}$ and private dataset $\mathcal{D}$.
%     We set $|\mathcal{G}|=200$ and $|\mathcal{D}|=49800$. 
%     Then, we sort $\mathcal{D}$ by the labels, partition it into $120$ shards of size $415$, and assign each of the $120$ users one shard.
%     Each user has $200$ samples of globally shared data and $415$ samples of private dataset with one label. 
% \end{itemize}
% For both settings, we implement LeNet~\cite{lecun1999object}.
% While the state-of-the-art models \cite{kolesnikov2019big,tan2019efficientnet} achieve $99\%$ accuracy, LeNet is sufficient for our needs, as our goal is to evaluate various schemes, not to achieve the best accuracy.
% 
The hyperparameters are provided in Appendix \ref{app:hyperparameters}. 
%\cite[App. F]{so2021securing}.

\noindent \textbf{Modeling dropouts.} To model heterogeneous system, users have different dropout probability $p_i$ selected from $\{0.1, 0.2, 0.3, 0.4, 0.5\}$.
At each round, user $i\in[N]$ drops with probability $p_i$.
% In the non-IID setting, to investigate how the aggregation fairness gap affects the test accuracy, we assume that $p_i$ depends on the label of the private data. 
% Specifically, the dropout probability of the users with the label $0$ is $0.1$ while each user with label $9$ has a dropout probability of $0.5$. \\
% 
% \noindent \textbf{Modeling user dropouts.} At each round, user $i\in[N]$ drops from the protocol with probability $p_i$. In the IID setting, $p_i$ is selected from $\{0.1, 0.2, 0.3, 0.4, 0.5\}$ uniformly at random.
% In the non-IID setting, in order to make the aggregation fairness gap affect the test accuracy, we assume that
% $p_i$ depends on the digit label. Specifically, the dropout probability of the users with digit $0$ is $0.1$ while each user with digit $9$  has a dropout probability of $0.5$. 

%\begin{table}

%\end{wraptable}
\noindent \textbf{Implemented Schemes.} We implement the three baselines introduced in Sec. \ref{subsec:baselines}, referred to as \emph{Random}, \emph{Weighted Random}, and \emph{Partition}. For \name, we construct three privacy-preserving families with different target multi-round privacy guarantees, $T=6$, $T=4$, and $T=3$ which we refer to as \namespace($T=6$), \namespace($T=4$), and \namespace($T=3$), respectively. 
One can view the Random and Partition schemes as extreme cases of \namespace with $T=1$ and $T=K$, respectively. 
Table \ref{tbl:codebook_size} summarizes the family size $R$ defined in Section \ref{sec:Proposed}.

\noindent \textbf{Key Metrics.} To numerically demonstrate the performance of the six schemes in terms of the key metrics defined in Sec. \ref{subsec:multi-round_SA}, at each round, we measure the following metrics.
\begin{itemize}[noitemsep,topsep=0pt,parsep=1pt,partopsep=2pt,leftmargin=*]
    \item For the multi-round privacy guarantee, we measure the number of models in the partial sum that the server can reconstruct, which is given by $T^{(t)}\coloneqq\min_{\mathbfsl{z}\in \mathbb{R}^J\}} \| \mathbfsl{z}^\top \mathbf{P}^{(t)}\|_0, \text{s.t.} \  {\mathbf{P}^{(t)}}^\top\mathbfsl{z} \neq \mathbf 0$. 
    This corresponds to the weaker privacy definition of Remark~\ref{remark:weaker_T}.
    We use this weaker privacy definition as the random selection and the random weighted selection strategies provide the worst privacy guarantee even with this weaker definition, as demonstrated later. On the other hand, \namespace provides better privacy guarantees with both the strong and the weaker definitions.
    
    \item For the aggregation fairness gap, we measure the instantaneous fairness gap $F^{(t)}\coloneqq \max_{i\in[N]}F_i^{(t)} - \min_{i\in[N]}F_i^{(t)}$, where $F_i^{(t)} = \frac{1}{t+1}\sum_{l=0}^{t}\mathds{1} \big\{ \{\mathbfsl{p}^{(l)}\}_i = 1 \big\}$.
    
    \item %For the average aggregation cardinality,
    We measure the instantaneous aggregation cardinality as $C^{(t)}\coloneqq \frac{1}{t+1} \sum_{l=0}^{t} \| \mathbfsl{p}^{(l)}\|_0$.
\end{itemize}

\noindent We demonstrate these key metrics in Figure \ref{fig:key_metrics}. We make the following key observations.
\begin{itemize}[noitemsep,topsep=0pt,parsep=1pt,partopsep=2pt,leftmargin=*]
    %\vspace{-1.5pt}
    % \item Random selection (FedAvg) and random weighted selection schemes have the worst multi-round privacy guarantee, $T=1$, which implies that the server can learn the individual models with high probability after $J \geq N$ rounds.
    \item \namespace achieves better multi-round privacy guarantee than both the random selection and random weighted selection strategies, while user partitioning achieves the best multi-round privacy guarantee, $T=K=12$. However, the partitioning strategy has the worst aggregation cardinality, which results in the lowest convergence rate as demonstrated later.
      
    % \item \namespace achieves much better privacy guarantee than the random selection (FedAvg) and the random weighted selection strategies, while user partitioning achieves the best privacy guarantee, $T=K=12$.
    % \item The user partitioning strategy has the worst aggregation cardinality, which results in the lowest convergence rate as we demonstrate later.

    % \item The average aggregation cardinality increases as the codebook size $R$ increases. This follows from the fact that the average aggregation cardinality is given by $K\cdot\text{Pr}\{ \text{at least one codeword is available} \}$ and the probability that at least one codeword is available increases as $R$ increases.
    
    \item Figure~\ref{fig:tradeoff} demonstrates the trade-off between the multi-round privacy guarantee $T$ and the average aggregation cardinality $C$. Interestingly, \namespace when $T=3$ or $T=4$ achieves better multi-round privacy guarantee than both the random selection and the weighted random selection strategies while achieving almost the same average aggregation cardinality.
    
    % \item The average aggregation cardinality increases as the codebook size $R$ increases. This follows as the average aggregation cardinality is given by $K\cdot\text{Pr}\{ \text{at least one codeword is available} \}$ and the probability that at least one codeword is available increases as $R$ increases.
\end{itemize}

% \color{blue}
\begin{remark} \normalfont (Multi-round Privacy of Random and Weighted Random). 
The multi-round privacy guarantees of Random and Weighted Random drop sharply as shown in Fig. \ref{fig:priavcy} as the participating matrix $\mathbf{P}^{(t)}\in\{0,1\}^{t\times N}$ becomes full rank with high probability when $t\geq N$, and hence the server can reconstruct the individual models. More precisely, Theorem \ref{thm:random-selection-multi-round}
in Appendix \ref{app:random-selection} 
%\cite[Appendix H]{so2021securing}
shows this \emph{thresholding phenomenon}, where the probability of reconstructing individual models after certain number of rounds converges to $1$ exponentially fast. 
\end{remark} 

\noindent \textbf{Key Metrics versus Test Accuracy.}
To investigate how the  key metrics affect the test accuracy, we measure the test accuracy of the six schemes in the two settings, the IID and the non-IID settings. Our results are demonstrated in Figure \ref{fig:CIFAR10_accuracy}. 
We now make the following key observations.

\begin{itemize}[noitemsep,topsep=0pt,parsep=1pt,partopsep=2pt,leftmargin=*]
    %\vspace{-0.1 cm}
    \item In the IID setting, \namespace has a test accuracy that is comparable to the random selection and random weighted selection schemes while the \namespace schemes provide higher levels of privacy. 
    % as demonstrated in Fig.~\ref{fig:priavcy}.
    Specifically, the \namespace schemes achieve $T=3,4,6$ while the random selection and random weighted selection schemes have $T=1$.
    \item In the non-IID setting, \namespace not only outperforms the random selection scheme but also achieves a smaller aggregation fairness gap as shown in Fig.~\ref{fig:fairness_gap}.
    % \item \textbf{Accuracy versus Aggregation Cardinality.} 
    \item In both IID and non-IID settings, the user partitioning scheme has the worst accuracy as its average aggregation cardinality is much smaller than the other schemes as demonstrated in Fig.~\ref{fig:aggr_cardinality}. 
\end{itemize}
We provide additional experiments in App. \ref{app:exp_mnist} and  App. \ref{app:exp_ablation}.
%\cite[App. E and G]{so2021securing}.
% We further implement more experiments on the CIFAR-10 dataset and make the similar observations, which are provided in Appendix \ref{app:exp_cifar10}.

\section{Conclusion}
\label{sec:Conclusion}
Partial user participation may breach user privacy in federated learning, even if secure aggregation is employed at every training round. 
To address this challenge, we introduced the notion of long-term privacy, which ensures that the privacy of individual models are protected over all training rounds. We developed \name, a structured user selection strategy that guarantees long-term privacy while taking into account the fairness in user selection and average number of participating users, and showed that \namespace provides a trade-off between long-term privacy and  the convergence rate. 
Our experiments on the CIFAR-100, CIFAR-$10$, and MNIST datasets on both the IID and non-IID settings show that \namespace achieves comparable accuracy to the random selection strategy (which does not ensure long-term privacy), while ensuring long-term privacy guarantees. 

 \section*{Acknowledgments}
 This material is based upon work supported by NSF grant CCF-1763673, NSF grant CCF-1909499, NSF career award CCF-2144927, ARO award W911NF1810400, ONR Award No. N00014-16-1-2189,  OUSD (R\&E)/RT\&L cooperative agreement W911NF-20-2-0267, UC Regents faculty award and gifts from Intel, Cisco, and Qualcomm. 
 %\balance
\bibliography{aaai23}

\onecolumn

%\iffalse
\appendix

\noindent \textbf{Organization.} These appendices are organized as follows. 
\begin{enumerate}[label=(\Alph*)]
\item In Appendix \ref{app:privacy_proof}, we prove Theorem \ref{thm:multi-roundsecagg-guarantees}. 
\item In Appendix \ref{app:convergence_proof}, we prove Theorem \ref{thm:convergence}. \item In Appendix \ref{app:converse}, we show that batch partitioning is necessary to satisfy the multi-round privacy definition given in (\ref{eq:def_privacy}).
\item In Appendix \ref{app:algorithm}, we provide the two components of \namespace which are Algorithm \ref{alg:batch-partitioning} and Algorithm \ref{alg:proposed}. 
\item Appendix \ref{app:exp_mnist} provides additional experiments on the MNIST dataset. 
\item Appendix \ref{app:hyperparameters} provides additional details and the hyperparameters of the experiments of Section \ref{sec:Experiments} and Appendix \ref{app:exp_mnist}.
\item Appendix \ref{app:exp_ablation} provides additional experiments with various system parameters.

\item In Appendix \ref{app:random-selection}, we theoretically show that the random selection strategy discussed in Remark \ref{remark:random-selection-privacy} that aims to select $K$ available users at each round and the random selection strategy that selects the users in i.i.d fashion both have a multi-round privacy $T=1$ with high probability. We also empirically demonstrate that the local models can be reconstructed accurately when random selection is used.
%\item Finally, in Appendix \ref{App_New-Convergence}, we consider the convergence rate of the general convex and the non-convex cases. 
\end{enumerate}

\textcolor{black}{
We list the notations in Table \ref{tbl:notations}.
}
\begin{table}[h!]
\caption{\color{black}Notations in the paper. }
\label{tbl:notations}
\begin{center}
\begin{tabular}{cc}
\toprule
Notations                    & Description \\
\midrule
$N$   & total number of users \\
$K$   & number of users selected at each iteration \\
$J$   & total number of iterations\\
$E$   & number of local iterations in each user\\
$d$   & dimension of model\\
$\mathbfsl{x}^{(t)}$  & global model at iteration $t$, $\mathbfsl{x}^{(t)}\in \mathbb{R}^d$\\
$\mathbfsl{x}^{(t)}_i$  & local model of user $i$ at iteration $t$, $\mathbfsl{x}^{(t)}_i \in \mathbb{R}^d $\\
$\mathbf{X}^{(t)}$      & concatenation of the weighted local models at iteration $t$, $\mathbf{X}^{(t)} \in \mathbb{R}^{N\times d}$\\
$\mathbfsl{p}^{(t)}$  & participation vector at iteration $t$, $\mathbfsl{p}^{(t)} \in \{0,1\}^{N}$  \\
$\mathbf{P}^{(t)}$  & participation matrix, $\mathbf{P}^{(t)} \in \{0,1\}^{t\times N}$  \\
$T$  & multi-round privacy guarantee  \\
$F$  & aggregation fairness gap  \\
$C$  & average aggregation cardinality  \\
$\mathbf{B}$  & privacy-preserving family, $\mathbf{B} \in \{0,1\}^{R_{BP}\times N}$  \\
$R_{BP}$  & the size of the privacy-preserving family of sets  \\
$\mathcal{U}^{(t)}$  & set of available users at iteration $t$  \\
$p_i$ & dropout probability of user $i$ \\
$f_i^{(t)}$ & frequency of participation of user $i$ at round $t$ \\
\bottomrule
\end{tabular}
\end{center}
\end{table}

\section{Theoretical Guarantees of \name: Proof of Theorem \ref{thm:multi-roundsecagg-guarantees}}
\label{app:privacy_proof}
In this appendix, we provide the proof of Theorem \ref{thm:multi-roundsecagg-guarantees}.
\begin{proof}
\begin{enumerate}[leftmargin=0.6cm]
\item First, we prove that \namespace ensures a multi-round privacy of $T$.  We first partition the matrix $\mathbf B$ into $R \times T$ matrices as $\mathbf B=[\mathbf B^{(1)}, \mathbf B^{(2)}, \cdots, \mathbf B^{(N/T)}]$ and the aggregated models as $\mathbf X=[ {{\mathbf X}^{(1)}}^\top, {{\mathbf X}^{(2)}}^\top, \cdots,  {{\mathbf X}^{(N/T)}}^\top]^\top$. We can then express any linear combination of the aggregated models $\mathbf X^\top \mathbf{B}^\top \mathbfsl{z}$, where $\mathbfsl z \in \mathbb R^R \setminus \{\mathbf 0\}$, as follows 
\begin{align}
 \mathbf X^\top \mathbf{B}^\top \mathbfsl{z}= \sum\limits_{i=1}^{N/T}  {\mathbf X^{(i)}}^\top {\mathbf{B}^{(i)}}^\top \mathbfsl{z}.
\end{align}
Denote the $j$-th column of $\mathbf B^{(i)}$ by $\mathbf b^{(i)}_j$ which is either a zero vector or all ones vector due to the batch partitioning structure. That is, $\mathbf b^{(i)}_j \in \{\mathbf 0, \mathbf 1\}$. Hence, $ {\mathbf{B}^{(i)}}^\top \mathbfsl{z}  \in \{\mathbf 0, a_i . \mathbf 1\}$ for some $a_i \in \mathbb R \setminus \{0\}$. Therefore, we have  
\begin{equation}
{\mathbf X^{(i)}}^\top {\mathbf{B}^{(i)}}^\top \mathbfsl{z}  = 
\begin{cases}
\quad \mathbf 0 & {\mathbf{B}^{(i)}}^\top \mathbfsl z=0,\\
a_i \sum\limits_{j=(i-1)T+1}^{iT} \mathbfsl x_j &\text{otherwise,}
\end{cases}
\end{equation}
$\forall i \in [N/T]$, which shows that \namespace achieves a multi-round privacy $T$.

%     \item First, we prove that \namespace ensures a multi-round privacy of $T$. Specifically, we would like to show that 
% \begin{align}
% T=\argmin_{\mathbfsl{z}\in \mathbb{R}^R} 
%         \quad &\ \| \mathbfsl{z}^\top \mathbf{B} \|_0  \\
%          \textrm{s.t.}  \quad  &\mathbfsl{z}^\top \mathbf{B} \neq \mathbf 0 \notag.
% \end{align}
% We first partition the codebook matrix into $T \times N$ matrices as $\mathbf B=[\mathbf B^{(1)}, \mathbf B^{(2)}, \cdots, \mathbf B^{(K/T)}]$. We can then express $\| \mathbfsl{z}^\top \mathbf{B} \|_0$ as follows 
% \begin{align}
%     \| \mathbfsl{z}^\top \mathbf{B} \|_0=\sum\limits_{i=1}^{K/T} \| \mathbfsl{z}^\top \mathbf{B}^{(i)} \|_0,
% \end{align}
% where the $j$-th row of $\mathbf B^{(i)}$ denoted by $\mathbf b^{(i)}_j$ is either a zero vector or all ones vector. That is, $\mathbf b^{(i)}_j \in \{\mathbf 0, \mathbf 1\}$. Hence, we have $\mathbfsl{z}^\top \mathbf{B}_i \in \{\mathbf 0, a_i . \mathbf 1\}$ for some $a_i \in \mathbb R$. Therefore, we have  
% \begin{equation}
% \| \mathbfsl{z}^\top \mathbf{B}_i \|_0 = \begin{cases}
% 0 & \mathbfsl{z}^\top \mathbf{B}_i=0,\\
% T &\text{otherwise,}
% \end{cases}
% \end{equation}
% $\forall i \in [K/T]$.
% Since $\mathbfsl{z}^\top \mathbf{B} \neq \mathbf 0$, then $\exists i \in [K/T]$ such that $\| \mathbfsl{z}^\top \mathbf{B}_i \|_0 =T$ which concludes the proof. 

\item Next, we prove that \namespace has an aggregation fairness gap $F=0$.

% \jt{I provide a short proof below.Please feel free to edit.} 

It is clear that the total number of times user $i$ is being selected up to time $J$ is the same as that of user $j$ who lies in the same batch as user $i$. This follows since all users in the same batch either participate together or they do not participate at all. 

It suffices to show that the \emph{expected} number of selections of user $i$ up to time $J$ is the same as that of user $j$, where user $i$ and user $j$ are in different batches. The main observation is that our protocol is \emph{symmetric}. Indeed, the only randomness in the system are the user availability randomness and the set selection randomness when there are multiple user sets satisfying the requirements. We note that for any realization of random variables such that the batch of user $i$ is selected at time $t$, there is a corresponding realization of random variables such that the batch of user $j$ is selected at time $t$ and all other selections remain exactly the same. Hence, $F_i = F_j$ for any $i\neq j$.

% We denote the average frequency of the $i$-th user by
% \begin{align}
% \label{eqn:fairness-proof}
% F_i &\deff \frac{1}{J}\mathbb{E} \Big[ \sum_{t=0}^{J-1} \mathds{1}
%         \big\{ \{\mathbfsl{p}^{(t)}\}_i = 1 \big\} \Big]=\frac{1}{J}  \sum_{t=0}^{J-1} 
%         \Pr \left( \{\mathbfsl p^{(t)}\}_i = 1 \right).
% \end{align}
% We note that 
% \begin{align}
%     \Pr \left( \{\mathbfsl p^{(t)}\}_i = 1 \right)=(1-p)^T \Pr\left(i \in \textrm{supp}\left(\mathbfsl b_{r(t)}^{(t)}\right)\right),
% \end{align}
% as user $i$ is selected in round $t$ if all users in the same batch are available and also it is in the same row with the user with lowest frequency of participation denoted by $\mathbfsl b_r^{(t)}$.

% We next show that the average frequencies of any two users are the same. Specifically, we consider the two cases. 
% \begin{itemize}
% \item \textbf{Two users of the same batch.}  Since all users in the same batch either participate together or they do not participate at all, then the average frequencies of the  users of the same batch are equal.
% \item \textbf{Two users in different batches.} For two users $i$ and $j$ in different batches, we will show that $F_i=F_j$. That is, our goal is to show that
% \begin{align}
%  \frac{1}{J}  \sum_{t=0}^{J-1} 
%         \Pr\left(i \in \textrm{supp}\left(\mathbfsl b_{r(t)}^{(t)}\right)\right)-\frac{1}{J}  \sum_{t=0}^{J-1} 
%         \Pr\left(j \in \textrm{supp}\left(\mathbfsl b_{r(t)}^{(t)}\right)\right).
% \end{align}
% \end{itemize}

\item Finally, we characterize the average aggregation cardinality of \name.  The average aggregation cardinality can be expressed as follows 
\begin{align}
       \small
        &C=K \left(1-\Pr[\text{No row of} \ \mathbf B \ \text{is available}]\right) \notag \\
        &=K \left(1-\Pr[\text{At least} \ \frac{N}{T}-\frac{K}{T}+1  \ \text{batches are not available}] \right) \notag \\
        &=K\left( 1-\sum\limits_{i=N/T-K/T+1}^{N/T} \binom{N/T}{i} q^i (1-q)^{N/T-i} \right),
\end{align}
where $q$ is the probability that a certain batch is not available, which is given by $q=1-(1-p)^T$.
\end{enumerate}
\end{proof}

\section{Convergence Analysis of \namespace: Proof of Theorem \ref{thm:convergence}}
\label{app:convergence_proof}
The proof of Theorem~\ref{thm:convergence} is divided into two parts.
In the first part, we introduce a new sequence to represent the local updates in each user with respect to step index while we use the global round index $t$ for $\mathbfsl{x}^{(t)}$ in \eqref{eq:aggregation}. We carefully define the sequence and the step index, and then provide the convergence analysis of the sequence.
In the second part, we bridge the newly defined sequence and $\mathbfsl{x}^{(t)}$ in \eqref{eq:aggregation}, and provide convergence analysis of $\mathbfsl{x}^{(t)}$.

% To provide the proof of Theorem~\ref{thm:convergence}, we first introduce additional notations to represent the local updates at each user.
\noindent {\bf First Part (Convergence analysis of local model updates)}.

Let $\mathbfsl{w}^{(j)}_i$ be the local model updated by user $i$ at the $j$-th step. Note that this step index is different from the global round index $t$ in \eqref{eq:aggregation} as each user updates the local model by carrying out $E(\geq 1)$ local SGD steps before sending the results to the server. 
Let $\mathcal{I}_E$ be the set of global synchronization steps, i.e., $\mathcal{I}_E=\{nE|n=0,1,2,\ldots\}$.
Importantly, we define the step index $j$ such it increases from $nE$ to $nE+1$ only when the server does not skip the selection, i.e., there are at least $K$ available users at step $nE+1$ for $n\in\{0,1,2,\ldots\}$.
We denote by $\mathcal{H}_{nE}$ the set selected by \namespace at step index $nE$ and from the definition, $|\mathcal{H}_{nE}|=K$ for all $n\in\{0,1,2,\ldots\}$.
% 
% and then we can view $\mathbfsl{w}^{tE}_i=\mathbfsl{x}^{(t)}$ for any $t\in \{0,1,\ldots,\}$ where $\mathbfsl{x}^{(t)}$ is the global model with respect to global round index $t$ in \eqref{eq:aggregation}.
% 
Then, the update equation can be described as 
\begin{align}
    & \mathbfsl{v}^{j+1}_i = \mathbfsl{w}^{j}_i - \eta^j \nabla L_i \left( \mathbfsl{w}^{j}_i, {\xi}^{j}_i \right),\\
    & \mathbfsl{w}^{j+1}_i =
    \left\{
    \begin{array}{ll}
        \mathbfsl{v}^{j+1}_i & \text{if \quad } j+1 \in \mathcal{I}_E\\
        \frac{1}{K} \sum_{k\in\mathcal{H}_{j+1}} \mathbfsl{v}^{j+1}_k  & \text{if \quad } j+1 \notin \mathcal{I}_E
    \end{array},
    \right.
\end{align}
where we introduce an additional variable $\mathbfsl{v}^{j+1}_i$ to represent the immediate result of one step SGD from $\mathbfsl{w}^{j}_i$. We can view $\mathbfsl{w}^{j+1}_i$ as the model obtained after aggregation step (when $j+1$ is a global synchronization step).
% Here, we consider the first case that there is at least one available codebook such that $|\mathcal{S}_{j+1}|=K$ for all $j+1\in\mathcal{I}_E$, and prove that in this case, \namespace has the same convergence rate as Theorem 3 in \cite{li2019convergence}. Then, we will prove the convergence rate in the general case where the server skips the aggregation due to the dropout.
% 
Motivated by \cite{stich2018local,li2019convergence}, we define two virtual sequences
\begin{align}
    & \overline{\mathbfsl{v}}^{j} = \frac{1}{N}\sum_{i=1}^{N}{\mathbfsl{v}}^{j}_i, \\
    & \overline{\mathbfsl{w}}^{j} = \frac{1}{N}\sum_{i=1}^{N}{\mathbfsl{w}}^{j}_i. \label{eq:def_w_bar}
\end{align}
% $\overline{\mathbfsl{v}}^{j} = \frac{1}{N}\sum_{i=1}^{N}{\mathbfsl{v}}^{j}_i$ and $\overline{\mathbfsl{w}}^{j} = \frac{1}{N}\sum_{i=1}^{N}{\mathbfsl{w}}^{j}_i$ 
We can interpret $\overline{\mathbfsl{v}}^{j+1}$ as the result of single step SGD from $\overline{\mathbfsl{w}}^{j}$. When $j \notin \mathcal{I}_E$, both $\overline{\mathbfsl{v}}^{j}$ and $\overline{\mathbfsl{w}}^{j}$ are not accessible. 
% When $j \in \mathcal{I}_E$, 
We also define $\overline{\mathbfsl{g}}^{j}=\frac{1}{N}\sum_{i=1}^{N} \nabla L_i\left( \mathbfsl{w}^j_i\right)$ and $\mathbfsl{g}^j = \frac{1}{N}\sum_{i=1}^{N} \nabla L_i\left( \mathbfsl{w}^j_i, \xi^j_i\right)$. Then, $\overline{\mathbfsl{v}}^{j+1} = \overline{\mathbfsl{w}}^{j}-\eta^j \mathbfsl{g}^j$.

Now, we state our two key lemmas.
\begin{lemma}[Unbiased selection]\label{lem:unbiased_selection} When $j+1\in\mathcal{I}_E$, the following is satisfied,
    \begin{equation}
        \mathbb E_{\mathcal{H}_{j+1}} [ \overline{\mathbfsl{w}}^{j+1} ] = \overline{\mathbfsl{v}}^{j+1}.
        % \mathbb E_{\mathcal{H}_{j+1}} \left[ \overline{\mathbfsl{w}}^{j} \right]
    \end{equation}
\end{lemma}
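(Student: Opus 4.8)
The plan is to exploit two facts about an aggregation (synchronization) step $j+1 \in \mathcal{I}_E$: first, the aggregated model is broadcast back to every user, so all the local iterates $\mathbfsl{w}^{j+1}_i$ coincide and equal the average of the selected users' post-SGD models; second, under equal dropout the set selection of \namespace includes each user with the same marginal probability $K/N$, which is exactly what makes the sampling unbiased in the first moment. Combining these gives the claim directly.

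Concretely, for $j+1 \in \mathcal{I}_E$ the server aggregates the post-SGD models $\mathbfsl{v}^{j+1}_k$ of the $K$ selected users and resets every local model to this aggregate, so $\mathbfsl{w}^{j+1}_i = \frac{1}{K}\sum_{k \in \mathcal{H}_{j+1}} \mathbfsl{v}^{j+1}_k$ for every $i \in [N]$. Since this value does not depend on $i$, averaging over $i$ leaves it unchanged and $\overline{\mathbfsl{w}}^{j+1} = \frac{1}{K}\sum_{k \in \mathcal{H}_{j+1}} \mathbfsl{v}^{j+1}_k = \frac{1}{K}\sum_{k=1}^{N} \mathds{1}\{k \in \mathcal{H}_{j+1}\}\, \mathbfsl{v}^{j+1}_k$. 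The selection $\mathcal{H}_{j+1}$ depends only on the user availability and an independent uniform draw among the available rows of $\mathbf{B}$, hence is independent of the model values, so given the history up to step $j+1$ the vectors $\mathbfsl{v}^{j+1}_k$ are fixed and linearity of expectation yields $\mathbb{E}_{\mathcal{H}_{j+1}}[\overline{\mathbfsl{w}}^{j+1}] = \frac{1}{K}\sum_{k=1}^{N}\Pr[k \in \mathcal{H}_{j+1}]\,\mathbfsl{v}^{j+1}_k$.

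It then remains to show $\Pr[k \in \mathcal{H}_{j+1}] = K/N$ for every $k$. Here I would invoke the symmetry of \namespace under equal dropout: the step index advances only when at least $K/T$ batches are available, and conditioned on that event the server selects a batch-set uniformly among the available rows of $\mathbf{B}$. Because all users share the same dropout probability $p$, the joint law of availability and selection is invariant under any permutation of the $N/T$ batches, so every batch is selected with the same marginal probability; as exactly $K/T$ of the $N/T$ batches are chosen whenever the step occurs, each batch, and hence each of its $T$ users, is selected with probability $(K/T)/(N/T) = K/N$. Substituting $\Pr[k \in \mathcal{H}_{j+1}] = K/N$ gives $\mathbb{E}_{\mathcal{H}_{j+1}}[\overline{\mathbfsl{w}}^{j+1}] = \frac{1}{N}\sum_{k=1}^{N}\mathbfsl{v}^{j+1}_k = \overline{\mathbfsl{v}}^{j+1}$, which is the assertion.

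The main obstacle is precisely this marginal-inclusion computation: unlike plain uniform $K$-subset sampling, \namespace draws only batch-respecting sets, so $\mathcal{H}_{j+1}$ is very far from uniform over all $\binom{N}{K}$ subsets, and one must argue that first-order (marginal) uniformity nonetheless survives. The cleanest route is the permutation-symmetry argument above, which avoids computing the full selection distribution. The one point requiring care is that conditioning on the non-skip event preserves the symmetry; this holds because batch availabilities are i.i.d.\ across the symmetric batches, so the conditioning event and the resulting conditional selection law are themselves invariant under batch permutations.
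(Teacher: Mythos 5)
Your proof is correct and takes essentially the same route as the paper's: write $\overline{\mathbfsl{w}}^{j+1}$ as the average of the selected users' $\mathbfsl{v}^{j+1}_k$, apply linearity of expectation, and invoke symmetry (equal dropout plus uniform selection among available rows of $\mathbf B$) to conclude each user has the same marginal selection probability. The only difference is cosmetic—the paper states the symmetry as $\Pr[i_k = j] = 1/N$ for each selected position while you compute the inclusion probability $\Pr[k \in \mathcal{H}_{j+1}] = K/N$, which are equivalent—and your explicit permutation-invariance justification (including why conditioning on the non-skip event preserves exchangeability of the batches) is in fact more careful than the paper's one-line appeal to symmetry.
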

\begin{proof}
    Let $\mathcal{H}_{j+1}=\{i_1,\ldots,i_K\}$. Then, we have
    \begin{align}
        \mathbb E_{\mathcal{H}_{j+1}} [ \overline{\mathbfsl{w}}^{j+1} ]
        = \frac{1}{K} E_{\mathcal{H}_{j+1}} \left[ \sum_{k\in\mathcal{H}_{j+1}} \mathbfsl{v}^{j+1}_k \right] \notag 
        = \frac{1}{K} E_{\mathcal{H}_{j+1}} \left[ \sum_{k=1}^K \mathbfsl{v}^{j+1}_{i_k}\right] \notag 
        &= E_{\mathcal{H}_{j+1}} [\mathbfsl{v}^{j+1}_{i_k}] \notag\\
        &= \sum_{k=1}^{N} \frac{1}{N} \mathbfsl{v}^{j+1}_{k} = \overline{\mathbfsl{v}}^{j+1} \label{eq:lemma1_proof}
    \end{align}
    where \eqref{eq:lemma1_proof} follows as $\Pr[i_k=j]=\frac{1}{N}$ for $i\in[N]$. This is because the sampling probability of each user is identical due to the symmetry in the construction and the fact that all users have the same dropout probability.
\end{proof}

% \begin{lemma}[Results of one step SGD]\label{lem:onestep_SGD} 
%     Assume Assumption~\ref{assumpt:1} to \ref{assumpt:4}. We have,
%     \begin{equation}
%         \lVert \overline{\mathbfsl{v}}^{j+1} - \mathbfsl{w}^{*} \rVert^2 
%         \leq (1-\eta^{j} \mu) \lVert \overline{\mathbfsl{w}}^{j} - \mathbfsl{w}^{*} \rVert^2 
%         + \alpha {\eta^{j}}^2,
%         where 
%     \end{equation}
% \end{lemma}
% \begin{proof}

% \end{proof}

Now, we provide the convergence analysis of the sequence $\overline{\mathbfsl{w}}^{j}$ defined in \eqref{eq:def_w_bar}. We have,
\begin{align}
    \lVert \overline{\mathbfsl{w}}^{j+1} - \mathbfsl{w}^{*} \rVert^2
    &= \lVert \overline{\mathbfsl{w}}^{j+1} - \overline{\mathbfsl{v}}^{j+1} + \overline{\mathbfsl{v}}^{j+1} - \mathbfsl{w}^{*} \rVert^2 \notag \\
    &= \lVert \overline{\mathbfsl{w}}^{j+1} - \overline{\mathbfsl{v}}^{j+1} \rVert^2
        + \lVert \overline{\mathbfsl{v}}^{j+1} - \mathbfsl{w}^{*} \rVert^2
        + 2 \left( \overline{\mathbfsl{w}}^{j+1} - \overline{\mathbfsl{v}}^{j+1} \right)^{\top} \left( \overline{\mathbfsl{v}}^{j+1} - \mathbfsl{w}^{*} \right). \label{eq:key_equality}
\end{align}
When the expectation is taken over $\mathcal{H}_{j+1}$, the last term in \eqref{eq:key_equality} becomes zero due to Lemma \ref{lem:unbiased_selection}.
% For the second term in \eqref{eq:key_equality}, by plugging $p_i = \frac{1}{N}$ in Lemma $1,2,3$ of \cite{li2019convergence}, we have 
For the second term in \eqref{eq:key_equality}, we have 
\begin{equation}\label{eq:second_term}
    \lVert \overline{\mathbfsl{v}}^{j+1} - \mathbfsl{w}^{*} \rVert^2 
    \leq (1-\eta^{j} \mu) \lVert \overline{\mathbfsl{w}}^{j} - \mathbfsl{w}^{*} \rVert^2 
    + \alpha ({\eta^{j}})^2,
\end{equation}
where $\alpha=\frac{1}{N}\sum_{i=1}^{N}\sigma_i^2 + 6\rho\Gamma + 8(E-1)^2G^2$ and \eqref{eq:second_term} directly follows from Lemma $1,2,3$ of \cite{li2019convergence}.
The first term in \eqref{eq:key_equality} becomes zero if $j+1\in\mathcal{I}_E$, and if $j+1\notin\mathcal{I}_E$, from Lemma $5$ of \cite{li2019convergence}, it is bounded by 
\begin{equation}\label{eq:third_term}
    \mathbb{E}_{\mathcal{H}_{j+1}} \lVert \overline{\mathbfsl{w}}^{j+1} - \overline{\mathbfsl{v}}^{j+1} \rVert^2 \leq \beta ({\eta^{j}})^2,
\end{equation}
where $\beta=\frac{4(N-K)E^2G^2}{K(N-1)}$. By combining \eqref{eq:key_equality} to \eqref{eq:third_term}, we have
\begin{equation}
    \mathbb{E} \lVert \overline{\mathbfsl{w}}^{j+1} - \mathbfsl{w}^{*} \rVert^2
    \leq (1-\eta^{j} \mu) \lVert \overline{\mathbfsl{w}}^{j} - \mathbfsl{w}^{*} \rVert^2 
    + (\alpha+\beta)({\eta^{j}})^2.
\end{equation}
% 
% Then by utilizing the same techniques ({\color{red} Jinhyun: Detailed steps will be updated.}) in the proof of Theorem 3 in \cite{li2019convergence}, we can show that
Then by utilizing the similar induction in \cite{li2019convergence}, we can show that
\begin{equation}\label{eq:induction}
    \mathbb{E} \lVert \overline{\mathbfsl{w}}^{j+1} - \mathbfsl{w}^{*} \rVert^2 \leq
    \frac{1}{\gamma + t -1}\left( \frac{4(\alpha+\beta)}{\mu^2} + \gamma \mathbb{E} \lVert \overline{\mathbfsl{w}}^{0} - \mathbfsl{w}^{*} \rVert^2 \right),
\end{equation}
where $\gamma=\max\left\{\frac{8\rho}{\mu},E\right\}$. By combining \eqref{eq:induction} with $\rho$-smoothness of the global loss function in \eqref{eq:objective_fnc}, we have
\begin{equation}\label{eq:conv_w_bar}
    \mathbb{E}[L(\overline{\mathbfsl{w}}^{I})]-L^{*} \leq \frac{\rho}{\gamma + I -1}
        \left( \frac{2(\alpha + \beta)}{\mu^2} + \frac{\gamma}{2} \mathbb{E} \lVert \overline{\mathbfsl{w}}^{0} - \mathbfsl{x}^{*}\rVert ^2 \right).
\end{equation}

\noindent {\bf Second Part (Convergence analysis of global model)}.

Now, we bridge the sequence $\overline{\mathbfsl{w}}^{T}$ and $\mathbfsl{x}^{(t)}$ in \eqref{eq:aggregation} to provide the convergence analysis of $\mathbfsl{x}^{(t)}$.
Since we define the step index $j$ such that $j$ increases from $nE$ to $nE+1$ only when the server does not skip the selection, we have
\begin{equation}\label{eq:bridge}
    \mathbb{E} [L(\mathbfsl{x}^{(J)})] = \mathbb{E} [L(\overline{\mathbfsl{w}}^{(JE \upphi)})]
\end{equation}
where $\upphi$ is the probability that there are at least $K$ available users at a certain synchronization step, and $\upphi=\frac{C}{K}$ due to the fact that $C=K\cdot \Pr[\text{at least one row of} \ \mathbf B \ \text{is available}]=K\upphi$.
By combining~\eqref{eq:conv_w_bar} and~\eqref{eq:bridge}, we have that, 
\begin{equation}
    \mathbb{E}[L(\mathbfsl{x}^{(J)})]-L^{*} \leq \frac{\rho}{\gamma + \frac{C}{K}EJ-1}
    \left( \frac{2(\alpha + \beta)}{\mu^2} + \frac{\gamma}{2} \mathbb{E} \lVert \mathbfsl{x}^{(0)} - \mathbfsl{x}^{*}\rVert ^2 \right),
    % \mathbb{E}[L(\mathbfsl{x}^{(J)})]-L^{*} \leq \frac{L}{\mu \left(\gamma + \frac{C}{K}J-1\right)}
    % \left( \frac{2(\alpha + \beta)}{\mu} + \frac{\mu \gamma}{2} \mathbb{E} \lVert \mathbfsl{x}^{(0)} - \mathbfsl{x}^{*}\rVert ^2 \right),
\end{equation}
which completes the proof.

% \section{What is the optimal trade-off between ``Multi-round Privacy'' and ``Aggregation Cardinality''?}
% \label{app:impossibility}
% \input{08-Impossibility-Proof}
\section{Necessity of Batch Partitioning (BP)}
In this appendix, we show that batch partitioning is necessary to satisfy the multi-round privacy guarantee of Equation (\ref{eq:def_privacy}) and our strategy is optimal in the sense that no other strategy can have more distinct user selection sets than our strategy. 
\begin{proof}
Consider any scheme which selects sets from an $R \times N$ matrix  $\mathbf V=[\mathbfsl v_1, \cdots, \mathbfsl v_N]^\top$. Denote the linear coefficients multiplying them by $z_i$, $i \in [R]$. Then, the $i$-th element of $\mathbf V^\top \mathbfsl z$ is given by
\begin{align}
\label{eqn:star}
\{\mathbf V^\top \mathbfsl z\}_i=\sum_{j \in \mathrm{supp}(\mathbfsl v_i)} z_i. 
\end{align}
We now claim that we can cluster the entries using equivalence of linear functions to groups, where each group must have a size of at least $T$ except for the group corresponding to the zero function. To show this, we choose each $z_i \stackrel{\text{i.i.d.}}{\sim} U[0,1]$, and the key observation is that if two entries have different linear functions then their final value after this assignment would be different with probability one. Since the scheme satisfies a multi-round privacy $T$, this  implies that for each non-zero linear function of the form of Equation (\ref{eqn:star}), there must be at least $T$ of them. If we group the entries according to the equivalence of linear functions, we get at most $N/T$ groups (ignoring the group of constant zero).

Then, we show that the total number of possible sets $R$ is upper-bounded by ${N/T \choose K/T}$. We observe that the total number of non-zero groups we can choose for each vector is at most $K/T$ due to the size of each group, so the total number of distinct vectors satisfying the weight requirement is at most
\begin{align}
\label{eqn:double-star}
   R \leq R_{\textrm{max}} \deff \binom{D}{E}, 
\end{align}
where $D \leq N/T$ is the total number of groups corresponding to the non-zero linear functions, and $E\leq K/T$ is the total number of groups we may select in each round. Next, we have
\begin{align}
R_{\textrm{max}} &= \binom{D}{E} \notag \\&\overset{(i)}{\leq} \binom{N/T}{E} \notag \\ &\overset{(ii)}{\leq}  \binom{N/T}{K/T}=R_{\textrm{BP}},
\end{align}
where $(i)$ follows since $\binom{D}{E}$ is monotonically increasing w.r.t $D$, and $(ii)$ follows as $\binom{D}{E}$ is monotonically increasing w.r.t $E$ if $E \leq D/2$. 
\end{proof}
\label{app:converse}

\section{The Two Components of \namespace: Algorithms \ref{alg:batch-partitioning} and \ref{alg:proposed}}~\label{app:algorithm}
We describe the two components of \namespace in detail in Algorithm \ref{alg:batch-partitioning} and Algorithm \ref{alg:proposed}.
\begin{algorithm}
	\caption{Batch Partitioning Privacy-preserving Family Generation} \label{alg:batch-partitioning}
	 \textbf{Input:} Number of users $N$, row weight $K$ and the desired multi-round privacy guarantee $T$. \\
 \textbf{Output:} Privacy-preserving Family $\mathbf B \in \{0,1\}^{R_{\textrm{BP}} \times N}$, where $R_{\textrm{BP}}=\binom{N/T}{K/T}$ \\
 \textbf{Initialization:} $\mathbf{B}=\mathbf{0}_{R_{\textrm{BP}} \times N}$.
	\begin{algorithmic}[1]
	    \State Partition index sets $\{1,2,\ldots,N\}$ into $\frac{N}{T}$ sets, $\mathcal{G}_1,\ldots,\mathcal{G}_{\frac{N}{T}}$, where $|\mathcal{G}_i|=T$ for all $i\in[\frac{N}{T}]$.
	    \State Generate all possible sets each of which is union of $\frac{K}{T}$ sets out of $\frac{N}{T}$ sets ($\mathcal{G}_1,\ldots,\mathcal{G}_{\frac{N}{T}}$) without replacement. Denote the generated sets by $\mathcal{L}_1,\ldots,\mathcal{L}_{R_{\textrm{BP}}}$.
		\For {$i=1,2,\ldots, R_{\textrm{BP}}$}
		    \For{$j=1,2,\ldots,N$}
		        \If{$j\in\mathcal{L}_i$}  
		             $\{\mathbfsl{b}_i\}_j=1$ 
		     %   \Else 
		      %      \State $\{\mathbfsl{b}_i\}_j=0$ 
		        \EndIf
		    \EndFor
		%  \State $\mathbf B=[\mathbf B^\top,\mathbfsl b_i^\top]^\top.$
		\EndFor
	\end{algorithmic} 
\end{algorithm}

\begin{algorithm}
% \small
	\caption{Available Batch Selection} \label{alg:proposed}
% 	\caption{\namespace} \label{alg:proposed}
	 \textbf{Input:} A family of sets $\mathbf B$, set of available users $\mathcal{U}^{(t)}$, the frequency of participation vector $\mathbfsl f^{(t-1)}$, and the selection mode $\lambda$. \Comment{$\lambda=0$ when $p_i=p, \forall i \in [N]$ and $1$ otherwise} \\
 \textbf{Output:} A participation vector $\mathbfsl p^{(t)}$. \\
 \textbf{Initialization:} $\mathbf{B}^{(t)}=[ \ ], \ell^{(t-1)}_{\textrm{min}}\coloneqq \argmin_{i \in \mathcal U^{(t)}} f^{(t-1)}_i$.e
	\begin{algorithmic}[1]
		\For {$i=1,2,\ldots, R_{\textrm{BP}}$}
          	\If {$\mathrm{supp(\mathbfsl b_i)} \subseteq \mathcal{U}^{(t)}$}
		       $\mathbf {B}^{(t)}=[\mathbf {B^{(t)}}^\top, \mathbfsl b_i]^\top$.
		    \EndIf			
		\EndFor
         \If {$\mathbf {B}^{(t)}=[ \ ]$}
		      \State $\mathbfsl b_{r(t)}^{(t)} =\mathbf 0$.
	     \ElsIf{$\lambda=0$} \Comment{Uniform selection}
	        \State Select a row from $\mathbf{B}^{(t)}$, $\mathbfsl b_{r(t)}^{(t)}$, uniformly at random. 
	       % \State $\mathbfsl p^{(t)}=\mathbfsl b_{r(t)}^{(t)}$.
		 \Else \Comment{Fairness-aware selection}
		  \State Select a row from $\mathbf{B}^{(t)}$, $\mathbfsl b_{r(t)}^{(t)}$, uniformly at random from the rows that include $\ell^{(t-1)}_{\textrm{min}}$. 
		 \EndIf
	 \State $\mathbfsl p^{(t)}=\mathbfsl b_{r(t)}^{(t)}$.
	 \State Update $\mathbfsl f^{(t)}=\mathbfsl f^{(t-1)}+\mathbfsl p^{(t)}$
	\end{algorithmic} 
\end{algorithm}

\color{black}

\section{Additional Experiments: MNIST dataset} \label{app:exp_mnist}
\noindent\textbf{MNIST.}
To further investigate the performance of \name, we implement a simple CNN \cite{mcmahan2016communication} with two $5\times5$ convolution layers, a fully connected layer with ReLU activation, and a final Softmax output layer. This standard model has $1,\!663,\!370$ parameters and is sufficient for our needs, as our goal is to evaluate various schemes, not to achieve the best accuracy. 
We study the two settings for partitioning the MNIST dataset across the users.
\begin{itemize}[leftmargin=0.6cm]
    \item \textbf{IID Setting.} In this setting, the $60000$ training samples are shuffled and partitioned uniformly across the $N=120$ users, where each user receives $500$ samples. 
    % \vspace{-3pt}
    % 
    % 
    \item \textbf{Non-IID Setting.} In this setting, we first sort the dataset by the digit labels, partition the sorted dataset into $120$ shards of size $500$, and assign each of the  $120$ users one shard. 
    This is similar to the pathological non-IID partitioning setup proposed in \cite{mcmahan2016communication}, where our partition is an extreme case as each user has only one digit label while each user in \cite{mcmahan2016communication} has two. 
    % 
    % 
    % \vspace{-4pt}
\end{itemize}
% We study two setting for partitioning the CIFAR-10 dataset across the users.
% \begin{itemize}[leftmargin=0.6cm]
%     \item \textbf{IID Setting.} In this setting, the $50000$ training samples are shuffled and partitioned uniformly across the $N=120$ users, where each user receives $417$ or $416$ samples.
    
%     \item \textbf{Non-IID dataset.} In this setting, we utilize \emph{data-sharing strategy} proposed in \cite{zhao2018federated}, where the $50000$ training samples are divided into a globally shared dataset $\mathcal{G}$ and private dataset $\mathcal{D}$.
%     We set $|\mathcal{G}|=200$ and $|\mathcal{D}|=49800$. 
%     Then we sort $\mathcal{D}$ by the digit labels, partition the sorted dataset into $120$ shards of size $415$, and assign each of the $120$ users one shard.
%     Each user has $200$ samples of globally shared data and $415$ samples of one digit label. 
%     % first divide the 
%     % sort the dataset by the digit labels, partition the sorted dataset into $120$ shards of size $500$, and assign each of the  $120$ users one shard. 
%     % This is similar to the pathological non-IID partitioning setup proposed in \cite{mcmahan2016communication}, where our partition is an extreme case as each user has only one digit label while each user in \cite{mcmahan2016communication} has two. 
% \end{itemize}

\begin{figure*}[th!]%
    \centering
    \subfigure[IID data distribution.]{%
    \vspace{-0.2cm}
    \label{fig:MNIST_IID_accuracy}%
    \includegraphics[height=1.65in, trim=0.1cm 0.1cm 0.15cm 0.15cm, clip]{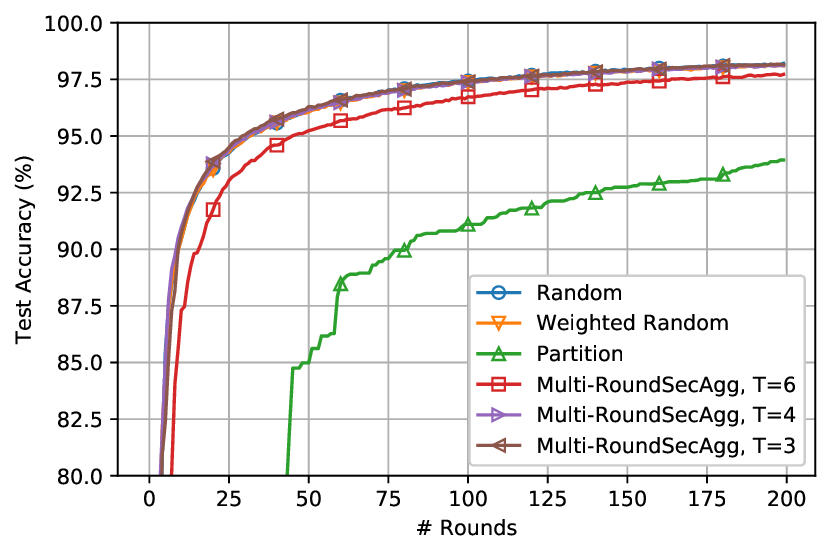}}%
    \;\;
    \subfigure[Non-IID data distribution.]{%
    \vspace{-0.2cm}
    \label{fig:MNIST_NonIID_accuracy}%
    \includegraphics[height=1.65in, trim=0.1cm 0.1cm 0.15cm 0.15cm, clip]{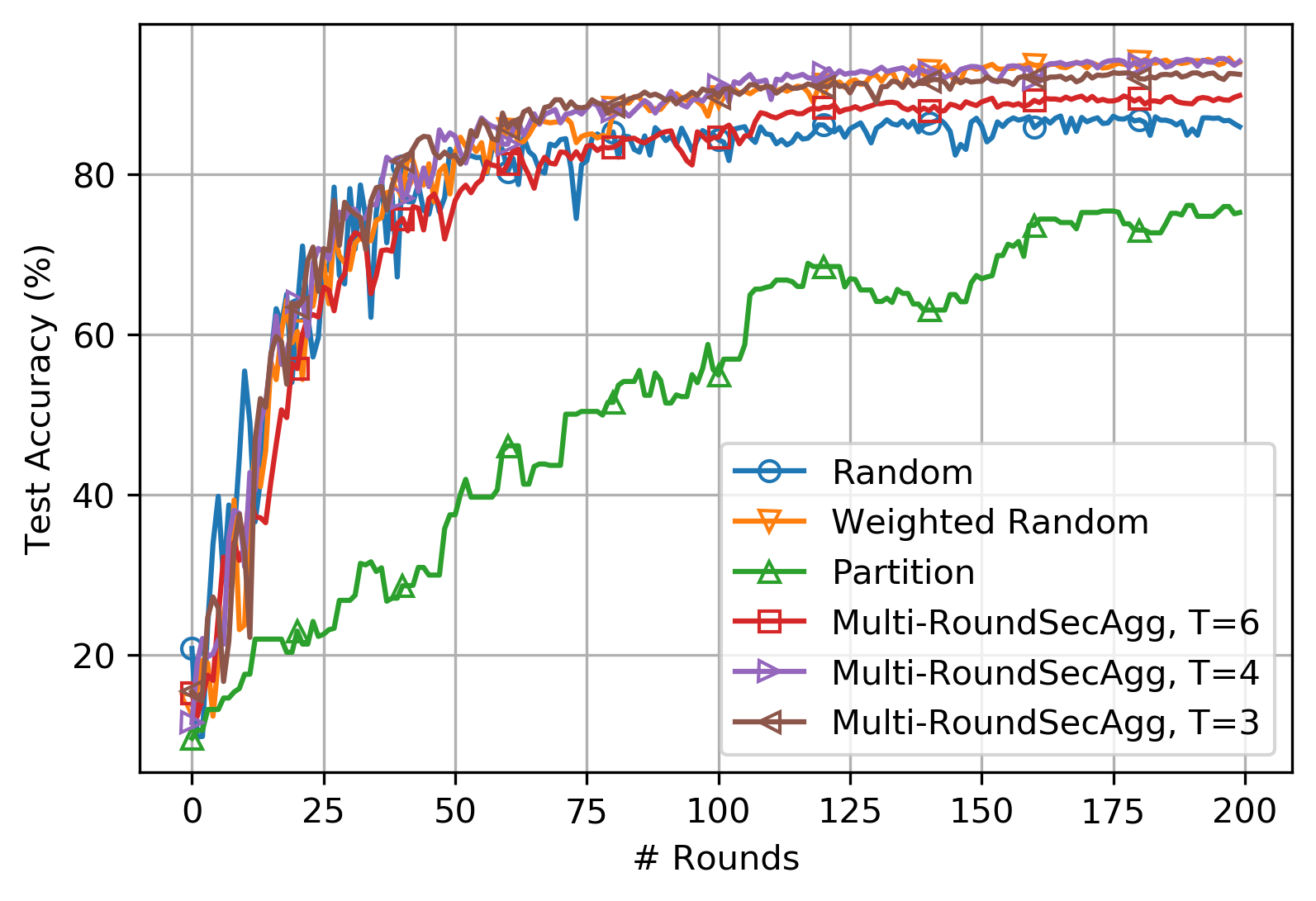}}%
    \vspace{-0.2cm}
    \caption{Training rounds versus test accuracy of CNN in \cite{mcmahan2016communication} on the MNIST with $N=120$ and $K=12$.}
\label{fig:MNIST_accuracy}
\vspace{-0.2cm}
\end{figure*}

\noindent\textbf{CIFAR-10} We also consider both IID and Non-IID distribution, and implement LeNet~\cite{lecun1999object} for both setting. While the state-of-the-art models \cite{kolesnikov2019big,tan2019efficientnet} achieve $99\%$ accuracy, LeNet is sufficient for our needs, as our goal is to evaluate various schemes, not to achieve the best accuracy.
\begin{itemize}[leftmargin=*]
    \item \textbf{IID Setting.} In this setting, the $50000$ training samples are shuffled and partitioned uniformly across the $N=120$ users, where each user receives $417$ or $416$ samples.
   % \vspace{-3pt}
    \item \textbf{Non-IID dataset.} In this setting, we utilize the \emph{data-sharing strategy} of \cite{zhao2018federated}, where the $50000$ training samples are divided into a globally shared dataset $\mathcal{G}$ and private dataset $\mathcal{D}$.
    We set $|\mathcal{G}|=200$ and $|\mathcal{D}|=49800$. 
    Then, we sort $\mathcal{D}$ by the labels, partition it into $120$ shards of size $415$, and assign each of the $120$ users one shard.
    Each user has $200$ samples of globally shared data and $415$ samples of private dataset with one label. 
\end{itemize}
% For both settings, we implement LeNet~\cite{lecun1999object}.
% 
% 
% To demonstrate the performance of \name with more complex model on challenging dataset, we implement a VGG11 network~\cite{simonyan2014very} which consists of $5$ convolutional layers and a single fully-connected layer with $9.28$ million parameters. Each convolutional layer is followed by normalization layer, activation layer, and max-pooling layer.
% We study the two settings for partitioning the CIFAR-100 dataset across the users.
% \begin{itemize}[leftmargin=0.6cm]
%     \item \textbf{IID Setting.} In this setting, the $50000$ training samples are shuffled and partitioned uniformly across the $N=120$ users, where each user receives $500$ samples. 

%     \item \textbf{Non-IID Setting.} In this setting, we distribute the dataset using a Dirichlet distribution~\cite{hsu2019measuring}, which samples $\mathbf{p}_c \sim \text{Dir}(\beta=0.5)$ and allocate a portion $p_{c,i}$ of the class $c$ to user $i$. The parameter $\beta$ controls the heterogeneity of the distributions at each user, where $\beta\rightarrow\infty$ results in IID setting. 
% \end{itemize} 

\begin{figure*}[th!]%
    \centering
    \subfigure[IID data distribution.]{%
    \vspace{-0.2cm}
    \label{fig:cifar100_IID_accuracy}%
    \includegraphics[height=1.65in, trim=0.1cm 0.1cm 0.15cm 0.15cm, clip]{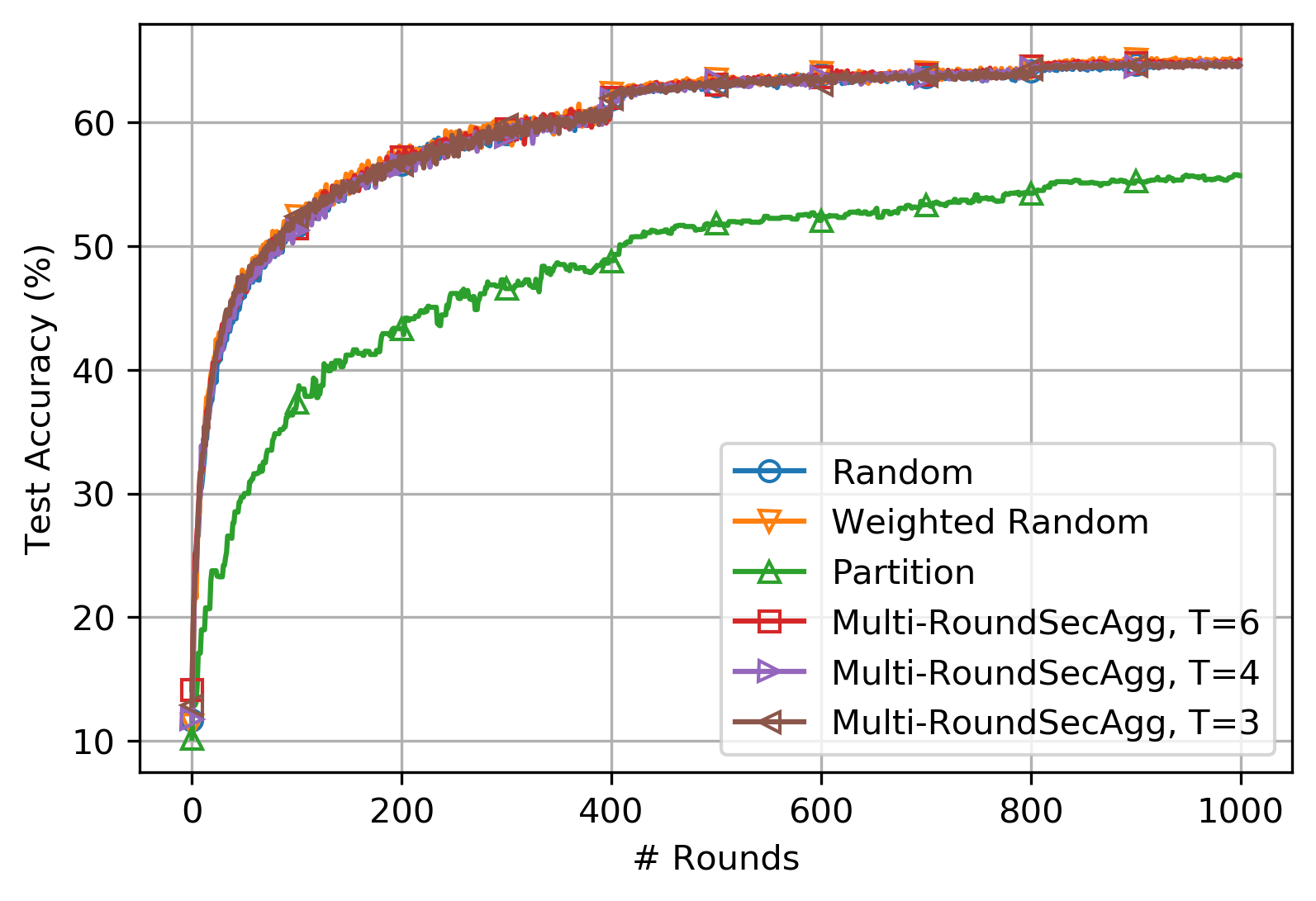}}%
    \;\;
    \subfigure[Non-IID data distribution.]{%
    \vspace{-0.2cm}
    \label{fig:cifar100_NonIID_beta0.5_accuracy}%
    \includegraphics[height=1.65in, trim=0.1cm 0.1cm 0.15cm 0.15cm, clip]{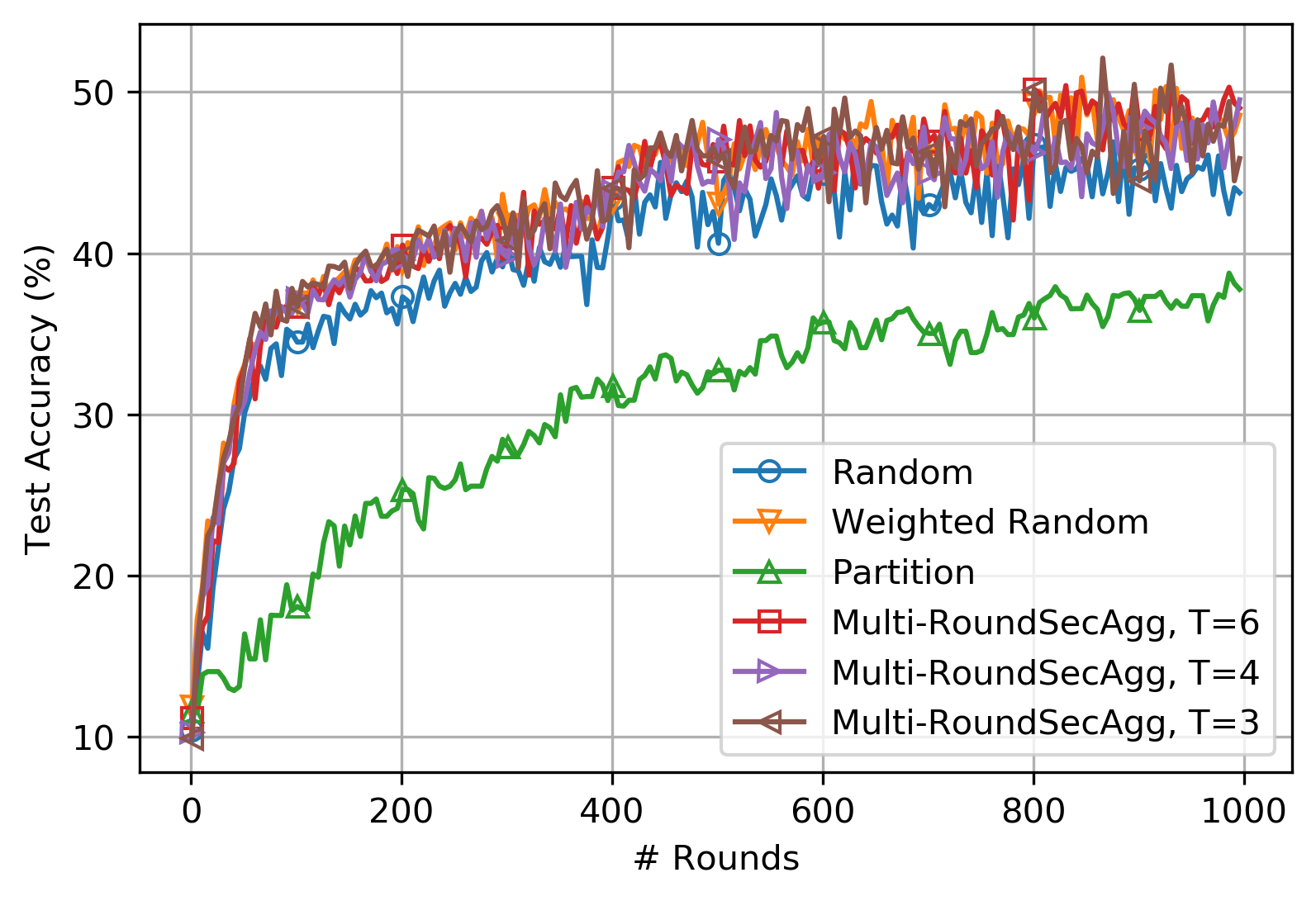}}%
    \vspace{-0.2cm}
    \caption{Training rounds versus test accuracy of LeNet in \cite{lecun1999object} on the CIFAR-10 with $N=120$ and $K=12$.}
\label{fig:cifar100_accuracy}
\vspace{-0.2cm}
\end{figure*}

We measure the test accuracy of the six schemes on the MNIST and CIFAR-10 dataset with the two distribution settings, the IID and the Non-IID.
Our results are demonstrated in Figure~\ref{fig:MNIST_accuracy} and Figure~\ref{fig:cifar100_accuracy}.
We make the following key observations, which are similar to the observations on the CIFAR-100 dataset.

\begin{itemize}[leftmargin=0.6cm]
    \item In the IID setting, the \namespace schemes show comparable test accuracy to the random selection and random weighted selection schemes while the \namespace schemes provide better multi-round privacy guarantee $T$. 
    % as demonstrated in Fig.~\ref{fig:priavcy}.
    % Specifically, the \namespace schemes achieve $T=3,4,6$ based on the privacy-preserving family design while the random selection and random weighted selection schemes have $T=1$, i.e., the server can learn an individual local model.
    
    \item In the non-IID setting, the \namespace schemes outperform the random selection scheme while showing comparable test accuracy to the weighted random selection scheme.
    This is because \namespace schemes have better aggregation fairness gaps as demonstrated in Figure \ref{fig:fairness_gap}, which results in  better test accuracy in the non-IID setting.
    
    % \item \textbf{Accuracy versus Aggregation Cardinality.} 
    \item In both IID and non-IID settings, the user partitioning scheme has the worst test accuracy as its average aggregation cardinality is much smaller than the other schemes. 
\end{itemize}

\section{Experiment Details} \label{app:hyperparameters}
In this section, we provide more details about the experiments of Section \ref{sec:Experiments} and Appendix \ref{app:exp_mnist}.

We summarize the test accuracy of CIFAR-$100$, CIFAR-$10$, and MNIST dataset in Table~\ref{tbl:accurac_CIFAR100}, Table~\ref{tbl:accurac_CIFAR10} and Table~\ref{tbl:accuracy_MNIST}, respectively. For all datasets, we run experiments five times with different random seeds and present the average value of the test accuracy in Table~\ref{tbl:accurac_CIFAR10} and Table~\ref{tbl:accuracy_MNIST}.

\begin{table}[t!]
\small
\caption{Test accuracy of VGG11 in \cite{simonyan2014very} on the CIFAR-$100$ dataset with $N=120$ and $K=12$. }
\label{tbl:accurac_CIFAR100}
% \vskip 0.15in
\begin{center}
% \begin{small}
% \begin{sc}
\begin{tabular}{cccc}
\toprule
Scheme                    & IID Setting  & Non-IID Setting\\
\midrule
Random selection          &  $49.15\%$ &  $44.32\%$ \\
Weighted random selection &  $50.06\%$ &  $47.11\%$ \\
User partition            &  $25.73\%$ &  $22.32\%$ \\
\name, T=6                &  $42.89\%$ &  $39.57\%$ \\
\name, T=4                &  $49.43\%$ &  $46.99\%$ \\
\name, T=3                &  $50.22\%$ &  $47.06\%$ \\
\bottomrule
\end{tabular}
% \end{sc}
% \end{small}
\end{center}
\end{table}

\begin{table}[t!]
\small
\caption{Test accuracy of LeNet in \cite{lecun1999object} on the CIFAR-$10$ dataset with $N=120$ and $K=12$. }
\label{tbl:accurac_CIFAR10}
% \vskip 0.15in
\begin{center}
% \begin{small}
% \begin{sc}
\begin{tabular}{cccc}
\toprule
Scheme                    & IID Setting  & Non-IID Setting\\
\midrule
Random selection          &  $64.64\%$ &  $45.20\%$ \\
Weighted random selection &  $65.06\%$ &  $47.89\%$ \\
User partition            &  $55.70\%$ &  $37.74\%$ \\
\name, T=6                &  $65.01\%$ &  $46.35\%$ \\
\name, T=4                &  $64.95\%$ &  $47.00\%$ \\
\name, T=3                &  $64.80\%$ &  $47.21\%$ \\
\bottomrule
\end{tabular}
% \end{sc}
% \end{small}
\end{center}
\end{table}

\begin{table}[t!]
\small
\caption{Test accuracy of the CNN in \cite{mcmahan2016communication} on the MNIST dataset with $N=120$ and $K=12$. }
\label{tbl:accuracy_MNIST}
% \vskip 0.15in
\begin{center}
% \begin{small}
% \begin{sc}
\begin{tabular}{cccc}
\toprule
Scheme                    & IID Setting  & Non-IID Setting\\
\midrule
Random selection          &  $98.21\%$ &  $85.79\%$ \\
Weighted random selection &  $98.10\%$ &  $94.04\%$ \\
User partition            &  $93.94\%$ &  $75.26\%$ \\
\name, T=6                &  $97.72\%$ &  $89.88\%$ \\
\name, T=4                &  $98.11\%$ &  $92.51\%$ \\
\name, T=3                &  $98.15\%$ &  $94.16\%$ \\
\bottomrule
\end{tabular}
% \end{sc}
% \end{small}
\end{center}
\end{table}

\noindent {\bf Hyperparameters and computing resources.} For a fair comparison between $6$ schemes, we find the best learning rate from $\{0.1, 0.03, 0.01, 0.003, 0.001, 0.0003, 0.0001\}$. 
Given the choice of the best learning rate $\eta$, $\eta$ is decayed to $0.4\eta$ every $400$ and $800$ rounds to train the LeNet on the CIFAR-$10$ dataset or train VGG11 on the CIFAR-100 dataset while $\eta$ is not decayed in the CNN on the MNIST dataset.
To train the LeNet on the CIFAR-$10$ dataset or train VGG11 on the CIFAR-100 dataset, we use the mini-batch size of $50$ and $E=1$ local epoch for both IID and Non-IID settings.
To train the CNN on the MNIST dataset, we use the mini-batch size of $100$ and $E=1$ local epoch for both IID and Non-IID settings.
All experiments are conducted with users equipped with $3.4$ GHz $4$ cores i-$7$ Intel CPU and NVIDIA Geforce $1080$, and the users communicate amongst each other through Ethernet to transfer the model parameters.

\section{Additional Experiments: Ablation Study} \label{app:exp_ablation}
In this Appendix, we further investigate the performance of \name with various settings of the system design parameters, the number of total users($N$), the number of selected users per round ($K$), and target multi-round privacy guarantee($T$). We use the same dropout model as Section \ref{sec:Experiments}, i.e., considering heterogeneous environments where users have different dropout probability among $\{0.1, 0.2, 0.3, 0.4, 0.5\}$.
We implement LeNet \cite{lecun1999object} for image classification for CIFAR-10 with IID distribution.

\begin{figure*}[th!]%
    \centering
    \subfigure[$(N,K)=(240,12)$]{%
    \vspace{-0.2cm}
    \label{fig:CIFAR10_IID_N240_K12}%
    \includegraphics[height=1.65in, trim=0.1cm 0.1cm 0.15cm 0.15cm, clip]{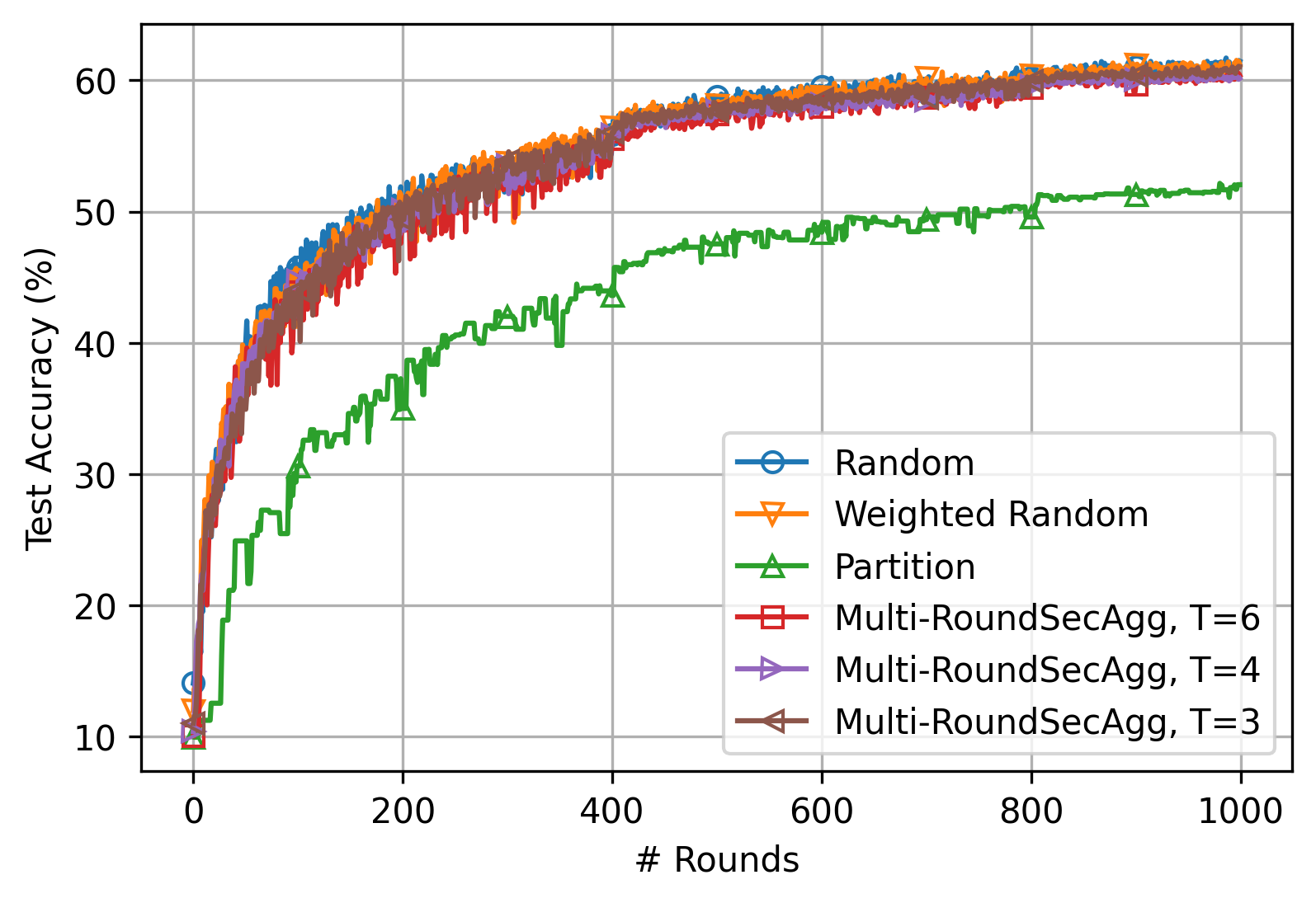}}%
    \;\;
    \subfigure[$(N,K)=(120,24)$]{%
    \vspace{-0.2cm}
    \label{fig:CIFAR10_IID_N120_K24}%
    \includegraphics[height=1.65in, trim=0.1cm 0.1cm 0.15cm 0.15cm, clip]{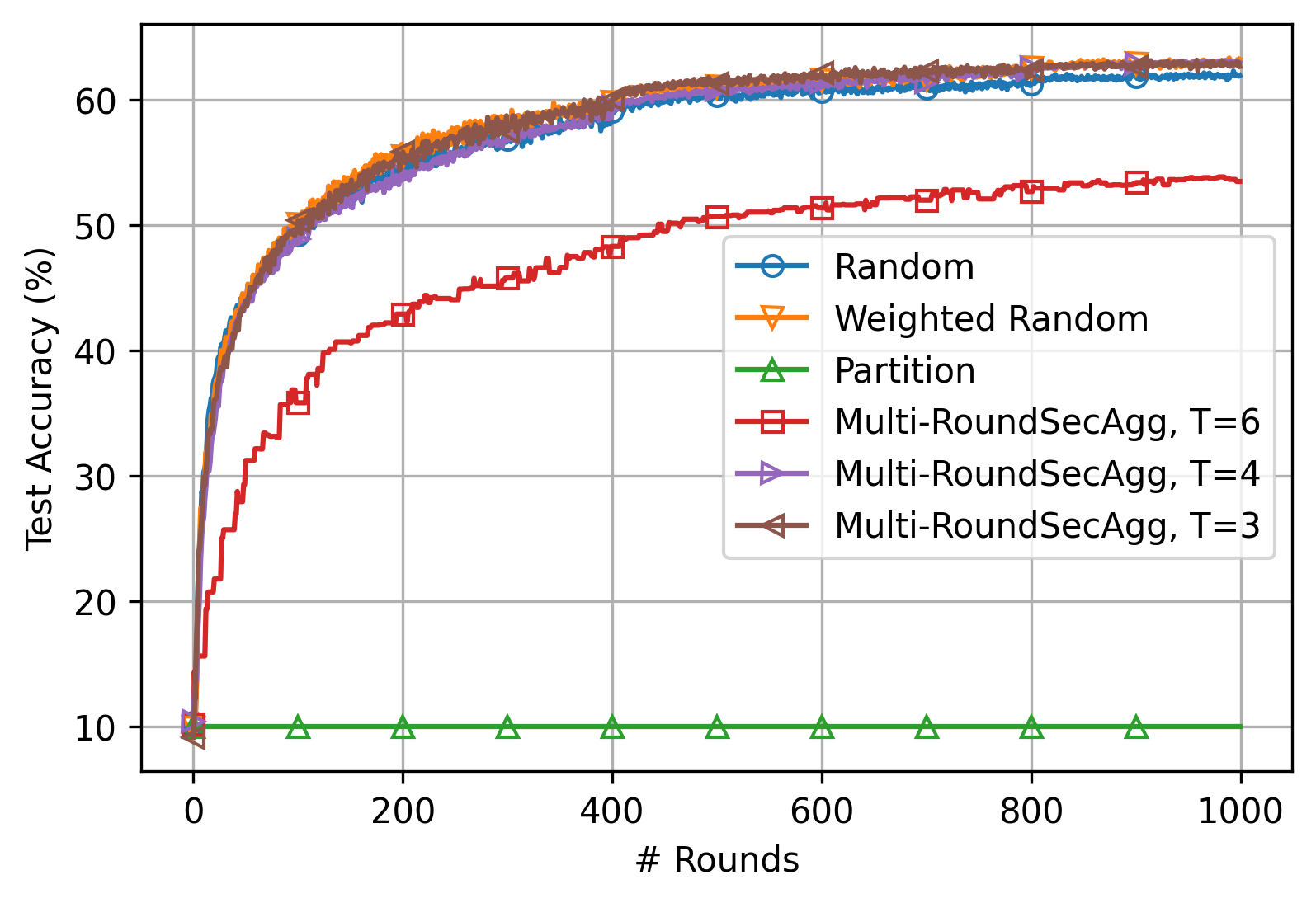}}%
    \vspace{-0.2cm}
    \caption{Training rounds versus test accuracy of LeNet \cite{lecun1999object} on the CIFAR-10 with various system parameters $(N,K,T)$.}
\label{fig:cifar10_ablation}
\vspace{-0.2cm}
\end{figure*}

Figure \ref{fig:CIFAR10_IID_N240_K12} and Figure \ref{fig:CIFAR10_IID_N120_K24} show the performance comparison with $(N,K)=(240,12)$ and $(N,K)=(120,24)$, respectively. 
Similar to Section \ref{sec:Experiments} and Appendix \ref{app:exp_mnist}, we can observe that \namespace schemes show comparable test accuracy to the random and weighted random selection schemes while the \namespace provide better multi-round privacy guarantee $T$, and the user partitioning scheme has the worst test accuracy as its average aggregation cardinality is much smaller than the other schemes. 
In particular, when $(N,K)=(120,24)$, the user partition scheme fails to train the model as the probability that all partitions are not available at each round becomes almost one. 
% This is because the size of partition (which is the same as $K$) increases and hence the probability that all users in a certain partition are surviving becomes zero.

\section{Multi-round Privacy Analysis of the Conventional Random User Selection Strategies}
\label{app:random-selection}
In this appendix, we first theoretically study the multi-round privacy  of two random user selection strategies, and show that they have a very weak multi-round privacy of $T=1$ with high probability (for the case where $p_i=p, \forall i \in [N]$).  Furthermore, we also provide additional experiments showing that the server can reconstruct the local updates of all users with high accuracy when a random selection strategy is used. In the theoretical analysis, to simplify the problem, we assume that the model of the users have converged and don't change from one round to the next. However, in the experiments, we empirically evaluate the error in approximating the individual models of the users (via least-squares error estimation), and show that the server can approximate individual updates with very small error.
% Finally, we also show empirically that the local models cannot be reconstructed when \namespace is used. 

\subsection{Theoretical Analysis of the Random Selection Strategies}
We start by our theoretical results, where we consider the following two random selection schemes.
\begin{enumerate}[leftmargin=0.6cm]
    \item \textbf{$K$-uniform Random Selection}. In this scheme, at round $t$, $K$ users are selected uniformly at random from the set of available users $\mathcal U^{(t)}$ if $|\mathcal U^{(t)}| \geq K$. Otherwise, the server skips this round. 
    \item \textbf{I.I.D Random Selection}. In this scheme, at round $t$, each  user is selected with probability $ \frac{K}{N(1-p)}$ independently from the other available users, where $K < N(1-p)$. Hence, the expected number of selected users at each round is $K$ user.
\end{enumerate}
For both schemes, we show that the server can reconstruct all individual models after $N$ rounds in the worst-case scenario (assuming that the models do not change over $N$ rounds). Specifically, we show that the participation matrices in both schemes have full rank with high probability after $N$ rounds. This, in turn, implies that the server can reconstruct all local models after $N$ rounds with high probability in both schemes.  We provide our results formally next in Theorem \ref{thm:random-selection-multi-round}.

\begin{theorem}\label{thm:random-selection-multi-round} (Random selection schemes have a multi-round privacy guarantee $T=1$).
\begin{enumerate}[leftmargin=0.6cm]
    \item Consider the $K$-uniform random selection scheme, where $\min(K,N-K) \geq cN$. In this scheme, the server can reconstruct all individual models of the $N$ users after $N$ rounds with probability at least
    \begin{align}
        1-2e^{-c'N},
    \end{align}
    for some constant $c'>0$ that depends on $c$.
    
    \item Consider the i.i.d random selection scheme, where the users are selected according to Bern($\frac{K}{N(1-p)}$) distribution and let $t=K/N$. In this scheme, the server can reconstruct the individual models of the $N$ users after $N$ rounds with probability at least 
    \begin{align}
    1 - 2N(1-t)^N - (1+o_N(1)) N(N-1)(t^2 + (1-t)^2)^N,
    \end{align}
    which converges to $1$ exponentially fast if $t \in (0, 1/2)$ is a fixed constant. 
%     In particular, if $t = 1/2$, then this probability is at least 
%     \begin{align}
%     1- (1/2 + o_N(1))^N.
%     \end{align}
\end{enumerate}
\end{theorem}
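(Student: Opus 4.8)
The plan is to reduce both statements to the nonsingularity of the $N\times N$ participation matrix $\mathbf P^{(N)}$ over $\mathbb R$. Under the worst-case assumption $\mathbf X^{(t)}=\mathbf X$, after $N$ (non-skipped) rounds the server observes the stacked aggregates $\mathbf P^{(N)}\mathbf X\in\mathbb R^{N\times d}$; if $\mathbf P^{(N)}$ is invertible the server recovers $\mathbf X$ and hence every individual model $\mathbfsl x_i$, whereas if $\det\mathbf P^{(N)}=0$ it generically cannot single them all out. Thus it suffices to lower bound $\Pr[\det\mathbf P^{(N)}\neq 0]$. I would first record that one cannot pass to $\mathbb F_2$: a dense $0$--$1$ matrix is singular over $\mathbb F_2$ with constant probability, so the entire argument must run over $\mathbb R$. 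For the distributional model, by the permutation symmetry of all $N$ users and the independence of the selection across rounds, in the $K$-uniform scheme each non-skipped row is an independent uniform weight-$K$ vector, while in the i.i.d.\ scheme each entry is an independent $\mathrm{Bern}(t)$ variable, since availability and selection compose to a marginal participation probability $(1-p)\cdot\frac{K}{N(1-p)}=\frac{K}{N}=t$.

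For Part 1 I would reveal the rows one at a time and bound the probability that the rank fails to increase. Conditioning on the first $N-1$ rows being independent, they span a hyperplane $H$ with unit normal $\mathbfsl n$, and singularity at the last step requires $\langle\mathbfsl r_N,\mathbfsl n\rangle=0$ for the fresh weight-$K$ row $\mathbfsl r_N$. The regime $\min(K,N-K)\ge cN$ is exactly what makes each row well spread, so a Littlewood--Offord / small-ball anti-concentration estimate gives $\Pr[\langle\mathbfsl r_N,\mathbfsl n\rangle=0]\le e^{-c'N}$ for \emph{incompressible} normals $\mathbfsl n$; the remaining \emph{compressible} (nearly sparse) normals are controlled by a separate $\varepsilon$-net union bound, which is also exponentially small in this dense regime. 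Summing the $N$ step-failure probabilities together with the lower-order-rank contribution yields the $2e^{-c'N}$ bound.

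For Part 2 the target expression already identifies the dominant singularity mechanisms, so I would compute them directly: a fixed row or column is identically zero with probability $(1-t)^N$, giving at most $2N(1-t)^N$ by a union bound over the $N$ rows and $N$ columns; two fixed rows (or columns) coincide with probability $\big(t^2+(1-t)^2\big)^N$, since each coordinate matches with probability $t^2+(1-t)^2$, giving at most $N(N-1)(t^2+(1-t)^2)^N$ over the $2\binom N2$ pairs. The $(1+o_N(1))$ factor is where I would invoke a sharp singularity result for sparse Bernoulli ensembles stating that, up to lower order, singularity is caused \emph{only} by a vanishing row/column or a coincident pair, so that every other linear dependence contributes $o\big(N(N-1)(t^2+(1-t)^2)^N\big)$. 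Finally, writing $t^2+(1-t)^2=1-2t(1-t)\in[\tfrac12,1)$ for $t\in(0,1)$ shows both correction terms decay geometrically, so for fixed $t\in(0,\tfrac12)$ the success probability tends to $1$ exponentially fast.

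The main obstacle in both parts is the genuine random-matrix input rather than the bookkeeping. In Part 1 it is the anti-concentration estimate in the dense regime together with the compressible/incompressible dichotomy (in the spirit of Rudelson--Vershynin), which is what converts ``rank-deficiency at each step'' into an exponentially small bound. In Part 2 it is the sharp asymptotic that simple dependencies dominate the singularity probability (in the spirit of Koml\'os and Tao--Vu, with sparse analogues), which is precisely what licenses the $(1+o_N(1))$ and is far more delicate than computing the explicit combinatorial terms. I would cite these results and verify that their hypotheses hold for the weight-$K$ and $\mathrm{Bern}(t)$ ensembles, respectively, rather than reprove them from scratch.
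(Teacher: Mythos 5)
Your proposal is correct and takes essentially the same route as the paper: reduce reconstruction to nonsingularity of the $N\times N$ participation matrix over $\mathbb{R}$, then invoke known random-matrix results (a full-rank bound for matrices with uniform weight-$K$ rows for Part 1, and sharp singularity asymptotics for sparse Bernoulli matrices for Part 2), which is exactly what the paper does by citing \cite{tran2020smallest} and \cite{jain2020singularity}. Your additional details (the marginal $\mathrm{Bern}(K/N)$ computation, the zero row/column and equal-pair union bounds, and the sketch of the anti-concentration machinery inside the cited results) are consistent with, and merely expand on, the paper's terse proof.
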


\begin{proof}
We first note that if the participation matrix has full rank after $N$ rounds, then the server  can reconstruct the model of each individual user. Hence, we analyze the probability of the $N \times N$ participation matrix being full rank. We now consider each scheme separately. 
\begin{enumerate}[leftmargin=0.6cm]
    \item In the $K$-uniform random selection scheme, the probability that the participation matrix after $N$ rounds $\mathbf P^{(N)}$ has full rank is lower-bounded as follows \cite{tran2020smallest}, when $\min(K,N-K) \geq cN$, 
\begin{align*}
    \Pr[\mathbf P^{(N)} \ \text{has full rank}] \geq 1-2e^{-c'N},
\end{align*}
for some constant $c'>0$ that depends on $c$. Hence, it follows that the server can reconstruct all individual models with probability at least $1-2e^{-c'N}$.

\item In the i.i.d random selection scheme, the probability that the participation matrix after $N$ rounds $\mathbf P^{(N)}$ has full rank is lower-bounded as follows \cite{jain2020singularity}
\begin{align*}
 \Pr[\mathbf P^{(N)} \ \text{has full rank}] \geq  1 - 2N(1-t)^N - (1+o_N(1)) N(N-1)(t^2 + (1-t)^2)^N,
\end{align*}
which converges to $1$ exponentially fast if $t=K/N \in (0, 1/2)$ is a fixed constant. Hence, it follows that the probability the server can reconstruct all individual models is lower-bounded by the same probability.  
% Moreover, when $p=1/2$, we have \cite{tikhomirov2020singularity}
% \begin{align*}
%     \Pr[\mathbf P^{(N)} \ \text{is full rank}] \geq 1- (1/2 + o_N(1))^N.
% \end{align*}
\end{enumerate}
\end{proof}

\begin{remark}\normalfont 
Our experimental results in Section \ref{sec:Experiments} also show that the multi-round privacy guarantee of the $K$-uniform random selection scheme goes to $1$ after almost $N$ rounds as shown in Fig. \ref{fig:priavcy}.
\end{remark}

\subsection{Experimental Results}
We now empirically evaluate the error in approximating the individual gradients of the users (via least-squares error estimation), and show that the server can approximate individual gradients of all users with a very small error when $K$-uniform random selection is used. To do so, we implement a reconstruction algorithm utilizing the least-squares method, and measure the $L_2$ distance between the true gradients and reconstructed gradients.
% We further aim to reconstruct the local models in \namespace scheme with various target multi-round privacy guarantees $T$ to figure out how the multi-round privacy guarantee affects the reconstruction performance by measuring the $L_2$ distance between the original models and reconstructed models.
% To do so, we measure the $l_2$ distance between the original model and reconstructed model. 
We consider a FL setting with $N=40$ users, where the server aims to choose $K=8$ users at every round, to train the LeNet in \cite{lecun1999object} on the CIFAR-10 dataset with Non-IID setting, which is the same as the setting in Appendix \ref{app:exp_mnist}.

Let $\mathbfsl{\delta}_i^{(t)}$ be the  gradient of user $i$ at round $t$, i.e., $\mathbfsl{\delta}_i^{(t)} = \mathbfsl{x}_i^{(t)} - \mathbfsl{x}^{(t)}$, 
and $\mathbfsl{\delta}^{(t)}$ be the global update at round $t$, i.e., $\mathbfsl{\delta}^{(t)} = \mathbfsl{x}^{(t+1)} - \mathbfsl{x}^{(t)} = {\mathbf{\Delta}^{(t)}}^{\top}_{\text{individual}} \mathbfsl{p}^{(t)}$ 
where $\mathbf{\Delta}^{(t)}_{\text{individual}} = \left[ w_1\mathbfsl{\delta}_1^{(t)}, \ldots,w_N\mathbfsl{\delta}_N^{(t)} \right]^{\top} \in \mathbb{R}^{N\times d}$.
%  $\Delta$
After a sufficiently large number of rounds $t_0$, the global model at the server converges and does not change much across the rounds, which results in that local updates also do not change much across the rounds. Then, we have
\begin{equation}
    \mathbf{\Delta}^{(t_0;t_1)}_{\text{global}} = \mathbf{P}^{(t_0;t_1)}
    \mathbf{\Delta}^{(t_0)}_{\text{individual}} + \mathbf{Z},
    % \mathbf{X}^{(t_0;t_1)}_{\text{global}} = \mathbf{P}^{(t_0;t_1)} \mathbf{X}^{(t_0)}_{\text{individual_avg}} + \mathbf{Z},
\end{equation}
where $\mathbf{\Delta}^{(t_0;t_1)}_{\text{global}}$ denotes the concatenate of the global updates from round $t_0$ to round $t_1 - 1$, i.e., $\mathbf{\Delta}^{(t_0;t_1)}_{\text{global}} = \left[ \mathbfsl{\delta}^{(t_0)}, \ldots \mathbfsl{\delta}^{(t_1-1)} \right]^{\top}\in \mathbb{R}^{(t_1-t_0)\times d}$ for $t_1 > t_0$,
$\mathbf{P}^{(t_0;t_1)} \in \{0,1\}^{(t_1-t_0)\times N}$ is the participation matrix from round $t_0$ to round $t_1-1$,
and $\mathbf{Z}$ denotes the perturbation (or noise) incurred by the local updates across the rounds. 

% \begin{equation}
%     \mathbf{X}^{(t_0;t_1)}_{\text{global}} = \mathbf{P}^{(t_0;t_1)}
%     \mathbf{X}^{(t_0)}_{\text{individual}} + \mathbf{Z},
%     % \mathbf{X}^{(t_0;t_1)}_{\text{global}} = \mathbf{P}^{(t_0;t_1)} \mathbf{X}^{(t_0)}_{\text{individual_avg}} + \mathbf{Z},
% \end{equation}
% where $\mathbf{X}^{(t_0;t_1)}_{\text{global}}$ denotes the concatenate of the global models from round $t_0$ to round $t_1 - 1$, i.e., $\mathbf{X}^{(t_0;t_1)}_{\text{global}} = \left[ \mathbfsl{x}^{(t_0)}, \ldots \mathbfsl{x}^{(t_1-1)} \right]^{\top}\in \mathbb{R}^{(t_1-t_0)\times d}$ for $t_1 > t_0$,
% $\mathbf{P}^{(t_0;t_1)} \in \{0,1\}^{(t_1-t_0)\times N}$ is the participation matrix from round $t_0$ to round $t_1-1$,
% $\mathbf{X}^{(t_0)}_{\text{individual}}$ denotes the concatenate of the individual models at round $t_0$, i.e., $\mathbf{X}^{(t_0)}_{\text{individual}} = \left[ \mathbfsl{x}^{(t_0)}_1, \ldots, \mathbfsl{x}^{(t_0)}_N \right]^{\top}\in \mathbb{R}^{N\times d}$, and $\mathbf{Z}$ denotes the perturbation (or noise) incurred by the local updates across the rounds. 

% When the $\mathbf{P}^{(t_0;t_1)}$ is full rank($=N$), 
The server can then estimate $\mathbf{\Delta}^{(t_0)}_{\text{individual}}$ by utilizing the least-squares method as follows
\begin{equation}\label{eq:recon_ind}
    \hat{\mathbf{\Delta}}^{(t_0)}_{\text{individual}} = \left( {\mathbf{P}^{(t_0;t_1)}}^\top \mathbf{P}^{(t_0;t_1)} \right)^{-1} {\mathbf{P}^{(t_0;t_1)}}^\top \mathbf{\Delta}^{(t_0;t_1)}_{\text{global}},
\end{equation}
% On the other hand, when \namespace is used, the server cannot reconstruct the individual model from \eqref{eq:recon_ind} because $\mathbf{P}^{(t_0;t_1)}$ cannot be full rank. 
% This is due to the fact that at each round, users in the same batch either participate together or not participate at all. 
% The best that the server can reconstruct is partial aggregate of the local models of users in the same batch of size $T$.
and we measure the reconstruction error as follows 
\begin{equation}\label{eq:recon_error}
   e^{(t_0)}_i = \frac{\lVert \mathbfsl{\delta}_i^{(t_0)} - \hat{\mathbfsl{\delta}}_i^{(t_0)} \rVert_2^2}{\lVert \mathbfsl{\delta}_i^{(t_0)} \rVert_2^2},
    % e^{(t_0)} = \frac{1}{N}\sum_{i=1}^{N}e^{(t_0)}_i = \frac{1}{N}\sum_{i=1}^{N}\frac{\lVert \mathbfsl{x}_i^{(t_0)} - \hat{\mathbfsl{x}}_i^{(t_0)} \rVert^2}{\lVert \mathbfsl{x}_i^{(t_0)} \rVert^2}, 
\end{equation}
% $e^{(t_0)} = \frac{1}{N}\sum_{i=1}^{N}e^{(t_0)}_i = \frac{1}{N}\sum_{i=1}^{N}\frac{\lVert \mathbfsl{x}_i^{(t_0)} - \hat{\mathbfsl{x}}_i^{(t_0)} \rVert^2}{\lVert \mathbfsl{x}_i^{(t_0)} \rVert^2}$ 
where $\hat{\mathbfsl{\delta}}_i^{(t_0)}$ denotes the reconstructed gradient of user $i$, which corresponds to $i$-th row of $\hat{\mathbf{\Delta}}^{(t_0)}_{\text{individual}}$ in \eqref{eq:recon_ind}.
On the other hand, in \namespace with multi-round privacy guarantee $T=2$, the server cannot estimate the individual gradients by utilizing \eqref{eq:recon_ind} because $\mathbf{P}^{(t_0;t_1)}$ is not full rank hence the inverse of ${\mathbf{P}^{(t_0;t_1)}}^\top \mathbf{P}^{(t_0;t_1)}$ does not exist. The best that the server can do is to estimate $\sum_{i\in\mathcal{G}_j}\mathbfsl{\delta}_i^{(t_0)}$, where $\mathcal{G}_j$ is the index set of the users in the $j$-th batch. The server can then estimate $\mathbfsl{\delta}_i^{(t_0)}$ by dividing the estimate of $\sum_{i\in\mathcal{G}_j}\mathbfsl{\delta}_i^{(t_0)}$ by $T$, where $i \in \mathcal{G}_j$.

\begin{figure*}[t!]%
    \centering
    \subfigure[$K(=8)$-uniform random selection ($T=1$).]{%
    \vspace{-0.2cm}
    \label{fig:histo_error_random}%
    \includegraphics[height=1.65in, trim=0.1cm 0.1cm 0.15cm 0.15cm, clip]{figures/error_histo_random.eps}}%
    \;\;
    \subfigure[\namespace ($T=2$).]{%
    % \subfigure[GISETTE dataset for binary classification between the images for digits $4$ and $9$  (using $6000$ samples for the training set and $1000$ samples for the test set).]{%
    \vspace{-0.2cm}
    \label{fig:histo_error_proposed}%
    \includegraphics[height=1.65in, trim=0.1cm 0.1cm 0.15cm 0.15cm, clip]{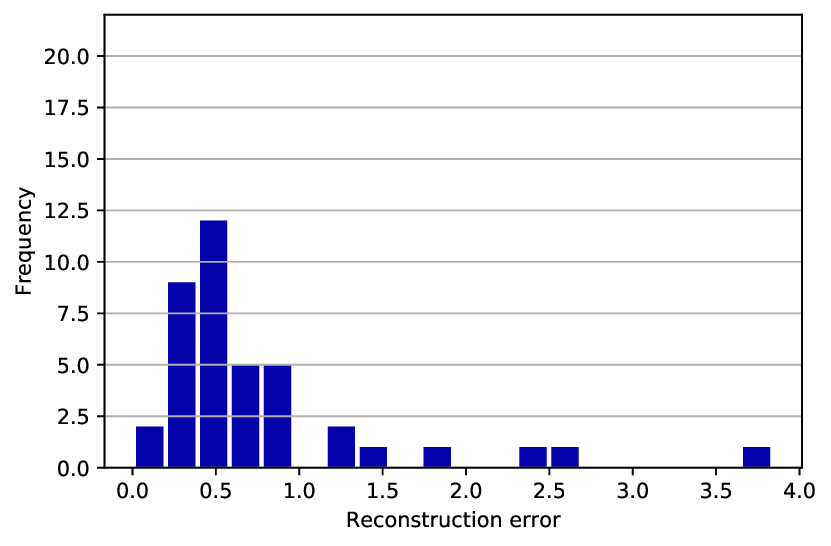}}%
    \vspace{-0.2cm}
    \caption{\footnotesize Histogram of the reconstruction error defined in \eqref{eq:recon_error} when the $K(=8)$-uniform random selection or \namespace ($T=2$) scheme is used to train the LeNet on the CIFAR-10 dataset. The average reconstruction errors of $K(=8)$-uniform random selection  and \namespace ($T=2$) are $6.715\times 10^{-3}$ and $0.7829$, respectively, which implies that the server can reconstruct all local updates when $K(=8)$-uniform random selection is used while the server cannot reconstruct the local updates when \namespace ($T=2$) is used.}
\label{fig:histo_error}
\vspace{-0.2cm}
\end{figure*}

\begin{figure}[t!]
    \centering
    \includegraphics[width=0.80\textwidth]{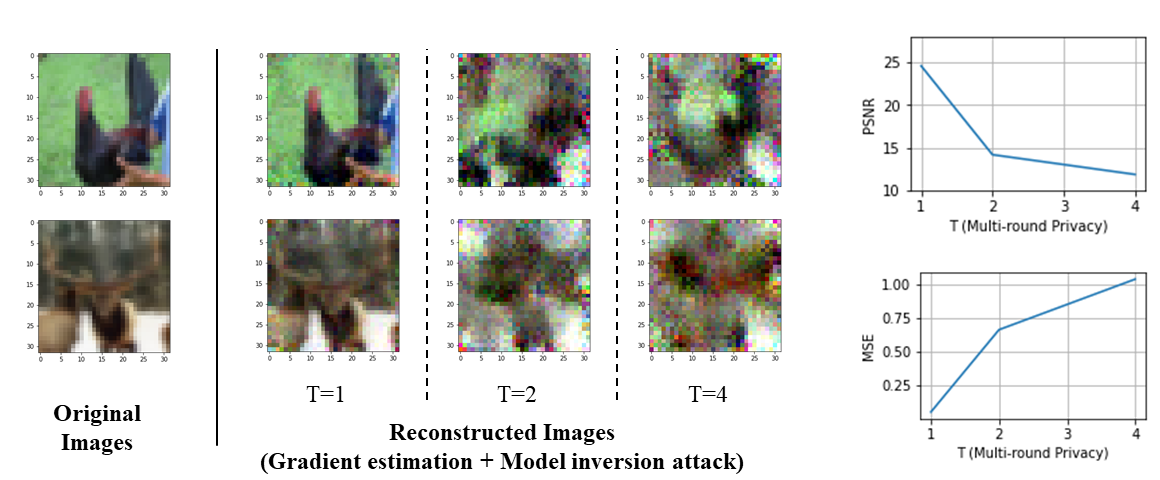}
    \caption{\footnotesize Comparison of the reconstructed images using the model inversion attack~\cite{geiping2020inverting} with different value of multi-round privacy guarantee $T$ (left) and measurement of similarity between the reconstructed images and the original images, where $\text{PSNR}=\infty$ and $\text{MSE}=0$ for two identical images (right).}
    \label{fig:recon_MSE_PSNR}
\end{figure}

Figure \ref{fig:histo_error_random} and Figure \ref{fig:histo_error_proposed} show the histogram of the reconstruction error of the individual gradients when the $K$-uniform random selection scheme and \namespace ($T=2$) scheme are used, respectively. 
We set $t_0=1460$ and $t_1=1500$ in this experiment.
We observe that the $K$-uniform random selection scheme has much smaller average reconstruction error $\frac{1}{N}\sum_{i=1}^{N}e^{(t_0)}_i=6.715\times 10^{-3}$ than the average reconstruction error of \namespace ($T=2$), which implies that the server can reconstruct all local gradients as the $K$-uniform random selection scheme has a multi-round privacy guarantee $T=1$.

Finally, the server can reconstruct the training images by applying model inversion attack~\cite{geiping2020inverting} to the reconstructed gradient $\hat{\mathbfsl{\delta}}_i^{(t_0)}$.
Figure~\ref{fig:recon_MSE_PSNR} the reconstructed images of random selection scheme ($T=1$) and \namespace ($T=2,4$). 
We measure the reconstruction performance using peak signal-to-noise ratio (PSNR) and mean square error (MSE).
Large PSNR and small MSE indicate more similarity between the reconstructed and original images, and hence we can observe that random selection scheme ($T=1$) leaks much more information about the original image than \namespace ($T=2,4$).

\end{document}